\theoremstyle{plain}
\newtheorem{definition}{Definition}
\newtheorem{theorem}{Theorem}
\newtheorem{lemma}{Lemma}
\newtheorem{prop}{Proposition}
\newtheorem{assumption}{Assumption}
\newcommand{\haochuan}[1]{\textcolor{blue}{\{{\textbf{HL}: \em #1}\}}}
\newcommand{\gr}{\nabla}
\newcommand{\E}{\mathbb{E}}
\newcommand{\pr}{\mathbb{P}}
\newcommand*\diff{\mathop{}\!\mathrm{d}}
\newcommand{\Spider}{\textsc{Spider}}
\newcommand{\GD}{\textsc{Gd}}
\newcommand{\SGD}{\textsc{Sgd}}
\newcommand{\SVRG}{\textsc{Svrg}}
\newcommand{\SARAH}{\textsc{Sarah}}
\def\BState{\State\hskip-\ALG@thistlm}
\def\BState{\State\hskip-\ALG@thistlm}
\newcommand{\algmargin}{\the\ALG@thistlm}
\newif\ifdavid@number
\preto\equation{\david@numberfalse}
\preto\endequation{\ifdavid@number\else\notag\fi}
\patchcmd\label@in@display{\@empty}{\@empty\david@numbertrue}{}{}
\def\Let@{\def\\{\notag\math@cr}}
\definecolor{darkred}{RGB}{150,0,0}
\definecolor{darkgreen}{RGB}{0,150,0}
\definecolor{darkblue}{RGB}{0,0,150}
\DeclareMathOperator*{\minimize}{minimize}
\title{\textbf{Variance-reduced Clipping for Non-convex Optimization}}
\date{}
\renewcommand{\thefootnote}{\fnsymbol{footnote}}
\renewcommand{\thefootnote}{\ifcase\value{footnote}\or*\or
1\or2\or(\#)\or(\#\#)\or(\#\#\#)\or(\#\#\#\#)\or($\infty$)\fi}
\author{Amirhossein Reisizadeh$^*$}
\author{Haochuan Li$^*$}
\author[2]{Subhro Das}
\author{Ali Jadbabaie}
\affil[1]{Massachusetts Institute of Technology}
\affil[2]{MIT–IBM Watson AI Lab, IBM Research}
\begin{document}

\maketitle

\footnotetext[1]{Equal contribution.}

\renewcommand{\thefootnote}{\ifcase\value{footnote}\or
1\or2\or(\#)\or(\#\#)\or(\#\#\#)\or(\#\#\#\#)\or($\infty$)\fi}

\footnotetext[1]{\texttt{\{amirr,haochuan,jadbabai\}@mit.edu}}
\footnotetext[2]{\texttt{subhro.das@ibm.com}}

\begin{abstract}
Gradient clipping is a standard training technique used in deep learning applications such as large-scale language modeling to mitigate exploding gradients. Recent experimental studies have demonstrated a fairly special behavior in the smoothness of the training objective along its trajectory when trained with gradient clipping. That is, the smoothness \emph{grows} with the gradient norm. This is in clear contrast to the well-established assumption in folklore non-convex optimization, a.k.a. $L$--smoothness, where the smoothness is assumed to be bounded \emph{by a constant $L$ globally}. The recently introduced $(L_0,L_1)$--smoothness is a more relaxed notion that captures such behavior in non-convex optimization. In particular, it has been shown that under this relaxed smoothness assumption, {\SGD} with clipping requires $\ccalO(\epsilon^{-4})$ stochastic gradient computations to find an $\epsilon$--stationary solution. In this paper, we employ a variance reduction technique, namely {\Spider}, and demonstrate that for a carefully designed learning rate, this complexity is improved to $\ccalO(\epsilon^{-3})$ which is order-optimal. Our designed learning rate comprises the clipping technique to mitigate the growing smoothness. Moreover, when the objective function is the average of $n$ components, we improve the existing $\ccalO(n\epsilon^{-2})$ bound on the stochastic gradient complexity to $\ccalO(\sqrt{n} \epsilon^{-2} + n)$, which is order-optimal as well. In addition to being theoretically optimal, {\Spider} with our designed parameters demonstrates comparable empirical performance against variance-reduced methods such as {\SVRG} and {\SARAH} in several vision tasks.

\end{abstract}

\section{Introduction} \label{sec: intro}

We study the problem of minimizing a \emph{non-convex} function $F:\reals^d \to \reals$ which is expressed as the expectation of a stochastic function, \emph{i.e.},
\begin{align} \label{eq: stochastic}
    \minimize_{\bbx \in \reals^d} \, \,F(\bbx) = \E_{\xi}[f(\bbx; \xi)],
\end{align}
where the random variable $\xi$ is realized according to a distribution $\ccalD$. Typically, the distribution $\ccalD$ is unknown in this \emph{stochastic} setting, and rather, a number of realized samples are available. In this setting, known as \emph{finite-sum}, the objective $F$ can be expressed as the average of $n$ component functions $f_1, \cdots, f_n$, that is,
\begin{align} \label{eq: finite-sum}
    \minimize_{\bbx \in \reals^d} \, \,F(\bbx) = \frac{1}{n} \sum_{i=1}^{n} f_i(\bbx).
\end{align}
This formulation captures the standard training framework in many machine learning and deep learning applications where the model parameters are trained by minimizing the average loss induced by a large number of labeled data samples (also known as empirical risk minimization). 

\begin{figure}
\centering
\begin{subfigure}{.5\textwidth}
  \centering
    \includegraphics[width = .8\linewidth]{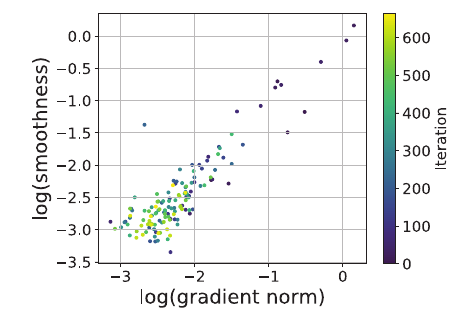}
    \caption{AWD-LSTM training  using gradient clipping}
    \label{fig: L0L1 smooth LSTM}
\end{subfigure}%
\begin{subfigure}{.5\textwidth}
  \centering
    \includegraphics[width = .66\linewidth]{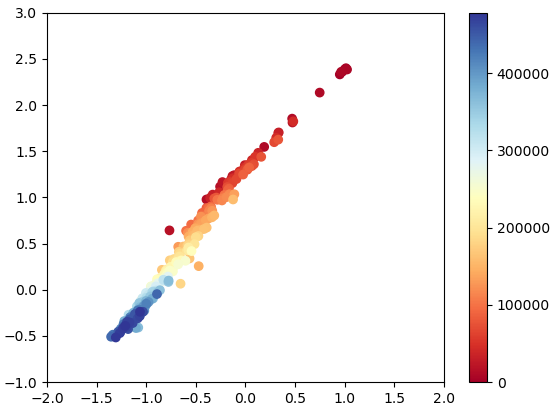}
    \vspace{5mm}
    \caption{Transformers training using Adam}
    \label{fig: L0L1 smooth WMT}
\end{subfigure}
\caption{Smoothness grows with gradient norm along the training trajectory for (a) AWD-LSTM on PTB dataset trained with clipped {\SGD} (Figure taken from \cite{zhang2019gradient}), and (b) transformers on WMT 2014 translation dataset trained with Adam (Figure taken from \cite{wang2022provable}).
}
\label{fig: L0L1 smooth}
\end{figure}

Gradient-based algorithms such as stochastic gradient descent ({\SGD}) have been widely used in training deep learning models due to their simplicity. However, adaptive gradient methods demonstrate superior performance over {\SGD} in particular applications such as natural language processing (NLP). In such methods, gradient clipping has been a standard practice in training language models to mitigate the exploding gradient problem. Although such superior performance of gradient clipping has not been well justified theoretically, \citep{zhang2019gradient} brings about some rationales to better understand the probable underpinning phenomenon. To bridge the theory-practice gap in this particular application, \citep{zhang2019gradient} first demonstrates an interesting characteristic of the optimization landscape of large-scale language models such as LSTM trained with gradient clipping. As illustrated in Figure \ref{fig: L0L1 smooth}, the smoothness of the objective \emph{grows} with the gradient norm along the training trajectory. This defies a well-established belief in smooth non-convex optimization where the smoothness is assumed to be bounded by a \emph{constant} over the input space, that is, $\Vert \gr^2 F(\bbx) \Vert \leq L$. Inspired by the experimental evidence, \citep{zhang2019gradient} introduces the more relaxed smoothness notion named $(L_0,L_1)$-smoothness, where the smoothness grows linearly with the gradient norm, that is, $\Vert \gr^2 F(\bbx) \Vert \leq L_0 + L_1 \Vert \gr F(\bbx) \Vert$ for positive constants $L_0,L_1$. This class of nonconvex functions includes many instances that do not have global Lipschitz gradients, such as the so-called exponential family. In particular, all polynomials of degree at least $3$, are $(L_0,L_1)$--smooth while there exists no constant bounding the smoothness globally \citep{zhang2019gradient}.

The standard performance measure of non-convex optimization algorithms is their required gradient computation to find \emph{approximate first-order stationary} solutions. More precisely, the goal of any non-convex optimization algorithm is to find a solution $\bbx$ such that 
\begin{align}
    \Vert \gr F(\bbx) \Vert \leq \epsilon,
\end{align}
for a given target accuracy $\epsilon$. The total number of gradient computations to find such a stationary point is defined as the \emph{first-order oracle} or \emph{gradient complexity} for the corresponding algorithm. Employing gradient clipping, \cite{zhang2019gradient} show that clipped {\SGD} is able to find an $\epsilon$--stationary solution of any $(L_0,L_1)$--smooth non-convex objective with gradient complexity at most $\ccalO(\epsilon^{-4})$. In this paper, we aim to answer the following question:
\begin{tcolorbox}
\begin{center}
\textit{Can we improve the gradient complexity $\ccalO(\epsilon^{-4})$ in $(L_0,L_1)$--smooth non-convex optimization?}
\end{center}
\end{tcolorbox}

We answer this question in the affirmative. In particular, we employ \emph{variance reduction} techniques and show that under regular conditions, the gradient complexity of finding an $\epsilon$--stationary point can be improved to $\ccalO(\epsilon^{-3})$ which is order-optimal.

Variance reduction has been a promising approach in speeding up non-convex optimization algorithms \emph{for $L$--smooth objectives}. Most relevant to our work is {\Spider} algorithm proposed in \citep{fang2018spider}. The core idea of {\Spider} is to devise and maintain an accurate estimator of the true gradient along the iterates. Let $\bbv_k$ denote the {\Spider}'s estimator for the true gradient $\gr F(\bbx_k)$ at iterate $k$. The gradient estimator $\bbv_k$ is updated in every iteration using a \emph{small} batch of stochastic gradients and the previous $\bbv_{k-1}$. To reset the undesired effect of stochastic gradients noise, a \emph{large} batch is used to update $\bbv_k$ once in a while. It has been shown that for a proper choice of the stepsize, {\Spider} is able to control the variance of the gradient estimator by $\epsilon^2$ for every iteration. Together with the standard Descent Lemma, \citep{fang2018spider} has shown that {\Spider} requires at most $\ccalO(\epsilon^{-3})$ stochastic gradient computations to reach an $\epsilon$--stationary point for any (averaged) $L$--smooth objective function. 
The adaptive stepsize corresponding to this order-optimal rate is picked as $\ccalO(\min\{1, \epsilon/\Vert \bbv_k \Vert\})$.

In this work, we aim to employ the variance reduction idea in {\Spider} to speed up the clipped {\SGD} algorithm for $(L_0,L_1)$--smooth objectives. The first challenge in doing so is that the standard Descent Lemma for $L$--smooth objectives does \emph{not} hold under the relaxed $(L_0,L_1)$--smooth condition. We show that the clipping component in the stepsize, \emph{i.e.} $\epsilon/\Vert \bbv_k \Vert$, equips us to establish a descent property under the new smoothness condition. 

The second and more critical challenge in utilizing the {\Spider} approach in our setting is to control the variance of the gradient estimator, that is $\E \Vert \bbv_k - \gr F(\bbx_k) \Vert^2$. As mentioned before, \cite{fang2018spider} show that for $L$--smooth objectives, the variance of the gradient estimator remains bounded by $\epsilon^2$ along the iterates if the stepsize is picked as $\ccalO(\min\{1, \epsilon/\Vert \bbv_k \Vert\})$. However, this stepsize does \emph{not} guarantee bounded variance in our $(L_0,L_1)$--smooth setting. In particular, we show that rather a \emph{smaller} stepsize is required to control the variance. That is, for stepsize $\ccalO(\min\{1, \epsilon/\Vert \bbv_k \Vert, \epsilon/\Vert \bbv_k \Vert^2 \})$, the variance of the gradient estimator $\bbv_k$ is provably controlled and bounded by $\epsilon^2$. The additional term $\epsilon/\Vert \bbv_k \Vert^2$ in the stepsize is essential in mitigating the growth of the smoothness which scales with the gradient norm in the relaxed  $(L_0,L_1)$--smooth setting. We shall refer to {\Spider} in this smoothness setup with this particular stepsize as $(L_0,L_1)$--{\Spider}.

Together with the (new) descent lemma, we show that $(L_0,L_1)$--{\Spider} with our devised choice of the learning rate described above, finds an $\epsilon$--stationary point of any non-convex and $(L_0,L_1)$--smooth function with high probability. More importantly, we demonstrate that the total  stochastic gradient computations required to find such a stationary point is at most $\ccalO(\epsilon^{-3})$. In our analysis, we relax the more restricted stochastic gradient assumption of almost surely bounded noise in \citep{zhang2019gradient} and impose a conventional and fairly generic assumption of bounded noise variance. In addition, we assume that the objective function is \emph{averaged} $(L_0,L_1)$--smooth over its stochastic components. Imposing the stronger averaged smoothness assumption on top of the weaker and typical one is indeed a standard practice in variance-reduced optimization literature. Under such generic assumptions, it has been shown that the $\Omega(\epsilon^{-3})$ rate is indeed a \emph{lower bound} on the stochastic gradient complexity for (averaged) $L$--smooth objectives \citep{fang2018spider}. Clearly, every $L$--smooth function is $(L,0)$--smooth, as well. Therefore, our $\ccalO(\epsilon^{-3})$ gradient complexity bound is also tight for the broader class of $(L_0,L_1)$--smooth non-convex functions.

We extend our results to the \emph{finite-sum} setting \eqref{eq: finite-sum} and show that for our devised pick of the stepsize $\ccalO(\min\{1, \epsilon/\Vert \bbv_k \Vert, \allowbreak\epsilon/\Vert \bbv_k \Vert^2 \})$, the variance reduction approach in {\Spider} is able to find an $\epsilon$--stationary point of any non-convex $(L_0,L_1)$--smooth function with high probability and at most $ \ccalO(\sqrt{n} \epsilon^{-2} + n)$ gradient computations. This significantly reduces the existing gradient complexity of clipped {\SGD} \citep{zhang2019gradient}, that is $ \ccalO(n \epsilon^{-2})$. In addition, the derived $ \ccalO(\sqrt{n} \epsilon^{-2} + n)$ complexity bound is naturally tight in the  $(L_0,L_1)$--smooth setting, as that is the case under the more restrictive $L$--smoothness condition. Tables \ref{table: rates} and \ref{table: lr} summarise our discussion.

\begin{table}[t]
\centering
\renewcommand{\arraystretch}{1.5}
\begin{tabular}{|cccc|}
\hline
Algorithm & Reference & Stochastic & Finite-sum\\ \hhline{|====|}
\textsc{ClippedSGD} & \cite{zhang2019gradient} & $\ccalO(\epsilon^{-4})$ & $\ccalO(n \epsilon^{-2})$ \\ 
\hline
$(L_0,L_1)$--{\Spider} & \textbf{This paper} & $\ccalO(\epsilon^{-3})$ & $ \ccalO(\sqrt{n} \epsilon^{-2} + n)$ \\
\hline
Lower bound & \cite{arjevani2022lower,fang2018spider}  & $\Omega(\epsilon^{-3})$ & $ \Omega(\sqrt{n} \epsilon^{-2})$$^\dagger$ \\
\hline
\end{tabular}
\caption{Complexity of $(L_0,L_1)$--smooth non-convex optimization. $^\dagger$This lower bound holds for $n \leq \ccalO(\epsilon^{-4})$. }
\label{table: rates}
\end{table}

\begin{table}[t]
\centering
{\tabulinesep=1.1mm
\begin{tabu}{|ccccc|}
\hline
Smoothness & Reference & Stochastic & Finite-sum & Learning rate\\ \hhline{|=====|}
$L$--smooth & \cite{fang2018spider} & $\ccalO(\epsilon^{-3})$ & $\ccalO(\sqrt{n} \epsilon^{-2} + n)$ & $\ccalO \Big( \min \Big\{ 1, \frac{\epsilon}{\Vert\bbv_k\Vert}\Big\}\Big)
$\\ 
\hline
$(L_0,L_1)$--smooth & \textbf{This paper}  & $\ccalO(\epsilon^{-3})$ & $\ccalO(\sqrt{n} \epsilon^{-2} + n)$ & $\ccalO \Big( \min \Big\{ 1, \frac{\epsilon}{\Vert\bbv_k\Vert}, \frac{\epsilon}{\Vert\bbv_k\Vert^2}\Big\} \Big)
$  \\
\hline
   \end{tabu}}
   \caption{Learning rates for {\Spider} and $(L_0,L_1)$--{\Spider}. All gradient complexities are order-optimal.}
\label{table: lr}
\end{table}

\textbf{Contributions.} To summarize the above discussion, we study non-convex optimization under $(L_0,L_1)$--smoothness and improve the existing gradient complexities of reaching first-order stationary solutions in both stochastic and finite-sum settings. We employ variance reduction technique {\Spider},
and devise learning rates resulting in order-optimal gradient complexities. Tables \ref{table: rates} compares this paper's results with the existing and optimal gradient complexities. In Table \ref{table: lr}, the learning rates resulting in order-optimal complexities in the folklore $L$--smooth and new $(L_0,L_1)$--smoothness settings are compared. Moreover, we implement {\Spider} with our designed learning rates and compare its empirical performance against several benchmarks such as {\SGD}, {\SARAH} and {\SVRG} tested on different image classification tasks with MNIST, CIFAR10 and CIFAR100 datasets.

\textbf{Notation.} Throughout the paper, we denote by $\Vert \bba \Vert$ the $\ell_2$--norm of vector $\bba$. We also let $\Vert \bbA \Vert$ denote the spectral norm of matrix $\bbA$. For non-negative functions $f,g : \ccalX \to [0, \infty)$ defined on the same domain, the standard big O notation $f = \ccalO(g)$ summarizes the fact that there exists a positive constant $c > 0$ such that $f(\bbx) \leq c \cdot g(\bbx)$ for all $\bbx \in \ccalX$. Moreover, we denote $f = \Omega(g)$ if there exists a positive constant $c > 0$ such that $f(\bbx) \geq c \cdot g(\bbx)$ for all $\bbx \in \ccalX$. Lastly, we use the shorthand notation $\bbx_{i:j}$ to denote the sequence $\bbx_i, \cdots, \bbx_j$.

\textbf{Related work.}

\textbf{$(L_0,L_1)$--smoothness and gradient clipping.} As mentioned before, gradient clipping has been widely used in training deep learning models such as large-scale language models to circumvent the exploding gradient challenge \citep{merity2017regularizing,gehring2017convolutional,peters2018deep}. The work of \cite{zhang2019gradient} lays out a theoretical framework to better understand the superior performance of clipped algorithms over the conventional non-adaptive gradient methods. 
Several follow-up works have studied the introduced $(L_0,L_1)$--smoothness notion by \citep{zhang2019gradient}. \citep{zhang2020improved} utilizes momentum techniques and sharpens the constant dependency of the convergence rate of clipped {\SGD} previously derived by \citep{zhang2019gradient}. Under this relaxed smoothness assumption, \citep{qian2021understanding} studies the role of clipping in incremental gradient methods. In the deterministic setting with full batch gradient computation, clipped {\GD} and normalized {\GD} (\textsc{NGD}) are essentially equivalent up to a constant. \citep{zhao2021convergence} provides convergence guarantees for \emph{stochastic} \textsc{NGD} for $(L_0,L_1)$--smooth non-convex functions. The work of \cite{faw2023beyond} builds on AdaGrad-type methods and relaxes the existing restrictive assumptions on the stochastic gradient noise. Clipping methods may be implemented with scalability \citep{liu2022communication} and privacy considerations \citep{yang2022normalized,xia2022differentially} as well. Under the same setting, it has been shown that unclipped methods, particularly a generalized \textsc{SignSGD} algorithm, attain the same rates as clipped {\SGD} \citep{crawshaw2022robustness}. Interestingly, this smoothness behavior is not restricted to the clipped {\SGD} optimizer as implemented by \cite{zhang2019gradient}; training language models with Adam manifests such smoothness phenomena as well \citep{wang2022provable}. Apart from the optimization literature, $(L_0,L_1)$--smoothness has been studied for variational inference problems as well \citep{sun2022convergence}.

\textbf{Variance reduction in non-convex optimization.} 
Variance reduction is known to be an effective approach for accelerating both convex and non-convex optimization algorithms. To recap the convergence rate improvement offered by variance reduction techniques in finding stationary points of non-convex functions with \emph{global} Lipschitz-gradients (\emph{i.e.} $L$--smooth), recall the folklore rate $\ccalO(\min\{n \epsilon^{-2}, \epsilon^{-4}\})$  of {\SGD}/{\GD} corresponding to finite-sum and stochastic  settings \citep{nesterov2003introductory}. Stochastic Variance-Reduced Gradient (SVRG) and Stochastically Controlled Stochastic Gradient (SCSG) improved the gradient complexity to $\tilde{\ccalO}(\min\{ n^{2/3}\epsilon^{-2},\epsilon^{-10/3}\})$ \citep{allen2016variance,reddi2016stochastic,lei2017non}. To further improve the gradient complexity, \citep{fang2018spider} introduced a more accurate and less costly approach to track the true gradients across the iterates, namely Stochastic Path-Integrated Differential EstimatoR ({\Spider}). This variance-reduced gradient method costs at most $\ccalO(\min\{\sqrt{n} \epsilon^{-2}, \epsilon^{-3}\})$ which matches the lower bound complexity in both the finite-sum \citep{fang2018spider} and the stochastic setting \citep{arjevani2022lower}. This makes {\Spider} an order-optimal algorithm to find stationary points of non-convex and smooth functions. Similarly and concurrently, SARAH was proposed \citep{nguyen2017stochastic} which shares the recursive stochastic gradient update framework with {\Spider}.  Moreover, \cite{zhou2020stochastic} proposed SNVRG with similar tight complexity bounds. Other works with order-optimal convergence rates include \citep{wang2019spiderboost,pham2020proxsarah,li2021page,li2021zerosarah}. These methods may be equipped with \emph{adaptive} learning rates such as \textsc{AdaSpider} \citep{kavis2022adaptive}.

\textbf{Variance reduction in deep learning.} Despite its promising theoretical advantages, variance-reduced methods demonstrate discouraging performance in accelerating the training of modern deep neural networks \citep{defazio2019ineffectiveness,defazio2014saga,roux2012stochastic,shalev2012stochastic}. As the cause of such a theory-practice gap remains unaddressed, there have been several speculations on the ineffectiveness of variance-reduced (and momentum) methods in deep neural network applications. It has been argued that the non-adaptive learning rate of such methods, e.g. SVRG could make the parameter tuning intractable \citep{cutkosky2019momentum}. In addition, as eluded in \citep{zhang2019gradient}, the misalignment of assumptions made in the theory and the practical ones could significantly contribute to this gap. Most of the variance-reduced methods described above heavily rely on the global Lipschitz-gradient assumption which has been observed not to be the case at least in modern NLP applications \citep{zhang2019gradient}.

\section{Preliminaries} \label{sec: preliminary}

In this section, we first review preliminary characteristics of $(L_0,L_1)$--smooth functions introduced in prior works and provide the assumption that we consider in the setting of this paper's interest, \emph{i.e.} stochastic and finite-sum. 

\subsection{$(L_0,L_1)$--smoothness}

As introduced in \citep{zhang2019gradient}, a function $F$ is said to be $(L_0,L_1)$--smooth if there exist constants $L_0 >0$ and $L_1 \geq 0$ such that for all $\bbx \in \reals^d$,
\begin{align} \label{eq: L0-L1 v1}
    \Vert \gr^2 F(\bbx) \Vert \leq L_0 + L_1 \Vert \gr F(\bbx) \Vert.
\end{align}
The twice-differentiability condition in this definition could be relaxed as noted in \citep{zhang2020improved} and stated below.
\begin{definition}[$(L_0,L_1)$--smooth] \label{def: L0-L1}
A differentiable function $F$ is said to be $(L_0,L_1)$--smooth if there exist constants $L_0 >0$ and $L_1 \geq 0$ such that if $\Vert \bbx - \bby \Vert \leq 1/L_1$, then
\begin{align} \label{eq: L0-L1 v2}
    \Vert \gr F(\bbx) - \gr F(\bby) \Vert
    \leq
    \big( L_0 + L_1 \Vert \gr F(\bbx) \Vert \big) \Vert \bbx - \bby \Vert.
\end{align}
\end{definition}
In the following, we show that these two conditions are essentially equivalent up to a constant. Therefore, moving forward, we set Definition \ref{def: L0-L1} as the main condition for $(L_0,L_1)$--smoothness.
\begin{prop} \label{prop: L0-L1}
If $F$ is twice differentiable, then condition \eqref{eq: L0-L1 v2} implies \eqref{eq: L0-L1 v1}. Moreover, condition \eqref{eq: L0-L1 v1} implies \eqref{eq: L0-L1 v2} with constants $(2L_0,2L_1)$.
\end{prop}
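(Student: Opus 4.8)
The plan is to treat the two implications separately. The first is a direct limiting argument; the second needs a Grönwall-type estimate along the segment joining $\bbx$ and $\bby$, and the factor $2$ in the constants is precisely the slack that keeps the exponential factor coming out of Grönwall below $2$.

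\textbf{\eqref{eq: L0-L1 v2} $\Rightarrow$ \eqref{eq: L0-L1 v1}.} I would fix $\bbx \in \reals^d$ and a unit vector $\bm{u}$, and use that twice differentiability gives $\gr^2 F(\bbx)\bm{u} = \lim_{t \to 0^+} t^{-1}\big(\gr F(\bbx + t\bm{u}) - \gr F(\bbx)\big)$. For every $t \in (0, 1/L_1)$ the points $\bbx$ and $\bbx + t\bm{u}$ satisfy the proximity hypothesis of Definition \ref{def: L0-L1}, so \eqref{eq: L0-L1 v2} yields $\Vert \gr F(\bbx + t\bm{u}) - \gr F(\bbx)\Vert \le \big(L_0 + L_1\Vert \gr F(\bbx)\Vert\big) t$. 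Dividing by $t$ and letting $t \to 0^+$ gives $\Vert \gr^2 F(\bbx)\bm{u}\Vert \le L_0 + L_1\Vert \gr F(\bbx)\Vert$, and taking the supremum over unit vectors $\bm{u}$ produces \eqref{eq: L0-L1 v1}. (If $L_1 = 0$ the same argument runs with $t$ unrestricted.)

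\textbf{\eqref{eq: L0-L1 v1} $\Rightarrow$ \eqref{eq: L0-L1 v2} with $(2L_0, 2L_1)$.} Fix $\bbx, \bby$ with $r := \Vert \bbx - \bby\Vert \le 1/(2L_1)$, parametrize the segment by $\bbx(t) = \bbx + t(\bby - \bbx)$ for $t \in [0,1]$, and let $g(t) := \Vert \gr F(\bbx(t))\Vert$, which is continuous. Since $\tfrac{d}{dt}\gr F(\bbx(t)) = \gr^2 F(\bbx(t))(\bby - \bbx)$, the reverse triangle inequality together with \eqref{eq: L0-L1 v1} gives the integral inequality $g(s) \le g(0) + r \int_0^s \big(L_0 + L_1 g(\tau)\big)\,d\tau$ for all $s \in [0,1]$; phrasing things in this integral form sidesteps the fact that $g$ may fail to be differentiable at zeros of $\gr F$. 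Setting $h(t) := g(t) + L_0/L_1$ rewrites this as $h(s) \le h(0) + rL_1\int_0^s h(\tau)\,d\tau$, so Grönwall's inequality gives $h(s) \le h(0)e^{rL_1 s} \le h(0)e^{1/2}$ on $[0,1]$, i.e. $L_0 + L_1 g(t) \le \sqrt{e}\,\big(L_0 + L_1\Vert \gr F(\bbx)\Vert\big)$ for all $t \in [0,1]$. Combining this with $\gr F(\bby) - \gr F(\bbx) = \int_0^1 \gr^2 F(\bbx(t))(\bby - \bbx)\,dt$ and \eqref{eq: L0-L1 v1} once more yields $\Vert \gr F(\bby) - \gr F(\bbx)\Vert \le \int_0^1 \big(L_0 + L_1 g(t)\big)\,dt \cdot r \le \sqrt{e}\,\big(L_0 + L_1\Vert \gr F(\bbx)\Vert\big) r \le \big(2L_0 + 2L_1\Vert \gr F(\bbx)\Vert\big)\Vert \bbx - \bby\Vert$, using $\sqrt{e} < 2$. (When $L_1 = 0$ one bounds directly by $L_0 r \le 2L_0 r$.)

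\textbf{Main obstacle.} The only genuine difficulty is in the second direction: after bounding $\Vert \gr F(\bby) - \gr F(\bbx)\Vert$ by $\int_0^1 \big(L_0 + L_1\Vert \gr F(\bbx(t))\Vert\big)\,dt \cdot r$, one is left with the gradient norm along the segment, which must be tied back to $\Vert \gr F(\bbx)\Vert$ at the endpoint — this is exactly what the Grönwall step supplies. The clean constant $2$ (rather than an irrational number) arises because halving the smoothness constants also halves the admissible radius to $1/(2L_1)$, forcing $rL_1 \le 1/2$ and hence $e^{rL_1} \le \sqrt{e} < 2$; essentially any $c$ with $e^{1/c} \le c$ would work, and $c = 2$ is the tidy choice. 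The remaining care is purely technical: keep everything in integral form so that the possible non-differentiability of $t \mapsto \Vert \gr F(\bbx(t))\Vert$ never enters.
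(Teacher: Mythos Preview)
Your proof is correct and follows essentially the same approach as the paper: the first direction is identical (limit of difference quotients along unit directions), and for the second direction the paper simply cites Corollary~A.4 of \cite{zhang2020improved}, whose proof is precisely the Gr\"onwall-along-the-segment argument you carry out in full. One minor quantitative difference worth noting: the paper invokes that corollary with $c=1$, which yields the bound $\Vert \gr F(\bbx) - \gr F(\bby)\Vert \le (2L_0 + (e-1)L_1\Vert \gr F(\bbx)\Vert)\Vert \bbx - \bby\Vert$ already on the larger radius $\Vert \bbx - \bby\Vert \le 1/L_1$, whereas your argument uses the halved radius $1/(2L_1)$ to force $e^{rL_1}\le \sqrt{e}<2$ --- both suffice for the stated proposition, but the paper's route gives a marginally stronger conclusion for free.
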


We defer the proof to Section \ref{sec: proof prop L0-L1}.
%
Definition \ref{def: L0-L1} states the smoothness condition on the main objective $F$. Clearly, this smoothness notion relaxes the traditional global Lipschitz-gradient assumption in non-convex optimization where for all $\bbx$ and $\bby$, $\Vert \gr F(\bbx) - \gr F(\bby) \Vert \leq L\Vert \bbx - \bby \Vert$ holds true; or, equivalently the function being twice differentiable, $\Vert \gr^2 F(\bbx) \Vert \leq L$ for all $\bbx$. In the setting of this paper's interest, the objective function $F$ is expressed as the (stochastic or finite-sum) average of component functions as formulated in \eqref{eq: stochastic} and \eqref{eq: finite-sum}. However, the smoothness condition \eqref{eq: L0-L1 v2} is solely imposed on the main objective $F$ irrespective of its components. As it is the standard assumption in variance-reduced optimization \citep{fang2018spider}, we impose the following \emph{averaged} smoothness condition of $F$ and its components.


\begin{assumption}[Averaged $(L_0,L_1)$--smooth] \label{assumption: L0-L1}
There exist constants $L_0 > 0$ and $ L_1 \geq 0$ such that if $\Vert \bbx - \bby \Vert \leq 1/L_1$, then
\begin{enumerate}[label=(\roman*)]
\item in the stochastic setting \eqref{eq: stochastic},
\begin{align}
    \E \Big[ \Vert \gr f(\bbx;\xi) - \gr f(\bby; \xi) \Vert^2 \Big]^{1/2}
    \leq
    \big( L_0 + L_1 \Vert \gr F(\bbx) \Vert \big) \Vert \bbx - \bby \Vert,
\end{align}
where the expectation is over random $\xi$; or,
\item in the finite-sum setting \eqref{eq: finite-sum},
\begin{align}
    \bigg( \frac{1}{n} \sum_{i=1}^{n} \Vert \gr f_i(\bbx) - \gr f_i(\bby) \Vert^2 \bigg)^{1/2} \leq
    \big( L_0 + L_1 \Vert \gr F(\bbx) \Vert \big) \Vert \bbx - \bby \Vert.
\end{align}
\end{enumerate}
\end{assumption}

It is worth noting that in both stochastic and finite-sum settings, the conditions in Assumption \ref{assumption: L0-L1} imply the smoothness of the main objective $F$ per Definition \ref{def: L0-L1}. Particularly in the finite-sum case, we have from Jensen's inequality that
\begin{align}
    \Vert \gr F(\bbx) - \gr F(\bby) \Vert
    \leq
    \E \Vert \gr f_i(\bbx) - \gr f_i(\bby) \Vert
    \leq
    \E \Big[ \Vert \gr f_i(\bbx) - \gr f_i(\bby) \Vert^2 \Big]^{1/2}
    \leq
    \big( L_0 + L_1 \Vert \gr F(\bbx) \Vert \big) \Vert \bbx - \bby \Vert,
\end{align}
where the last equality holds for any $\bbx,\bby$ such that $\Vert \bbx - \bby \Vert \leq 1/L_1$. A similar argument holds in the stochastic setting provided that the stochastic gradients are unbiased. Having set up the main smoothness assumption, we now review the existing gradient clipping methods and their convergence characteristics in the following section.

\subsection{Gradient clipping}

Similar to traditional (unclipped) gradient methods, a generic gradient clipping algorithm runs through iterations which we denote by $k=0,1,\cdots$, initialized with $\bbx_0$. At each iterate $k$, a stochastic gradient $\bbg_k \coloneqq \gr f(\bbx_k; \ccalS)$ is computed over a randomly selected mini-batch $\ccalS$ of size $|\ccalS|=S$. The iterate $\bbx_k$ is then updated as $\bbx_{k+1} = \bbx_k - \eta_k \bbg_k$ where $\eta_k$ denotes the learning rate (or stepsize). For instance and for a target accuracy $\epsilon$, the \emph{adaptive} stepsize can be expressed as $\eta_k = \ccalO( \min\{1, \epsilon/\Vert \bbg_k \Vert\})$ consisting of constant and clipping parts. Algorithm \ref{alg:SGD} summarizes this procedure which we denote by \textsc{ClippedSGD}. 
In the following, we illustrate the gradient complexity of this algorithm in finding $\epsilon$--stationary points of $(L_0,L_1)$--smooth functions. 

\begin{algorithm}[H]
\caption{\textsc{ClippedSGD}}\label{alg:SGD}
\begin{algorithmic}[1]
\State \textbf{Input:} smoothness parameters $L_0,L_1$, accuracy $\epsilon$, batchsize $S = |\ccalS|$, number of iterations $K$
\State Initialize $\bbx_0$
\For{$k=0,\cdots,K-1$}
\State Draw samples $\ccalS$ and compute $\bbg_k = \gr f(\bbx_k; \ccalS)$
\State Update $\bbx_{k+1} = \bbx_k - \eta_k \bbg_k$ \hfill{$\triangleright$ \, $\eta_k = \min\left\{ \dfrac{1}{2 L_0}, \dfrac{1}{L_0} \dfrac{\epsilon}{\norm{\bbg_k}}\right\}$
}
\EndFor
\State \textbf{return} $\tilde{\bbx}$ randomly and uniformly picked from $\{\bbx_0, \cdots, \bbx_{K-1}\}$
\end{algorithmic}
\end{algorithm}

Let us start with the stochastic setting \eqref{eq: stochastic}. We note that \citep{zhang2019gradient} characterizes the gradient complexity of \textsc{ClippedSGD} when the stochastic gradient noise is almost surely  bounded which we relax in our analysis. The following assumption precisely states the required condition on the stochastic gradient noise.
\begin{assumption} \label{assumption: stch gr}
Stochastic gradients $f(\cdot;\xi)$ are unbiased and variance-bounded, that is,
\begin{align}
    \E \big[ \gr f(\bbx;\xi) \big] = \gr F(\bbx),
    \quad \text{and} \quad
    \E \Vert \gr f(\bbx;\xi) - \gr F(\bbx) \Vert^2 \leq \sigma^2.
\end{align}
\end{assumption}

The above assumptions are standard and fairly general in stochastic optimization. We are now ready to state the iteration complexity of \textsc{ClippedSGD}.

\begin{theorem}[Stochastic setting] \label{thm: stochastic ClippedSGD}
Let Assumptions \ref{assumption: L0-L1} (i) and \ref{assumption: stch gr} hold and $\epsilon \leq \frac{L_0}{20L_1}$. Pick the stepsize and parameters below
\begin{align}
    \eta_k
    =
    \min\left\{ \frac{1}{2 L_0}, \frac{1}{L_0} \frac{\epsilon}{\norm{\bbg_k}}\right\},
    \quad
    S = \frac{\sigma^2}{\epsilon^2},
    \quad
    K = \bigg\lceil \frac{16 \Delta L_0}{\epsilon^2} \bigg\rceil.
\end{align}
Then, for the output of \textsc{ClippedSGD} in Algorithm \ref{alg:SGD}, i.e. $\tilde{\bbx}$ randomly and uniformly picked from $\{\bbx_{0:K-1}\}$, we have that $\Vert \gr F(\tilde{\bbx})\Vert \leq 12 \epsilon$ with probability at least $1/2$. Moreover, the total stochastic gradient complexity is bounded by $\Delta L_0 \sigma^2 \ccalO(\epsilon^{-4})$.
\end{theorem}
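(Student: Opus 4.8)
# Proof Proposal for Theorem 1 (Stochastic Setting for \textsc{ClippedSGD})

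The plan is to establish a descent inequality valid under $(L_0,L_1)$--smoothness and then telescope it over the $K$ iterations, carefully handling the stochastic gradient noise via the variance bound in Assumption \ref{assumption: stch gr}. The first step is to prove a \emph{modified descent lemma}. Since the standard $L$--smooth Descent Lemma fails here, I would exploit the structure of the clipped stepsize $\eta_k = \min\{1/(2L_0), (\epsilon/L_0)/\Vert\bbg_k\Vert\}$. Observe that this choice guarantees $\eta_k \Vert \bbg_k \Vert \leq \epsilon/L_0 \leq 1/(20 L_1) \leq 1/L_1$, so the step $\bbx_{k+1}-\bbx_k = -\eta_k \bbg_k$ always stays within the radius $1/L_1$ where Definition \ref{def: L0-L1} applies. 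Integrating $\gr F$ along the segment from $\bbx_k$ to $\bbx_{k+1}$ and using \eqref{eq: L0-L1 v2}, I get a second-order-type expansion $F(\bbx_{k+1}) \leq F(\bbx_k) - \eta_k \langle \gr F(\bbx_k), \bbg_k\rangle + \tfrac{1}{2}(L_0 + L_1\Vert \gr F(\bbx_k)\Vert)\eta_k^2 \Vert \bbg_k\Vert^2$. The key point: because $\eta_k \Vert \bbg_k\Vert \leq \epsilon/L_0$, the quadratic error term is controlled — the $L_1\Vert\gr F(\bbx_k)\Vert \eta_k^2\Vert\bbg_k\Vert^2$ piece is at most $(L_1\epsilon/L_0)\eta_k\Vert\bbg_k\Vert\cdot\Vert\gr F(\bbx_k)\Vert$, and since $L_1\epsilon/L_0 \leq 1/20$ this is dominated by the first-order term after a Cauchy--Schwarz/Young split.

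Next I would take conditional expectations. Writing $\bbg_k = \gr F(\bbx_k) + \bm{e}_k$ with $\E[\bm{e}_k \mid \bbx_k] = 0$ and $\E[\Vert\bm{e}_k\Vert^2\mid\bbx_k] \leq \sigma^2/S = \epsilon^2$ (this is where $S = \sigma^2/\epsilon^2$ enters), I would bound $\E[\langle \gr F(\bbx_k), \bbg_k\rangle \cdot \eta_k \mid \bbx_k]$ from below. The subtlety is that $\eta_k$ itself depends on $\bbg_k$, so the noise and the stepsize are correlated; I would handle this by case analysis on whether $\Vert \bbg_k\Vert \lessgtr \epsilon$ (equivalently, whether the clip is active), or more cleanly by the standard trick of lower-bounding $\eta_k \langle \gr F(\bbx_k),\bbg_k\rangle \geq \tfrac{1}{2L_0}\min\{\Vert\gr F(\bbx_k)\Vert^2, \epsilon\Vert\gr F(\bbx_k)\Vert\} - (\text{noise terms})$ using $\eta_k \geq \min\{1/(2L_0), \epsilon/(L_0\Vert\gr F(\bbx_k)\Vert), \dots\}$ up to noise corrections, then absorbing the cross terms with Young's inequality and the variance bound. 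I expect this correlation-handling to be the main obstacle: one must show the deviation of $\eta_k$ from the "ideal" stepsize $\min\{1/(2L_0), \epsilon/(L_0\Vert\gr F(\bbx_k)\Vert)\}$ contributes only $\ccalO(\epsilon^2/L_0)$ per step in expectation, which follows because $|\,\Vert\bbg_k\Vert - \Vert\gr F(\bbx_k)\Vert\,| \leq \Vert\bm{e}_k\Vert$ and $\E\Vert\bm{e}_k\Vert^2 \leq \epsilon^2$.

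After these two steps the per-iteration bound reads, in conditional expectation, $\E[F(\bbx_{k+1})\mid\bbx_k] \leq F(\bbx_k) - c\,\eta_k^{\mathrm{ideal}}\min\{\Vert\gr F(\bbx_k)\Vert^2,\epsilon\Vert\gr F(\bbx_k)\Vert\} + C\epsilon^2/L_0$ for absolute constants, and in particular on the event $\{\Vert\gr F(\bbx_k)\Vert \geq 12\epsilon\}$ the negative term dominates $C\epsilon^2/L_0$ and forces a decrease of order $\epsilon^2/L_0$. To finish: take total expectations, telescope from $k=0$ to $K-1$, use $F(\bbx_0) - \inf F \leq \Delta$, and conclude $\tfrac1K\sum_k \pr(\Vert\gr F(\bbx_k)\Vert \geq 12\epsilon) \cdot (\text{per-step decrease}) \leq \Delta$. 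Choosing $K = \lceil 16\Delta L_0/\epsilon^2\rceil$ makes $\tfrac1K\sum_k \pr(\Vert\gr F(\bbx_k)\Vert \geq 12\epsilon) < 1/2$, so the uniformly-random output $\tilde\bbx$ satisfies $\Vert\gr F(\tilde\bbx)\Vert \leq 12\epsilon$ with probability at least $1/2$ by a union/averaging argument. The gradient complexity is then $K \cdot S = \ccalO(\Delta L_0\sigma^2/\epsilon^4)$, as claimed. The condition $\epsilon \leq L_0/(20L_1)$ is used precisely to keep $\eta_k\Vert\bbg_k\Vert$ inside the $1/L_1$ ball and to make the $L_1$-dependent error terms subordinate; I would track the constant $20$ (or a similar absolute constant) through the Young's-inequality splits.
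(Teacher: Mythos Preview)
Your proposal and the paper share the same opening move---a one-step descent inequality under $(L_0,L_1)$--smoothness obtained by integrating $\gr F$ along the segment $[\bbx_k,\bbx_{k+1}]$ and using $\eta_k\Vert\bbg_k\Vert \le \epsilon/L_0 \le 1/L_1$---but diverge sharply thereafter. You treat the correlation between $\eta_k$ and the noise in $\bbg_k$ as ``the main obstacle'' and propose to resolve it by comparing $\eta_k$ to an ``ideal'' stepsize $\min\{1/(2L_0),\epsilon/(L_0\Vert\gr F(\bbx_k)\Vert)\}$, with case analysis and Young splits. The paper sidesteps this obstacle entirely: its Descent Lemma (Lemma~\ref{lemma: descent}) is rearranged \emph{purely algebraically}, before any expectation is taken, into
\[
F(\bbx_{k+1}) \le F(\bbx_k) - \tfrac{1}{8}\,\eta_k\Vert\bbg_k\Vert^2 + \tfrac{5}{16L_0}\,\Vert\bbg_k - \gr F(\bbx_k)\Vert^2,
\]
so that the negative term depends only on $\Vert\bbg_k\Vert$, not on $\Vert\gr F(\bbx_k)\Vert$. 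Since $\eta_k\Vert\bbg_k\Vert^2$ is a deterministic function of $\bbg_k$, no correlation issue ever arises; the paper lower-bounds $\eta_k\Vert\bbg_k\Vert^2 \ge (\epsilon/L_0)\Vert\bbg_k\Vert - 2\epsilon^2/L_0$ via the elementary inequality $\min\{x^2/2,|x|\}\ge |x|-2$, takes total expectation, telescopes, and obtains $\tfrac{1}{K}\sum_k \E\Vert\bbg_k\Vert \le 5\epsilon$. Only at the very end is the gradient itself bounded, via the triangle inequality $\E\Vert\gr F(\bbx_{\tilde k})\Vert \le \E\Vert\bbg_{\tilde k}\Vert + \E\Vert\bbg_{\tilde k}-\gr F(\bbx_{\tilde k})\Vert \le 6\epsilon$ and a single application of Markov. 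Your wrap-up (bounding $\tfrac{1}{K}\sum_k \pr(\Vert\gr F(\bbx_k)\Vert\ge 12\epsilon)$ via a per-step decrease on the bad event) is also different from the paper's expectation-plus-Markov finish.

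Your route is plausible and in the style of \citet{zhang2019gradient}, but that paper needs almost-surely bounded noise precisely to control the correlation you flag; carrying your ideal-stepsize comparison through under the weaker bounded-\emph{variance} Assumption~\ref{assumption: stch gr} is doable but is exactly the place where your sketch is thinnest. The paper's algebraic split is what makes the relaxation to bounded variance effortless, and is the main idea you are missing.
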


\begin{proof}
We defer the proof to Section \ref{sec: proof thm stochastic ClippedSGD}.
\end{proof}

A few remarks are in place. First, throughout the paper, we denote the initial suboptimality by $\Delta \coloneqq F(\bbx_0)  - \allowbreak F^*$  where the  global optimal $F^*$ is assumed to be a finite constant, that is, $F^* \coloneqq \min_{\bbx} F(\bbx) > -\infty$. Secondly, Theorem \ref{thm: stochastic ClippedSGD} still holds true with a relaxation of Assumptions \ref{assumption: L0-L1} (i) to condition \eqref{eq: L0-L1 v2} in Definition \ref{def: L0-L1}. Lastly, Theorem \ref{thm: stochastic ClippedSGD} improves the result of \cite{zhang2019gradient} (Theorem 7) and relaxes the almost sure bounded gradient noise to bounded variance.

In the finite-sum setting \eqref{eq: finite-sum}, \textsc{ClippedSGD} reduces to clipped {\GD} when the gradient $\bbg_k = \gr f(\bbx_k;\ccalS)$ is computed using the full batch of size $S = |\ccalS| = n$. \citep{zhang2019gradient} characterizes the iteration complexity of Clipped {\GD} and shows that to be bounded by $\ccalO(\Delta L_0 \epsilon^{-2} + \Delta L_1^2/L_0)$. For completeness of our presentation, we reproduce the convergence rate of \textsc{ClippedSGD} in this setting in the following.

\begin{theorem}[Finite-sum setting] \label{thm: finite-sum ClippedSGD}
Let Assumptions \ref{assumption: L0-L1} (ii) and \ref{assumption: stch gr} hold and $\epsilon \leq \frac{L_0}{20L_1}$. Pick the stepsize and parameters as below
\begin{align}
    \eta_k
    =
    \min\left\{ \frac{1}{2 L_0}, \frac{1}{L_0} \frac{\epsilon}{\norm{\bbg_k}}\right\},
    \quad
    S = n,
    \quad
    K = \bigg\lceil \frac{16 \Delta L_0}{\epsilon^2} \bigg\rceil.
\end{align}
Then,Then, for the output of \textsc{ClippedSGD} in Algorithm \ref{alg:SGD}, i.e. $\tilde{\bbx}$ randomly and uniformly picked from $\{\bbx_{0:K-1}\}$, we have that $\Vert \gr F(\tilde{\bbx})\Vert \leq 5 \epsilon$ with probability at least $1/2$. Moreover, the stochastic gradient complexity is bounded by $\ccalO(\Delta L_0 n \epsilon^{-2})$.
\end{theorem}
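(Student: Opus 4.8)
With the full batch $S=n$ the computed gradient is exact, $\bbg_k=\gr F(\bbx_k)$, so the iteration is deterministic, $\bbx_{k+1}=\bbx_k-\eta_k\gr F(\bbx_k)$ with $\eta_k=\min\{1/(2L_0),\,\epsilon/(L_0\|\gr F(\bbx_k)\|)\}$; the argument is thus a streamlined version of the one behind Theorem~\ref{thm: stochastic ClippedSGD}, with no stochastic error terms to control. I would (i) prove a descent inequality tailored to Definition~\ref{def: L0-L1}, (ii) insert the clipped update and split on whether the clip is active to obtain a per-iteration function decrease of order $\epsilon^2/L_0$ whenever $\|\gr F(\bbx_k)\|$ is not already $\ccalO(\epsilon)$, and (iii) telescope and count good iterates to get the high-probability bound and the gradient complexity.

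\textbf{Step 1 (descent inequality and step-size admissibility).} For $\bbx,\bby$ with $\|\bbx-\bby\|\le 1/L_1$, integrating the gradient difference gives
\begin{align}
  F(\bby)\le F(\bbx)+\langle\gr F(\bbx),\bby-\bbx\rangle+\frac{L_0+L_1\|\gr F(\bbx)\|}{2}\,\|\bby-\bbx\|^2,
\end{align}
since $F(\bby)-F(\bbx)-\langle\gr F(\bbx),\bby-\bbx\rangle=\int_0^1\langle\gr F(\bbx+t(\bby-\bbx))-\gr F(\bbx),\bby-\bbx\rangle\,\diff t$ and each point $\bbx+t(\bby-\bbx)$ lies within $1/L_1$ of $\bbx$, so \eqref{eq: L0-L1 v2} bounds the integrand. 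I would then check this applies to consecutive iterates: $\eta_k\|\bbg_k\|=\min\{\|\bbg_k\|/(2L_0),\,\epsilon/L_0\}\le\epsilon/L_0\le 1/(20L_1)\le 1/L_1$, using $\epsilon\le L_0/(20L_1)$.

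\textbf{Step 2 (per-iteration decrease).} Setting $\bby-\bbx=-\eta_k\gr F(\bbx_k)$,
\begin{align}
  F(\bbx_{k+1})\le F(\bbx_k)-\eta_k\|\gr F(\bbx_k)\|^2+\frac{L_0+L_1\|\gr F(\bbx_k)\|}{2}\,\eta_k^2\|\gr F(\bbx_k)\|^2.
\end{align}
If $\|\gr F(\bbx_k)\|\le 2\epsilon$, then $\eta_k=1/(2L_0)$ and the right side equals $F(\bbx_k)-\bigl(\tfrac12-\tfrac18\bigr)\tfrac{1}{L_0}\|\gr F(\bbx_k)\|^2+\tfrac{L_1}{8L_0^2}\|\gr F(\bbx_k)\|^3$; since $L_1\|\gr F(\bbx_k)\|\le 2L_1\epsilon\le L_0/10$, the cubic term is at most $\tfrac{1}{80L_0}\|\gr F(\bbx_k)\|^2$, so the decrease is at least $\tfrac{1}{4L_0}\|\gr F(\bbx_k)\|^2$. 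If $\|\gr F(\bbx_k)\|>2\epsilon$, then $\eta_k=\epsilon/(L_0\|\gr F(\bbx_k)\|)$, giving $\eta_k\|\gr F(\bbx_k)\|^2=\epsilon\|\gr F(\bbx_k)\|/L_0$ and $\eta_k^2\|\gr F(\bbx_k)\|^2=\epsilon^2/L_0^2$; bounding $\epsilon^2/(2L_0)\le \epsilon\|\gr F(\bbx_k)\|/(4L_0)$ (as $\|\gr F(\bbx_k)\|>2\epsilon$) and $L_1\epsilon^2\|\gr F(\bbx_k)\|/(2L_0^2)\le \epsilon\|\gr F(\bbx_k)\|/(40L_0)$ (as $L_1\epsilon\le L_0/20$) leaves a decrease of at least $\tfrac{\epsilon\|\gr F(\bbx_k)\|}{2L_0}\ge\tfrac{\epsilon^2}{L_0}$. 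In particular $F$ is monotone non-increasing, and whenever $\|\gr F(\bbx_k)\|>5\epsilon$ (forcing the second case) the decrease is at least $\tfrac{\epsilon^2}{4L_0}$.

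\textbf{Step 3 (telescoping, probability, complexity).} Let $N$ be the number of indices $k\in\{0,\dots,K-1\}$ with $\|\gr F(\bbx_k)\|>5\epsilon$. Monotonicity and the per-step drop give $\Delta\ge F(\bbx_0)-F(\bbx_K)\ge N\cdot\epsilon^2/(4L_0)$, hence $N\le 4\Delta L_0/\epsilon^2\le K/4<K/2$ by the choice $K=\lceil 16\Delta L_0/\epsilon^2\rceil$. Therefore a uniformly random index hits an iterate with $\|\gr F(\bbx_k)\|\le 5\epsilon$ with probability $\ge 1-N/K\ge 1/2$, which is the claim about $\tilde\bbx$; and the total gradient cost is $nK=\ccalO(\Delta L_0 n\epsilon^{-2})$.

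\textbf{Main obstacle.} The only place requiring care is the constant bookkeeping in Step~2: one must confirm that the two $L_1$-dependent terms (the cubic term when the clip is inactive and the $L_1\epsilon^2\|\gr F(\bbx_k)\|$ term when it is active) are genuinely absorbed using only $\epsilon\le L_0/(20L_1)$, and that the ``overshoot'' term quadratic in $\eta_k$ still leaves a clean net decrease, with the thresholds ($2\epsilon$ for the case split, $5\epsilon$ for the guarantee) chosen consistently with those constants. The descent inequality and the counting argument are otherwise routine.
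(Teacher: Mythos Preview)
Your argument is correct. It differs from the paper's proof in two places worth noting.

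First, after the descent inequality, the paper does not case-split on whether the clip is active. Instead it invokes its Descent Lemma (Lemma~\ref{lemma: descent}) with $\bbv_k=\gr F(\bbx_k)$ to get $F(\bbx_{k+1})\le F(\bbx_k)-\tfrac18\eta_k\|\gr F(\bbx_k)\|^2$, and then uses the scalar inequality $\min\{x^2/2,|x|\}\ge|x|-2$ to lower-bound $\eta_k\|\gr F(\bbx_k)\|^2\ge\tfrac{\epsilon}{L_0}\|\gr F(\bbx_k)\|-\tfrac{2\epsilon^2}{L_0}$ uniformly. Your case analysis recovers the same qualitative per-step drop with slightly sharper constants, at the cost of more bookkeeping; the paper's trick keeps the deterministic and stochastic proofs (Theorems~\ref{thm: stochastic ClippedSGD} and~\ref{thm: finite-sum ClippedSGD}) in the same mold.

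Second, the paper telescopes to $\tfrac{1}{K}\sum_k\|\gr F(\bbx_k)\|\le\tfrac52\epsilon$ and then applies Markov's inequality to the random index $\tilde k$, whereas you directly count iterates with $\|\gr F(\bbx_k)\|>5\epsilon$ and bound their fraction by $1/4$. Since the iteration is deterministic here, your counting argument is exactly what Markov reduces to, and it gives a slightly stronger probability ($\ge 3/4$ rather than the stated $\ge 1/2$). The paper's route is simply reusing the machinery from the stochastic theorem; yours is the more direct one for this setting.
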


\begin{proof}
We defer the proof to Section \ref{sec: proof thm finite-sum ClippedSGD}.
\end{proof}

Theorems \ref{thm: stochastic ClippedSGD} and \ref{thm: finite-sum ClippedSGD} establish upper bounds on the gradient complexity of finding $\epsilon$--stationary points of non-convex and $(L_0,L_1)$--smooth functions which are $\ccalO(\epsilon^{-4})$ and $\ccalO(n \epsilon^{-2})$ for stochastic \eqref{eq: stochastic} and finite-sum \eqref{eq: finite-sum} optimization problems, respectively. As briefly eluded in Section \ref{sec: intro}, complexity lower bounds have been characterized for global Lipschitz-gradient objectives, \emph{i.e.} $L$--smooth. It has been shown that for given problem parameters, there exist $L$--smooth functions requiring at least $\ccalO(\epsilon^{-3})$ and $\ccalO(\sqrt{n}\epsilon^{-2} + n)$ stochastic gradient access to find $\epsilon$--stationary points \citep{arjevani2022lower,fang2018spider}. Since $L$--smooth functions are a subset of $(L_0,L_1)$--smooth ones, such lower bounds hold for the larger class. Therefore, we set our goal in the rest of the paper to address the following question:
\begin{center}
    \emph{Are lower bounds $\ccalO(\epsilon^{-3})$ and $\ccalO(\sqrt{n}\epsilon^{-2} + n)$ achievable for $(L_0,L_1)$--smooth non-convex functions, as well?} 
\end{center}

In the following section, we employ variance reduction techniques and formally prove that such lower bound rates are indeed achievable for the broader class of non-convex functions, \emph{i.e.} $(L_0,L_1)$--smooth.

\section{Main Results}

In this section, we present our main results and formally show that the gradient complexity lower bounds on finding stationary points of $(L_0,L_1)$--smooth functions mentioned in Section \ref{sec: preliminary} are achievable for both stochastic and finite-sum settings. In particular, we show how clipped and variance-reduced methods originally devised for $L$--smooth objectives can be adopted for the more general class of functions of our interest. Let us first review a variance-reduced method originally devised for non-convex objectives with global Lipschitz-gradient. 

\subsection{{\Spider}}

{\Spider} \citep{fang2018spider} is a first-order variance-reduced algorithm that efficiently finds stationary points of \emph{smooth} non-convex objectives with near-optimal gradient complexity (See Algorithm \ref{alg:Spider}). Particularly in the stochastic setting, it has been shown that when the instance functions $f(\cdot;\xi)$ have averaged $L$--Lipschitz gradients with $\sigma$--bounded variance, {\Spider} finds an $\epsilon$--stationary point of $F$ with (stochastic) gradient complexity of $\ccalO(L \sigma \epsilon^{-3})$. Similarly in the finite-sum setting, {\Spider} requires $\ccalO(L\sqrt{n}\epsilon^{-2} + n)$ gradient evaluations when the component functions $f_i$ have averaged $L$--Lipschitz gradients. In the following, we briefly describe the core idea of {\Spider}.

\begin{algorithm}[h!]
\caption{$(L_0,L_1)$--\Spider}\label{alg:Spider}
\begin{algorithmic}[1]
\State \textbf{Input:} smoothness parameters $L_0,L_1$, accuracy $\epsilon$, batchsizes $S_1=|\ccalS_1|,S_2 = |\ccalS_2|$, number of iterations $q,K$
\State Initialize $\bbx_0$
\For{$k=0,\cdots,K-1$}{
\If{$k \equiv 0 \mod q$}
\State Draw samples $\ccalS_1$ and compute $\bbv_k = \gr f(\bbx_k; \ccalS_1)$
\Else
\State Draw samples $\ccalS_2$ and compute $\bbv_k = \gr f(\bbx_k; \ccalS_2) - \gr f(\bbx_{k-1}; \ccalS_2) + \bbv_{k-1}$
\EndIf
}
\State Update $\bbx_{k+1} = \bbx_k - \eta_k \bbv_k$ \hfill{$\triangleright$ \, $\eta_k = \min\bigg\{ \dfrac{1}{2 L_0}, \dfrac{1}{L_0} \dfrac{\epsilon}{\norm{\bbv_k}}, \dfrac{1}{L_1} \dfrac{\epsilon}{\norm{\bbv_k}^2} \bigg\}$
}
\EndFor
\State \textbf{return} $\tilde{\bbx}$ randomly and uniformly picked from $\{\bbx_0, \cdots, \bbx_{K-1}\}$
\end{algorithmic}
\end{algorithm}

At every iteration $k$, {\Spider} (Algorithm \ref{alg:Spider}) maintains an estimate of the true gradient $\gr F(\bbx_k)$ denoted by $\bbv_k$ and updates it using a small batch of samples $\ccalS_2$ with size $S_2 = |\ccalS_2|$. Every $q$ iterations, $\bbv_k$ is refreshed with a large batch $\ccalS_1$ of size $S_1 = |\ccalS_1| \geq S_2$. In either case, the iterate is then updated as $\bbx_{k+1} = \bbx_k - \eta_k \bbv_k$. An important parameter in {\Spider} is the stepsize $\eta_k$. Originally and for $L$--smooth functions \citep{fang2018spider}, the learning rate is picked as  $\eta_k = \ccalO( \min\{L^{-1}, \epsilon L^{-1} / \Vert \bbv_k \Vert\})$. 
We adopt the core idea of {\Spider} for our broader smoothness setting and highlight the challenges in doing so as follows.

\textbf{Challenge.} In \citep{fang2018spider}, the authors show that for $L$--smooth objectives and proper parameters $S_1, S_2, q, K$ and the learning rate $\eta_k = \ccalO( \min\{1, \epsilon/\Vert \bbv_k \Vert\})$, {\Spider} (Algorithm \ref{alg:Spider}) is able to control the variance of the estimator $\bbv_k$ in every iteration. More precisely, it holds that $\E \Vert \bbv_k - \gr F(\bbx_k) \Vert^2 \leq \epsilon^2$. This is central to the near-optimal convergence rate of {\Spider}. However, when the $L$--smoothness assumption is relaxed to the $(L_0,L_1)$--smoothness, the same stepsize as before would not work under the same conditions. In particular, we show that a smaller learning rate $\ccalO(\min\{1, \epsilon/\Vert \bbv_k \Vert, \allowbreak\epsilon/\Vert \bbv_k \Vert^2 \})$ maintains the estimator's variance by $\epsilon^2$. We refer to {\Spider} method with this particular pick for the learning rate as $(L_0,L_1)$--{\Spider}. We defer further details to the proof sketch and the appendices.

In the following, we present our main convergence results for both stochastic and finite-sum cases followed by a sketch of the proof.

\subsection{Stochastic setting}

The next theorem characterizes an upper bound on the gradient complexity of finding $\epsilon$--stationary solutions in the stochastic setting \eqref{eq: stochastic}.


\begin{theorem}[Stochastic setting] \label{thm: stochastic spider}
Consider the stochastic minimization in \eqref{eq: stochastic} and let Assumptions \ref{assumption: L0-L1} (i) and \ref{assumption: stch gr} hold. Moreover, assume that $\epsilon < \frac{L_0}{20L_1}$ and pick the stepsize and parameters below
\begin{gather}
    \eta_k
    =
    \min\bigg\{ \frac{1}{2 L_0}, \frac{1}{L_0} \frac{\epsilon}{\norm{\bbv_k}}, \frac{1}{L_1} \frac{\epsilon}{\norm{\bbv_k}^2} \bigg\},
    \quad
    S_1 = \frac{4 \sigma^2}{\epsilon^2},
    \quad
    S_2 = 48 \frac{L_0}{L_1} \frac{\sigma}{\epsilon},
    \quad
    q = 2 \frac{L_0}{L_1} \frac{\sigma}{\epsilon},
    \quad
    K = \bigg\lceil \frac{16 \Delta L_0}{\epsilon^2} \bigg\rceil.
\end{gather}

Then, for the output of $(L_0,L_1)$--{\Spider} in Algorithm \ref{alg:Spider}, i.e. $\tilde{\bbx}$ randomly and uniformly picked from $\{\bbx_{0:K-1}\}$, we have that $\Vert \gr F(\tilde{\bbx})\Vert \leq 24 \epsilon$ with probability at least $1/2$. In addition, the stochastic gradient complexity is bounded by
\begin{align} 
    32 \Delta \sigma \left(L_1 + 24 \frac{L_0^2}{L_1} \right) \frac{1}{\epsilon^3}
    +
    \frac{4 \sigma^2}{\epsilon^2}
    +
    2 \sigma \left( \frac{L_1}{L_0} + 24 \frac{L_0}{L_1}\right) \frac{1}{\epsilon}.
\end{align}
\end{theorem}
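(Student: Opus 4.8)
The plan is to adapt the SPIDER analysis of \citep{fang2018spider} to the $(L_0,L_1)$--smooth setting, with the new ``quadratic'' clipping term $\epsilon/(L_1\|\bbv_k\|^2)$ in the stepsize doing the work that $L$--smoothness used to do for free. The argument splits into three pieces: (i) a descent lemma valid under Definition~\ref{def: L0-L1}, (i.e.\ a per-step decrease of $F$ controlled by $\|\bbv_k\|$ and the estimator error $\|\bbv_k - \gr F(\bbx_k)\|$); (ii) a variance bound showing $\E\|\bbv_k - \gr F(\bbx_k)\|^2 \le \epsilon^2$ for every $k$ with the chosen $S_1,S_2,q$; and (iii) combining these via a telescoping/averaging argument and a Markov-inequality boosting step to get the high-probability stationarity claim and the complexity count. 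I would present them in exactly this order.

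For the descent step: the key observation is that the stepsize guarantees $\eta_k\|\bbv_k\| \le \epsilon/L_1$ always (from the third term) and $\eta_k \le 1/(2L_0)$, hence the move $\bbx_{k+1}-\bbx_k = -\eta_k\bbv_k$ has norm at most $\min\{1/(2L_0), \epsilon/(L_1\|\bbv_k\|)\}$, which in particular can be made $\le 1/L_1$ so that \eqref{eq: L0-L1 v2} applies along the segment. Integrating the gradient inequality gives $F(\bbx_{k+1}) \le F(\bbx_k) - \eta_k\langle \gr F(\bbx_k),\bbv_k\rangle + \tfrac12(L_0 + L_1\|\gr F(\bbx_k)\|)\eta_k^2\|\bbv_k\|^2$; the crucial point is that $L_1\|\gr F(\bbx_k)\|\,\eta_k^2\|\bbv_k\|^2 \le L_1\eta_k\|\bbv_k\|\cdot\eta_k\|\gr F(\bbx_k)\|\cdot\|\bbv_k\|$ and the factor $\eta_k\|\bbv_k\|\le\epsilon/L_1$ tames it, so the ``growing'' part of the smoothness contributes only an $\ccalO(\epsilon)$-scaled term. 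Writing $\gr F(\bbx_k) = \bbv_k + (\gr F(\bbx_k)-\bbv_k)$ and using $\eta_k \le \epsilon/(L_0\|\bbv_k\|)$ on the cross terms, one should land on something like $F(\bbx_{k+1}) \le F(\bbx_k) - c\,\eta_k\|\bbv_k\|^2 + (\text{const})\cdot \tfrac{\epsilon}{L_0}\|\bbv_k - \gr F(\bbx_k)\|^2 / \|\bbv_k\| + \dots$, and after bounding $\|\gr F(\bbx_k)\| \le \|\bbv_k\| + \|\bbv_k-\gr F(\bbx_k)\|$ this yields a clean per-step recursion.

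For the variance bound I would follow the standard SPIDER martingale telescoping: within a block of $q$ iterations starting at a refresh point $k_0$, $\bbv_k - \gr F(\bbx_k) = (\bbv_{k_0} - \gr F(\bbx_{k_0})) + \sum_{j=k_0+1}^{k}\big(\gr f(\bbx_j;\ccalS_2)-\gr f(\bbx_{j-1};\ccalS_2) - (\gr F(\bbx_j)-\gr F(\bbx_{j-1}))\big)$, the summands are martingale differences, so $\E\|\bbv_k-\gr F(\bbx_k)\|^2 \le \sigma^2/S_1 + \tfrac{1}{S_2}\sum_{j}\E\|\gr f(\bbx_j;\xi)-\gr f(\bbx_{j-1};\xi)\|^2$, and Assumption~\ref{assumption: L0-L1}(i) bounds each term by $(L_0 + L_1\|\gr F(\bbx_j)\|)^2\|\bbx_j-\bbx_{j-1}\|^2 = (L_0+L_1\|\gr F(\bbx_j)\|)^2\eta_{j}^2\|\bbv_j\|^2$. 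Here again $\eta_j\|\bbv_j\|^2 \le \epsilon/L_1$ and $\eta_j\|\bbv_j\|\le\epsilon/L_1$ kill the $L_1$-dependent growth, leaving each term $\ccalO(\epsilon^2 L_0^2/L_1^2)$-ish; summing $q$ of them and dividing by $S_2$, the choices $q = 2(L_0/L_1)(\sigma/\epsilon)$, $S_2 = 48(L_0/L_1)(\sigma/\epsilon)$, $S_1 = 4\sigma^2/\epsilon^2$ are calibrated so the total is $\le\epsilon^2$. I expect the main obstacle to be making this self-contained: the bound on $\E\|\gr f(\bbx_j;\xi)-\gr f(\bbx_{j-1};\xi)\|^2$ involves $\|\gr F(\bbx_j)\|$, which must itself be controlled by $\|\bbv_j\|$ plus the estimator error, so the variance recursion is coupled to the quantity it is trying to bound — this needs either an induction over $k$ within the block or a careful union/worst-case argument, and getting the constants to close (so that the induction hypothesis $\E\|\bbv_j-\gr F(\bbx_j)\|^2\le\epsilon^2$ is reproduced rather than weakened) is the delicate part. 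Once both lemmas are in hand, summing the descent recursion over $k=0,\dots,K-1$, dividing by $K = \lceil 16\Delta L_0/\epsilon^2\rceil$, and using $\E\|\bbv_k - \gr F(\bbx_k)\|^2\le\epsilon^2$ gives $\E[\,\min_k\|\gr F(\bbx_k)\|^2\,] = \ccalO(\epsilon^2)$ (equivalently for the uniformly sampled $\tilde\bbx$), whence Markov's inequality gives $\|\gr F(\tilde\bbx)\|\le 24\epsilon$ with probability $\ge 1/2$; the complexity $32\Delta\sigma(L_1 + 24L_0^2/L_1)\epsilon^{-3} + 4\sigma^2\epsilon^{-2} + 2\sigma(L_1/L_0 + 24L_0/L_1)\epsilon^{-1}$ then comes from counting $S_1$ gradients at each of the $\lceil K/q\rceil$ refreshes plus $S_2$ at each of the remaining iterations, i.e.\ $\lceil K/q\rceil S_1 + K S_2$, and plugging in the parameter values.
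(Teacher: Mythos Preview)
Your proposal is correct and follows essentially the same three-part structure as the paper: a descent lemma under $(L_0,L_1)$--smoothness (Lemma~\ref{lemma: descent}), a variance bound $\E\|\bbv_k-\gr F(\bbx_k)\|^2\le\epsilon^2$ via a coupled geometric recursion within each block (Lemma~\ref{lemma: epsilon2 bound}), and the telescoping/Markov finish plus the complexity count $\lceil K/q\rceil S_1 + KS_2$. The only places where the paper's execution differs slightly from your sketch are (a) the descent lemma only needs $\eta_k\le\min\{1/(2L_0),\epsilon/(L_0\|\bbv_k\|)\}$ (the quadratic clipping term is used solely in the variance bound), and (b) the finish does not go through $\E\|\gr F(\bbx_{\tilde k})\|^2$ directly but instead lower-bounds $\eta_k\|\bbv_k\|^2\ge \tfrac{\epsilon}{L_0}\min\{\|\bbv_k\|,L_0/L_1\}-\tfrac{2\epsilon^2}{L_0}$, applies Markov separately to $\|\bbv_{\tilde k}\|$ and to $\|\bbv_{\tilde k}-\gr F(\bbx_{\tilde k})\|$, and union-bounds to get probability $\ge 1/2$.
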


\begin{proof}
We defer the proof to Section \ref{sec: proof thm stochastic spider}.
\end{proof}

Theorem \ref{thm: stochastic spider} shows that the variance reduction technique in the {\Spider} algorithm along with the prescribed choices of the parameters and the learning rate improves the gradient complexity $\ccalO(\epsilon^{-4})$ of \textsc{ClippedSGD} characterized in Theorem \ref{thm: stochastic ClippedSGD} to $\ccalO(\epsilon^{-3})$. It is also worth noting that the probability guarantee of $1/2$ provided by Theorem \ref{thm: stochastic spider} can be improved to any (constant) probability $1-p$ which in turn results in larger stochastic gradient complexity of $\ccalO(\epsilon^{-3}/\text{poly}(p))$.



\subsection{Finite-sum setting}
Next, we consider the finite-sum setting \eqref{eq: finite-sum} and provide the convergence guarantees for {\Spider} to find stationary points. 


\begin{theorem}[Finite-sum setting] \label{thm: finite-sum spider}
Consider the finite-sum minimization in \eqref{eq: finite-sum} and let Assumption \ref{assumption: L0-L1} (ii) hold. Furthermore, assume that $\epsilon < \frac{L_0}{20L_1}$ and pick the stepsize and parameters below
\begin{gather}
    \eta_k
    =
    \min\bigg\{ \frac{1}{2 L_0}, \frac{1}{L_0} \frac{\epsilon}{\norm{\bbv_k}}, \frac{1}{L_1} \frac{\epsilon}{\norm{\bbv_k}^2} \bigg\},
    \quad
    S_1 = n,
    \quad
    S_2 = 12 \sqrt{n},
    \quad
    q = \sqrt{n},
    \quad
    K = \bigg\lceil \frac{16 \Delta L_0}{\epsilon^2} \bigg\rceil.
\end{gather}
Then, for the output of $(L_0,L_1)$--{\Spider} in Algorithm \ref{alg:Spider}, i.e. $\tilde{\bbx}$ randomly and uniformly picked from $\{\bbx_{0:K-1}\}$, we have that $\Vert \gr F(\tilde{\bbx})\Vert \leq 24 \epsilon$ with probability at least $1/2$. Moreover, the stochastic gradient complexity of finding such a stationary point is bounded by
\begin{align} 
    208 \Delta L_0 \sqrt{n} \frac{1}{\epsilon^2}
    +
    n
    +
    13 \sqrt{n}.
\end{align}
\end{theorem}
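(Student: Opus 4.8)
\textbf{Proof proposal for Theorem~\ref{thm: finite-sum spider}.}

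The plan is to follow the same two-pronged strategy that (by the discussion in the excerpt) underlies Theorem~\ref{thm: stochastic spider}: (i) establish a \emph{descent lemma} adapted to $(L_0,L_1)$--smoothness that exploits the extra clipping term $\epsilon/(L_1\norm{\bbv_k}^2)$ in the stepsize, and (ii) show that the {\Spider} recursion keeps the gradient-estimation error $\E\norm{\bbv_k-\gr F(\bbx_k)}^2$ (or its high-probability analogue) bounded by $\ccalO(\epsilon^2)$ over an entire length-$q$ epoch, with $q=\sqrt n$ and $S_2=12\sqrt n$. First I would record the key consequence of the stepsize: since $\eta_k\le \epsilon/(L_1\norm{\bbv_k}^2)$ and $\eta_k\le\epsilon/(L_0\norm{\bbv_k})$, the step length satisfies $\norm{\bbx_{k+1}-\bbx_k}=\eta_k\norm{\bbv_k}\le \min\{1/L_1,\ \epsilon/L_0\}\le 1/L_1$, so Assumption~\ref{assumption: L0-L1}(ii) (and hence Definition~\ref{def: L0-L1} for $F$) is always applicable between consecutive iterates — this is exactly why the $\norm{\bbv_k}^{-2}$ term is needed and it is the structural fact the whole argument rests on.

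For the descent part, I would integrate the $(L_0,L_1)$--smoothness inequality along the segment $[\bbx_k,\bbx_{k+1}]$ to get a generalized Descent Lemma of the form $F(\bbx_{k+1})\le F(\bbx_k)-\eta_k\langle\gr F(\bbx_k),\bbv_k\rangle+\tfrac12(L_0+L_1\norm{\gr F(\bbx_k)})\eta_k^2\norm{\bbv_k}^2$, then bound $\norm{\gr F(\bbx_k)}\le\norm{\bbv_k}+\norm{\bbv_k-\gr F(\bbx_k)}$ and use $\eta_k\norm{\bbv_k}\le\epsilon/L_1$ to absorb the $L_1\norm{\gr F(\bbx_k)}$ contribution into an $\ccalO(\epsilon)\cdot\eta_k\norm{\bbv_k}$ term. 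Writing $\langle\gr F(\bbx_k),\bbv_k\rangle=\tfrac12\norm{\gr F(\bbx_k)}^2+\tfrac12\norm{\bbv_k}^2-\tfrac12\norm{\bbv_k-\gr F(\bbx_k)}^2$ and using the three-way minimum defining $\eta_k$ (so that in each regime either $\eta_k=1/(2L_0)$ or the clipped values dominate), I expect to arrive at a per-step bound $F(\bbx_{k+1})\le F(\bbx_k)-c\,\eta_k'\norm{\gr F(\bbx_k)}^2+(\text{error term in }\norm{\bbv_k-\gr F(\bbx_k)})$ with $\eta_k'$ a deterministic-order quantity $\gtrsim \epsilon/(L_0\max\{\epsilon,\norm{\gr F(\bbx_k)},\norm{\gr F(\bbx_k)}^2/\epsilon\})$; summing over $k=0,\dots,K-1$ with $K=\lceil16\Delta L_0/\epsilon^2\rceil$ and invoking $F(\bbx_0)-F^\star=\Delta$ yields $\min_k\norm{\gr F(\bbx_k)}=\ccalO(\epsilon)$ on the event that the estimation errors are controlled.

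For the variance-control part, the recursion $\bbv_k-\gr F(\bbx_k)=\bigl(\bbv_{k-1}-\gr F(\bbx_{k-1})\bigr)+\bigl(\gr f(\bbx_k;\ccalS_2)-\gr f(\bbx_{k-1};\ccalS_2)-\gr F(\bbx_k)+\gr F(\bbx_{k-1})\bigr)$ telescopes, and since $\ccalS_2$ is drawn independently each step the martingale increments have conditional second moment $\le \tfrac1{S_2}\cdot\tfrac1n\sum_i\norm{\gr f_i(\bbx_k)-\gr f_i(\bbx_{k-1})}^2\le \tfrac1{S_2}(L_0+L_1\norm{\gr F(\bbx_k)})^2\norm{\bbx_k-\bbx_{k-1}}^2$ by Assumption~\ref{assumption: L0-L1}(ii). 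Using $\norm{\bbx_k-\bbx_{k-1}}=\eta_{k-1}\norm{\bbv_{k-1}}\le\epsilon/L_0$ and again $\eta_{k-1}\norm{\bbv_{k-1}}\le\epsilon/L_1$ to tame $L_1\norm{\gr F}$, each increment contributes $\ccalO(\epsilon^2/S_2)$; summing over at most $q=\sqrt n$ steps in an epoch gives accumulated error $\ccalO(q\epsilon^2/S_2)=\ccalO(\epsilon^2)$ once $S_2\ge 12\sqrt n\ge 12q$, and in the finite-sum case the epoch-start full batch $S_1=n$ makes $\bbv_k=\gr F(\bbx_k)$ exactly, so no $\sigma^2/\epsilon^2$ term is needed. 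I would phrase this as a high-probability bound (martingale/Azuma or a conditional-Markov union bound over the $K$ iterations) to match the ``probability at least $1/2$'' conclusion, exactly as in Theorem~\ref{thm: stochastic spider}. Finally, the complexity count is bookkeeping: $\lceil K/q\rceil$ full-batch computations at cost $n$ plus $K$ mini-batch computations at cost $2S_2$, i.e. $\ccalO(n\cdot K/\sqrt n)+\ccalO(K\sqrt n)=\ccalO(\sqrt n\,K)=\ccalO(\Delta L_0\sqrt n\,\epsilon^{-2})$, plus the additive $n+13\sqrt n$ from the first epoch, giving the stated $208\Delta L_0\sqrt n\,\epsilon^{-2}+n+13\sqrt n$.

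The main obstacle I anticipate is the coupling between the two parts: the descent inequality's error term involves $\norm{\bbv_k-\gr F(\bbx_k)}$, whose control in turn requires that the iterates move by at most $1/L_1$ — which is guaranteed by the stepsize unconditionally — but the \emph{size} of that error over an epoch depends on $\norm{\gr F(\bbx_k)}$ through the $(L_0,L_1)$ factor, and $\norm{\gr F(\bbx_k)}$ is precisely what we are trying to bound. Breaking this circularity cleanly — presumably by carrying a bound of the form ``either some iterate is already $\ccalO(\epsilon)$-stationary, or $\norm{\gr F(\bbx_k)}$ stays controlled so that $L_1\norm{\gr F(\bbx_k)}\eta_k\norm{\bbv_k}\lesssim\epsilon$ throughout'' — is where the careful choice $\epsilon<L_0/(20L_1)$ and the constants $12,48,\dots$ will be spent, and it is the step I would write out most carefully.
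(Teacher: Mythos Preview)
Your skeleton matches the paper's proof: descent lemma adapted to $(L_0,L_1)$--smoothness, epoch-wise control of $\E\norm{\bbv_k-\gr F(\bbx_k)}^2$, then Markov. Two points need correction.

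First, you misattribute the role of the $\epsilon/(L_1\norm{\bbv_k}^2)$ term. The bound $\norm{\bbx_{k+1}-\bbx_k}\le 1/L_1$ needed to invoke Definition~\ref{def: L0-L1} already follows from $\eta_k\le\epsilon/(L_0\norm{\bbv_k})$ together with $\epsilon\le L_0/(2L_1)$; your derivation from the third term is in fact wrong, since $\eta_k\le\epsilon/(L_1\norm{\bbv_k}^2)$ only gives $\eta_k\norm{\bbv_k}\le\epsilon/(L_1\norm{\bbv_k})$, which is not uniformly $\le 1/L_1$. The paper's descent lemma (Lemma~\ref{lemma: descent}) uses only the first two stepsize bounds and yields $F(\bbx_{k+1})\le F(\bbx_k)-\tfrac18\eta_k\norm{\bbv_k}^2+\tfrac58\eta_k\norm{\bbv_k-\gr F(\bbx_k)}^2$, with progress measured in $\norm{\bbv_k}$ rather than $\norm{\gr F(\bbx_k)}$; the conversion to $\norm{\gr F}$ happens only at the very end via triangle inequality. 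The $\norm{\bbv_k}^{-2}$ term is used \emph{solely} in the variance recursion, to control a quartic term $L_1^2\eta_k^2\norm{\bbv_k}^4\le\epsilon^2$.

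Second, the ``circularity'' you correctly flag is resolved in the paper without any case analysis on $\norm{\gr F}$. After bounding $\norm{\gr F(\bbx_k)}^2\le 2\norm{\bbv_k}^2+2\norm{\bbv_k-\gr F(\bbx_k)}^2$, the conditional increment is at most
\[
\frac{2L_0^2\eta_k^2\norm{\bbv_k}^2}{S_2}+\frac{4L_1^2\eta_k^2\norm{\bbv_k}^4}{S_2}+\frac{4L_1^2\eta_k^2\norm{\bbv_k}^2}{S_2}\norm{\bbv_k-\gr F(\bbx_k)}^2
\;\le\; \frac{6\epsilon^2}{S_2}+\frac{4}{S_2}\Big(\tfrac{L_1\epsilon}{L_0}\Big)^{2}\norm{\bbv_k-\gr F(\bbx_k)}^2,
\]
using $L_0\eta_k\norm{\bbv_k}\le\epsilon$ and $L_1\eta_k\norm{\bbv_k}^2\le\epsilon$. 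This is a \emph{multiplicative} recursion $e_{k+1}\le a\,e_k+b$ with $a=1+\tfrac{4}{S_2}(L_1\epsilon/L_0)^2$; since $S_1=n$ gives $e_{k_0}=0$ exactly, $e_k\le bqa^q$, and with $S_2=12\sqrt n$, $q=\sqrt n$, $(L_1\epsilon/L_0)^2\le 1/4$ one gets $a^q\le 2$ and $e_k\le\epsilon^2$. The coupling is absorbed into the harmless factor $a>1$, not broken by a dichotomy.

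Finally, Azuma is unnecessary: the paper works in expectation throughout and applies Markov twice at the end, once to $\E[\min\{\norm{\bbv_{\tilde k}},L_0/L_1\}]\le 5\epsilon$ (the condition $\epsilon<L_0/(20L_1)$ is spent here to drop the $L_0/L_1$ branch after Markov) and once to $\E\norm{\bbv_{\tilde k}-\gr F(\bbx_{\tilde k})}\le\epsilon$, then a union bound gives the probability-$1/2$ guarantee.
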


\begin{proof}
We defer the proof to Section \ref{sec: proof thm finite-sum spider}.
\end{proof}


Considering the dominant terms, Theorem \ref{thm: finite-sum spider} improves the gradient complexity of \textsc{ClippedSGD} from  $\ccalO(n\epsilon^{-2})$ in Theorem \ref{thm: finite-sum ClippedSGD} to $\ccalO(\sqrt{n}\epsilon^{-2} + n)$. Similar to Theorem \ref{thm: stochastic spider}, the probability guarantee of $1/2$ can be improved to any probability $1-p$ with a larger stochastic gradient complexity of $\ccalO(\sqrt{n}\epsilon^{-2}/\text{poly}(p) + n)$. It is also worth noting that unlike Theorem \ref{thm: stochastic spider}, no bounded stochastic gradient noise such as Assumption \ref{assumption: stch gr} is required in Theorem \ref{thm: finite-sum spider}.


\textbf{Proof sketch.} To prove the convergence rates of Theorems \ref{thm: stochastic spider} and \ref{thm: finite-sum spider}, we establish two arguments which are summarized in the following for the stochastic setting. The finite-sum case follows from similar arguments and we defer the details to the appendices.

\begin{lemma}[Proof sketch] \label{lemma: sketch}
Consider the setup and parameters as stated in Theorem \ref{thm: stochastic spider} with $\epsilon < \frac{L_0}{20L_1}$. Then, for any iteration $k=0,1,\cdots$, we have that
\begin{flalign} \label{eq: descent}
    && F(\bbx_{k+1})
    \leq
    F(\bbx_k) - 
    \frac{1}{8} \eta_k \norm{\bbv_k}^2
    +
    \frac{5}{8} \eta_k \norm{\bbv_k - \gr F(\bbx_k)}^2, 
    &&\text{{\normalfont (descent inequality)}}\quad&
\end{flalign}
and
\begin{flalign} 
    &&\hspace{3cm} \E \Big[ \norm{\bbv_k - \gr F(\bbx_k)}^2 \Big] \leq \epsilon^2.
    &&\text{{\normalfont (bounded estimator's variance)}}\quad&
    \label{eq: bounded var}
\end{flalign}
\end{lemma}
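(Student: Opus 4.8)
\textbf{Proof plan for Lemma~\ref{lemma: sketch}.}

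The plan is to prove the two claims separately, with the bounded-variance bound \eqref{eq: bounded var} feeding the verification of the step-size regime used in the descent inequality \eqref{eq: descent}. For the \emph{descent inequality}, the key first step is to relate $F(\bbx_{k+1})$ to $F(\bbx_k)$ without the global $L$-smoothness Descent Lemma. I would argue that the chosen step size forces a short step: since $\eta_k \le \frac{1}{L_1}\frac{\epsilon}{\norm{\bbv_k}^2}$ and $\eta_k \le \frac{1}{L_0}\frac{\epsilon}{\norm{\bbv_k}}$, the displacement satisfies $\norm{\bbx_{k+1}-\bbx_k} = \eta_k\norm{\bbv_k} \le \frac{1}{L_1}\frac{\epsilon}{\norm{\bbv_k}} \le \frac{1}{L_1}$ whenever $\norm{\bbv_k}\ge\epsilon$ (and a separate easy case when $\norm{\bbv_k}<\epsilon$), so Definition~\ref{def: L0-L1} applies on the segment $[\bbx_k,\bbx_{k+1}]$. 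Integrating $\gr F$ along this segment then yields
\begin{align}
F(\bbx_{k+1}) \le F(\bbx_k) - \eta_k \langle \gr F(\bbx_k), \bbv_k\rangle + \tfrac12\big(L_0 + L_1\norm{\gr F(\bbx_k)}\big)\eta_k^2\norm{\bbv_k}^2.
\end{align}
Now the crucial use of the clipping terms: $\big(L_0 + L_1\norm{\gr F(\bbx_k)}\big)\eta_k \le L_0\eta_k + L_1\norm{\bbv_k}\eta_k + L_1\norm{\bbv_k-\gr F(\bbx_k)}\eta_k$, and each of the three contributions is bounded by an absolute constant times $\epsilon/\norm{\bbv_k}$ or by $\tfrac12$, using the three branches of the $\min$ defining $\eta_k$ (the $\frac{1}{L_1}\frac{\epsilon}{\norm{\bbv_k}^2}$ branch is exactly what controls the $L_1\norm{\bbv_k}\eta_k$ term). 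This shows the quadratic coefficient $\big(L_0+L_1\norm{\gr F(\bbx_k)}\big)\eta_k$ is at most a small constant, so the quadratic term is absorbed. Finally, writing $\langle\gr F(\bbx_k),\bbv_k\rangle = \tfrac12\norm{\bbv_k}^2 + \tfrac12\norm{\gr F(\bbx_k)}^2 - \tfrac12\norm{\bbv_k-\gr F(\bbx_k)}^2$ and discarding the (favorable) $\norm{\gr F(\bbx_k)}^2$ term gives \eqref{eq: descent} after collecting constants.

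For the \emph{bounded estimator's variance} \eqref{eq: bounded var}, I would follow the standard \Spider\ telescoping argument adapted to $(L_0,L_1)$-smoothness. Fix an epoch starting at index $k_0$ with $k_0 \equiv 0 \bmod q$, so $\bbv_{k_0} = \gr f(\bbx_{k_0};\ccalS_1)$ with $\E\norm{\bbv_{k_0}-\gr F(\bbx_{k_0})}^2 \le \sigma^2/S_1 = \epsilon^2/4$. For $k_0 < k \le k_0+q-1$, the recursive update gives, by the standard variance computation for a martingale-difference sum with a fresh independent batch $\ccalS_2$,
\begin{align}
\E\big[\norm{\bbv_k - \gr F(\bbx_k)}^2\big] \le \E\big[\norm{\bbv_{k-1}-\gr F(\bbx_{k-1})}^2\big] + \frac{1}{S_2}\E\big[\norm{\gr f(\bbx_k;\xi) - \gr f(\bbx_{k-1};\xi)}^2\big].
\end{align}
By Assumption~\ref{assumption: L0-L1}(i), the per-step increment is at most $\big(L_0 + L_1\norm{\gr F(\bbx_{k-1})}\big)^2 \norm{\bbx_k-\bbx_{k-1}}^2 / S_2$, and $\norm{\bbx_k-\bbx_{k-1}} = \eta_{k-1}\norm{\bbv_{k-1}}$. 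Here the three step-size branches again pay off: $\big(L_0 + L_1\norm{\gr F(\bbx_{k-1})}\big)\eta_{k-1}\norm{\bbv_{k-1}}$ is bounded by $O(\epsilon)$ after splitting $\norm{\gr F(\bbx_{k-1})}\le \norm{\bbv_{k-1}} + \norm{\bbv_{k-1}-\gr F(\bbx_{k-1})}$ and using $\eta_{k-1}\le \frac{1}{L_1}\frac{\epsilon}{\norm{\bbv_{k-1}}^2}$ for the $L_1\norm{\bbv_{k-1}}$ part and $\eta_{k-1}\le\frac{1}{L_0}\frac{\epsilon}{\norm{\bbv_{k-1}}}$ for the $L_0$ part. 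Hence each increment is $O(\epsilon^2/S_2)$, and since there are at most $q$ steps in the epoch, summing gives $\E\norm{\bbv_k-\gr F(\bbx_k)}^2 \le \epsilon^2/4 + q\cdot O(\epsilon^2/S_2)$. Plugging in $q = 2\frac{L_0}{L_1}\frac{\sigma}{\epsilon}$ and $S_2 = 48\frac{L_0}{L_1}\frac{\sigma}{\epsilon}$ makes the second term at most $\tfrac34\epsilon^2$, for a total of $\epsilon^2$; one has to be slightly careful about the noise term $\norm{\bbv_{k-1}-\gr F(\bbx_{k-1})}$ appearing inside the bound, but since it is itself controlled inductively by $\epsilon^2$ and $\epsilon \le L_0/(20L_1)$, it contributes only lower-order constants.

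The main obstacle I anticipate is the \emph{circularity} between the two claims: the descent inequality needs the step size to be valid (which uses $\norm{\bbx_{k+1}-\bbx_k}\le 1/L_1$, itself fine unconditionally) but, more importantly, the variance bound \eqref{eq: bounded var} is stated as an unconditional expectation while the step-size-control estimates above use pathwise bounds that implicitly rely on $\norm{\gr F(\bbx_{k-1})}$ not being much larger than $\norm{\bbv_{k-1}}$, i.e. on the variance already being small. The clean way around this is to run the two arguments by a joint induction over $k$ within each epoch: assume \eqref{eq: bounded var} holds up to $k-1$, use it to bound the increment at step $k$, and conclude \eqref{eq: bounded var} at $k$; the descent inequality \eqref{eq: descent} then holds at every $k$ as a consequence. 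Keeping track of the constants so that the accumulated increments land below $\epsilon^2$ (rather than merely $O(\epsilon^2)$) — and in particular handling the cross term $L_1\norm{\bbv_{k-1}-\gr F(\bbx_{k-1})}\eta_{k-1}\norm{\bbv_{k-1}}$ using $\epsilon \le L_0/(20L_1)$ — is the part that requires the most care, but it is bookkeeping rather than a conceptual difficulty.
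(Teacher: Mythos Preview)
Your plan has the right two pieces and mirrors the paper's structure, but two points deserve correction.

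\textbf{The descent inequality is purely deterministic; there is no circularity.} In the paper the descent inequality is proved under only $\eta_k\le\min\{1/(2L_0),\epsilon/(L_0\norm{\bbv_k})\}$ together with $\epsilon\le L_0/(2L_1)$; the third branch $\frac{1}{L_1}\frac{\epsilon}{\norm{\bbv_k}^2}$ is not used here at all, and neither is any information about $\norm{\bbv_k-\gr F(\bbx_k)}$. Concretely, $\eta_k\norm{\bbv_k}\le\epsilon/L_0\le 1/(2L_1)$ already gives $\norm{\bbx_{k+1}-\bbx_k}\le 1/L_1$ (no case split on $\norm{\bbv_k}\gtrless\epsilon$ is needed), and it also gives $L_1\eta_k\norm{\bbv_k}\le 1/2$, which is what kills the $L_1$-driven quadratic term. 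The term $\norm{\bbv_k-\gr F(\bbx_k)}^2$ is never bounded in this step; it simply survives on the right-hand side with coefficient $\tfrac{5}{8}\eta_k$ and is only later controlled in expectation via \eqref{eq: bounded var}. So the ``joint induction'' you propose to break circularity is unnecessary.

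\textbf{The variance argument cannot be closed by the induction hypothesis ``$e_{k-1}\le\epsilon^2$''.} Carrying your computation through (exactly as the paper does) gives, with $e_k=\E_{k_0}\norm{\bbv_k-\gr F(\bbx_k)}^2$,
\begin{align}
e_{k+1}\le \Big(1+\tfrac{4}{S_2}(L_1/L_0)^2\epsilon^2\Big)\,e_k+\tfrac{6}{S_2}\epsilon^2 \eqqcolon a\,e_k+b,
\end{align}
where the cross term $L_1\eta_k\norm{\bbv_k}\cdot\norm{\bbv_k-\gr F(\bbx_k)}$ is precisely what produces the multiplicative factor $a>1$. Because $a>1$, assuming $e_{k-1}\le\epsilon^2$ yields $e_k\le a\epsilon^2+b>\epsilon^2$, so the induction does not reproduce its own hypothesis. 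The paper's fix is to start from the reset $e_{k_0}\le\sigma^2/S_1=\epsilon^2/4$ (note the slack) and \emph{unroll} over the epoch: $e_k\le a^q e_{k_0}+b q a^q$, then show $a^q\le 2$ and $bqa^q\le\epsilon^2/2$ for the stated choices of $q$ and $S_2$, which lands at $e_k\le\epsilon^2$. This is the ``bookkeeping'' you allude to, but it is a genuine step: a straight induction on $\epsilon^2$ fails, and the slack at $k_0$ together with the bound on $a^q$ is what makes it close.
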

The first argument \eqref{eq: descent} established a \emph{descent inequality} which is often essential to convergence proof of non-convex methods. Note that the folklore decent lemma breaks down in the $(L_0,L_1)$--smooth setting. Moreover, \eqref{eq: bounded var} guarantees a bounded and small error in estimating the true gradient in all iterations when the learning rate is picked as prescribed by Theorem \ref{thm: stochastic spider}. The finite-sum case in Theorem \ref{thm: finite-sum spider} follows from the same logic.

\textbf{Lower bounds.} To argue the optimality of the convergence rates derived in Theorems \ref{thm: stochastic spider} and \ref{thm: finite-sum spider}, we need to characterize the lower bounds on the gradient complexity of finding stationary solutions under the same conditions. Let us first consider the finite-sum setting and Theorem \ref{thm: finite-sum spider}. Under the $L$--smoothness setting, it has been shown that for any $L>0$ and $n \leq \ccalO(\epsilon^{-4})$, there exist a function $F$ of the form \eqref{eq: finite-sum} such that
\begin{align} 
    \E \Big[\Vert \gr f_i(\bbx) - \gr f_i(\bby) \Vert^2\Big]^{1/2} 
    \leq 
    L \Vert \bbx - \bby \Vert,
\end{align}
for which finding an $\epsilon$--stationary solution costs at least $\Omega(\sqrt{n} \epsilon^{-2})$ stochastic gradient accesses (\cite{fang2018spider}, Theorem 3). Clearly, any such function is also $(L,0)$--smooth per Definition \ref{def: L0-L1} and satisfies Assumption \ref{assumption: L0-L1} (ii) with $L_0=L$ and $L_1=0$. As a result, the gradient complexity of {\Spider} in Theorem \ref{thm: finite-sum spider} is order-optimal for $n \leq \ccalO(\epsilon^{-4})$, that is, $\ccalO(\sqrt{n}\epsilon^{-2} + n) = \ccalO(\sqrt{n}\epsilon^{-2})$ matching the lower bound $\Omega(\sqrt{n} \epsilon^{-2})$.

Similarly for the $L$--smooth stochastic setting, \citep[Theorem~2]{arjevani2022lower} shows that for any $L>0$ and $\sigma > 0$, there exists a function $F$ of the form \eqref{eq: stochastic} with stochastic gradients $g(\cdot;\xi)$ such that
\begin{align}
    \E \big[ g(\bbx;\xi) \big] = \gr F(\bbx),
    \quad \quad
    \E \Big[ \Vert g(\bbx;\xi) - \gr F(\bbx) \Vert^2\Big] \leq \sigma^2,
    \quad \text{and} \quad
    \E \Big[\Vert g(\bbx;\xi) - g(\bby; \xi) \Vert^2 \Big]^{1/2} 
    \leq 
    L \Vert \bbx - \bby \Vert,
\end{align}
for which finding an $\epsilon$--stationary solution requires at least $\Omega(\sigma \epsilon^{-3} + \sigma^2 \epsilon^{-2})$ stochastic gradient queries. Therefore, there exist $(L,0)$--smooth functions satisfying Assumptions \ref{assumption: L0-L1} (i) and \ref{assumption: stch gr} that cost at least $\Omega(\sigma \epsilon^{-3} + \sigma^2 \epsilon^{-2})$ stochastic gradient accesses making the {\Spider}'s rate in Theorem \ref{thm: stochastic spider} order-optimal.

\section{Experiments}

In this section, we empirically show the convergence behaviors of different variance reduction methods on various image classification tasks with neural networks. We have provided the code of our implementation in the following GitHub link\footnote{\href{https://github.com/haochuan-mit/varaince-reduced-clipping-for-non-convex-optimization}{\texttt{github.com/haochuan-mit/varaince-reduced-clipping-for-non-convex-optimization}}}.

\textbf{Models, Datases and Benchmarks:} We train three different neural network models: a three-layer fully connected network (FCN), ResNet-20 and ResNet-56 \citep{he2016deep} on three standard datasets for image classification: MNIST \citep{LeCunCortesBurgesMNIST}, CIFAR10 and CIFAR100. In every experiment, we compare $(L_0,L_1)$--{\Spider} against the relevant benchmarks approaches, namely \SGD, {\SVRG} \citep{reddi2016stochastic}, {\SARAH} \citep{li2021zerosarah}, and {\Spider} \citep{fang2018spider}. Since our main focus is to fairly compare different variance reduction methods, we do not try to achieve state-of-the-art accuracies using tricks like momentum, weight decay, learning rate reduction, etc. We defer further implementation details to the appendix.

\begin{figure*}[h!]
     \centering
     \begin{subfigure}[b]{0.32\textwidth}
         \centering
         \includegraphics[width=\textwidth]{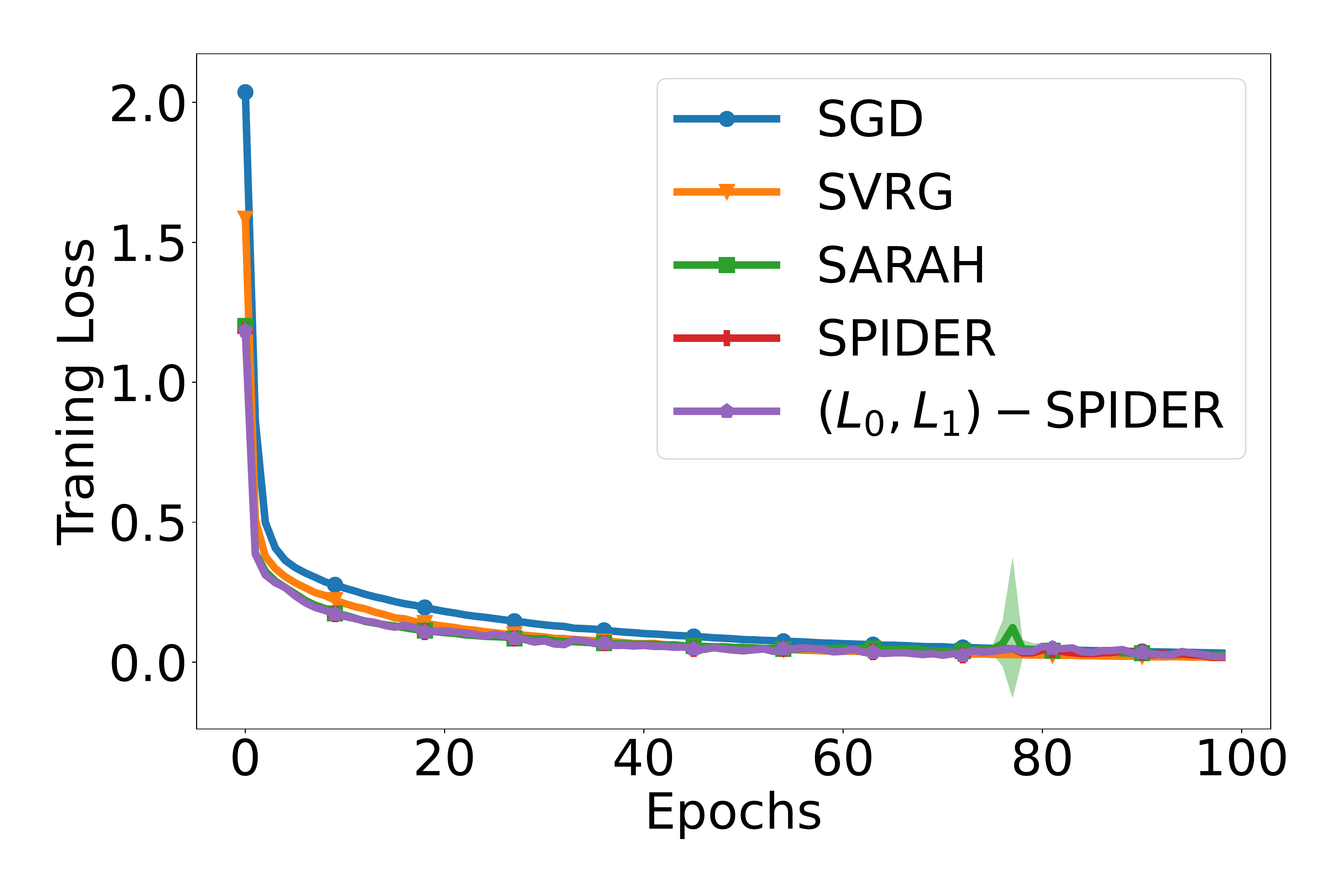}
         \includegraphics[width=\textwidth]{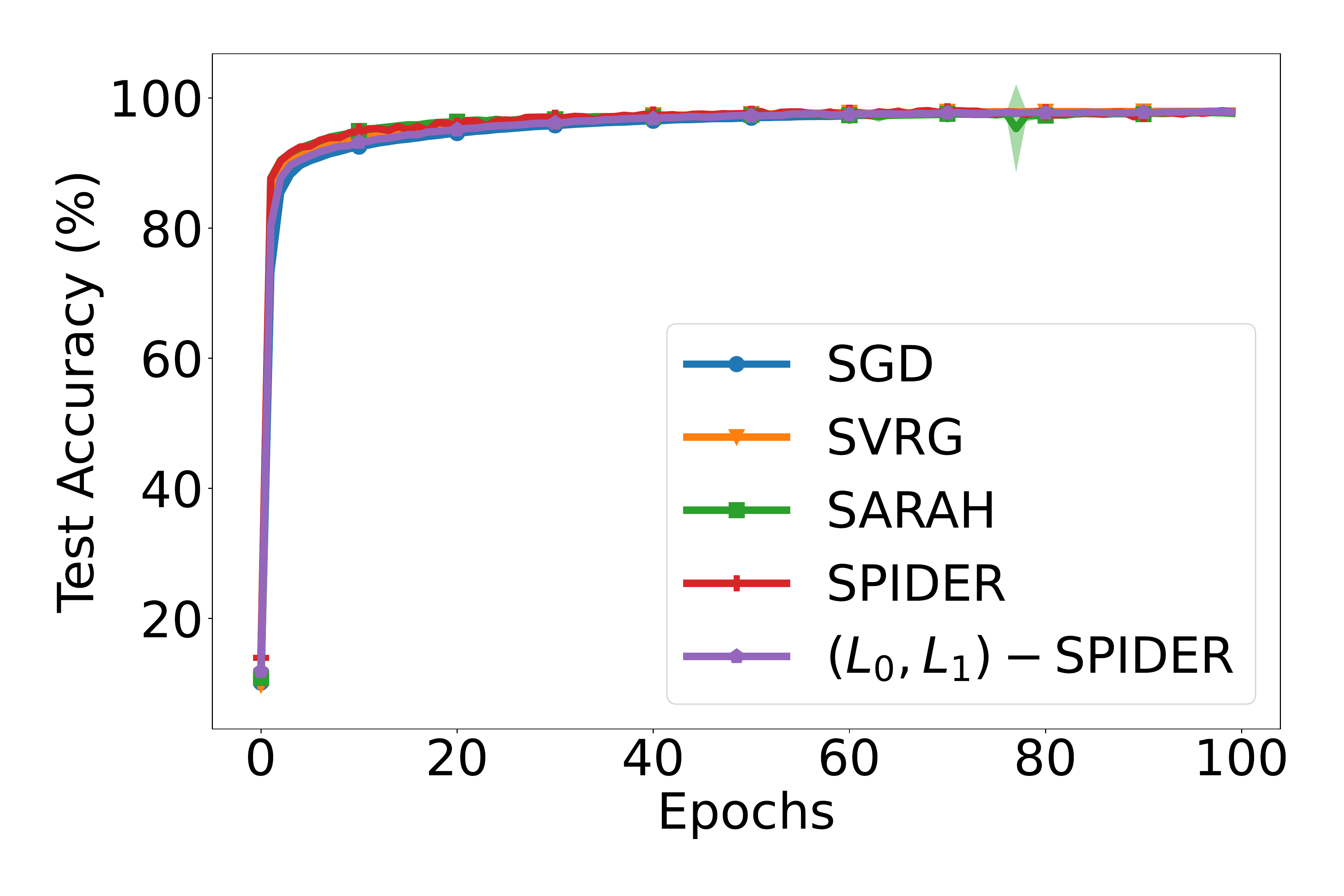}
         \caption{FCN on MNIST}
         \label{fig:fcn_mnist}
     \end{subfigure}
     \hfill
     \begin{subfigure}[b]{0.32\textwidth}
         \centering
         \includegraphics[width=\textwidth]{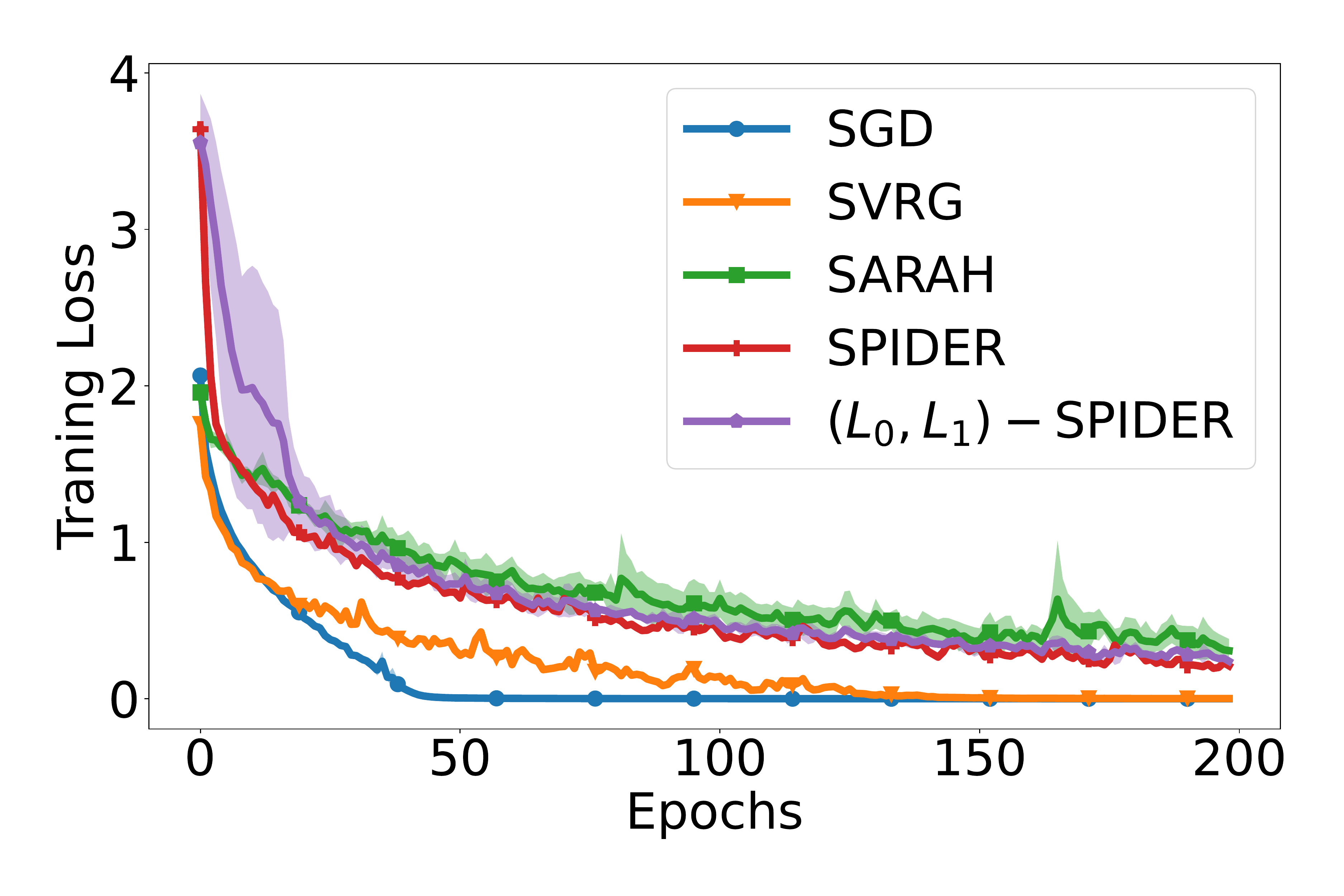}

         \includegraphics[width=\textwidth]{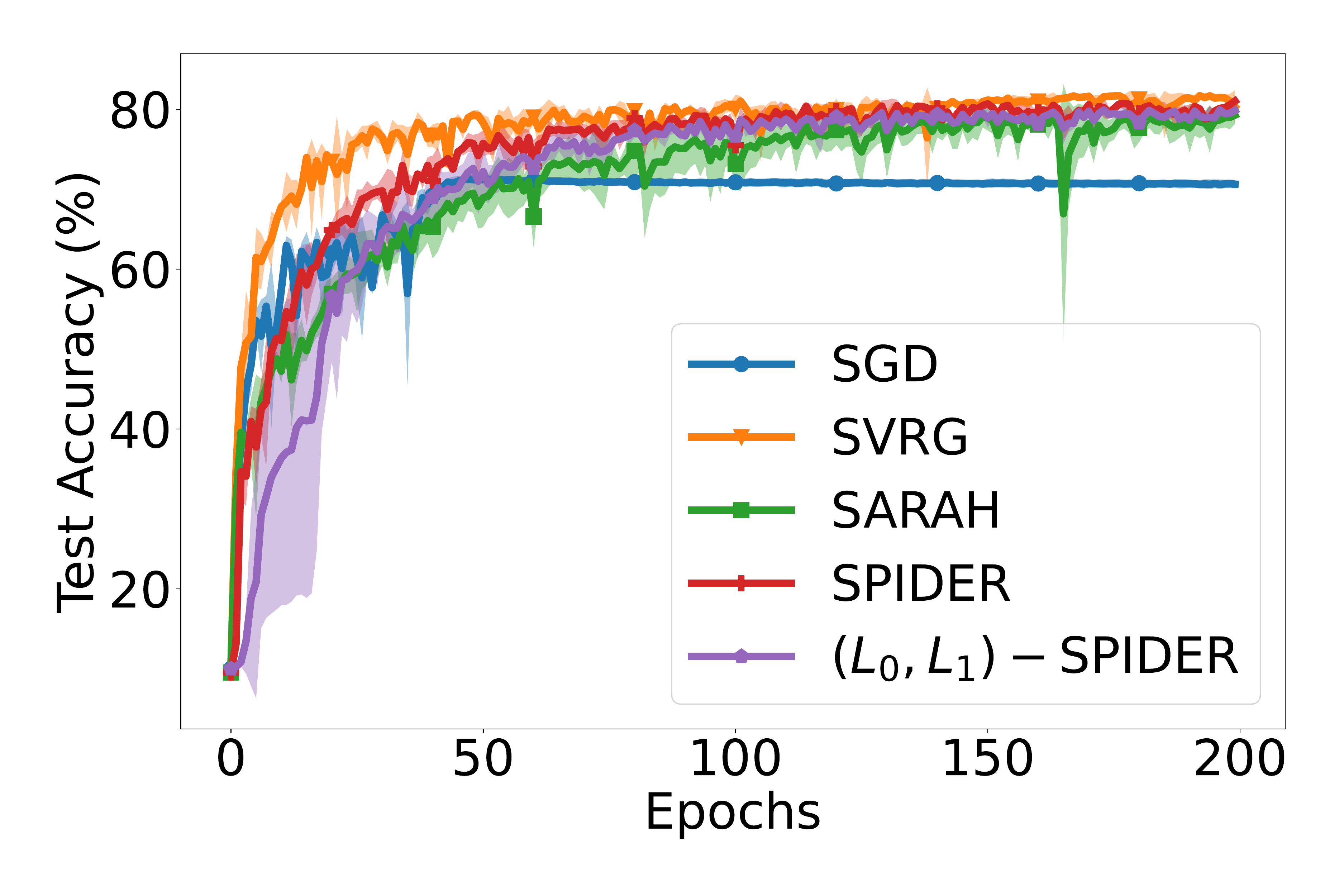}

         \caption{ResNet-20 on CIFAR10}
         \label{fig:resnet20_cifar10}
     \end{subfigure}
     \hfill
     \begin{subfigure}[b]{0.32\textwidth}
         \centering
         \includegraphics[width=\textwidth]{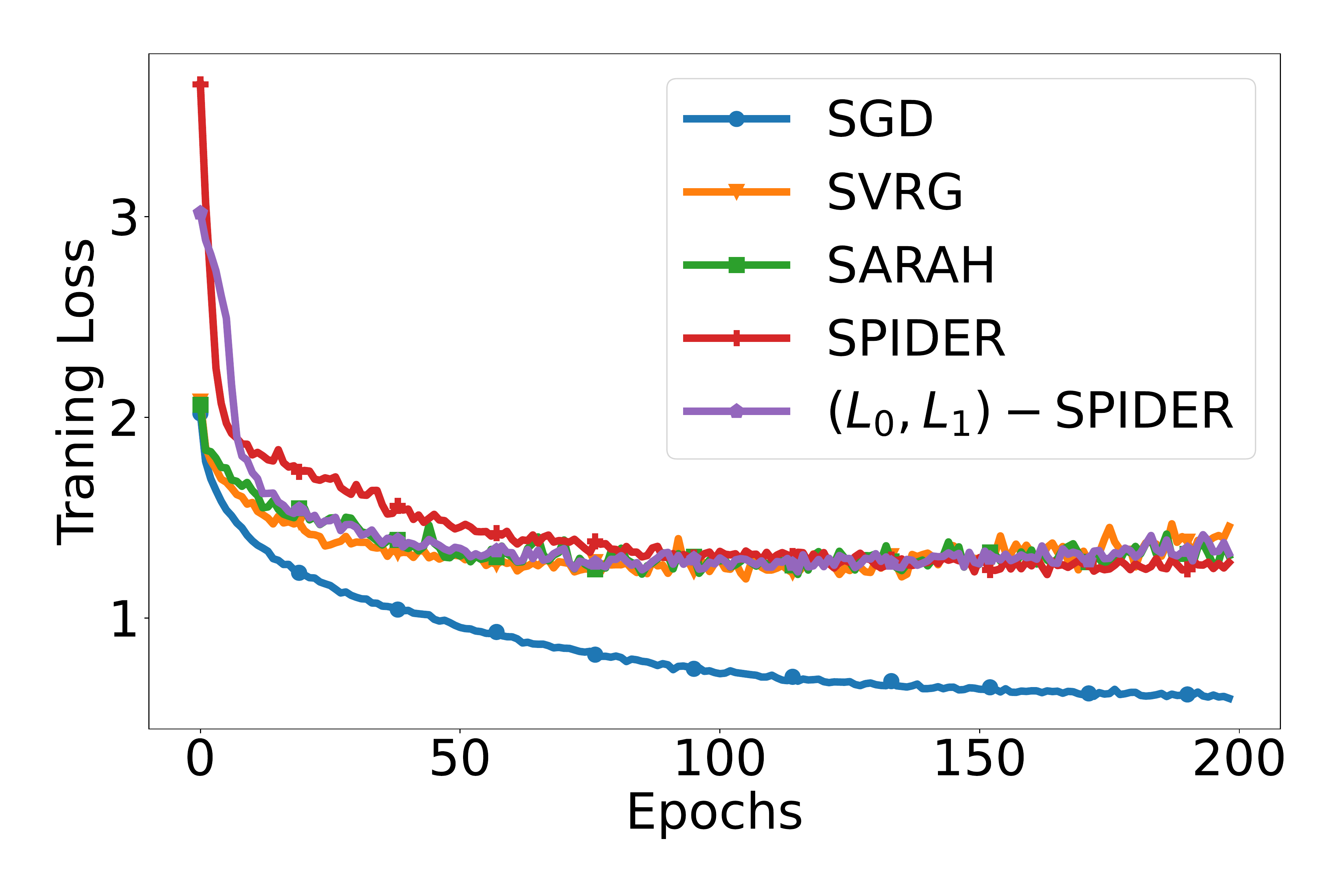}
         
         \includegraphics[width=\textwidth]{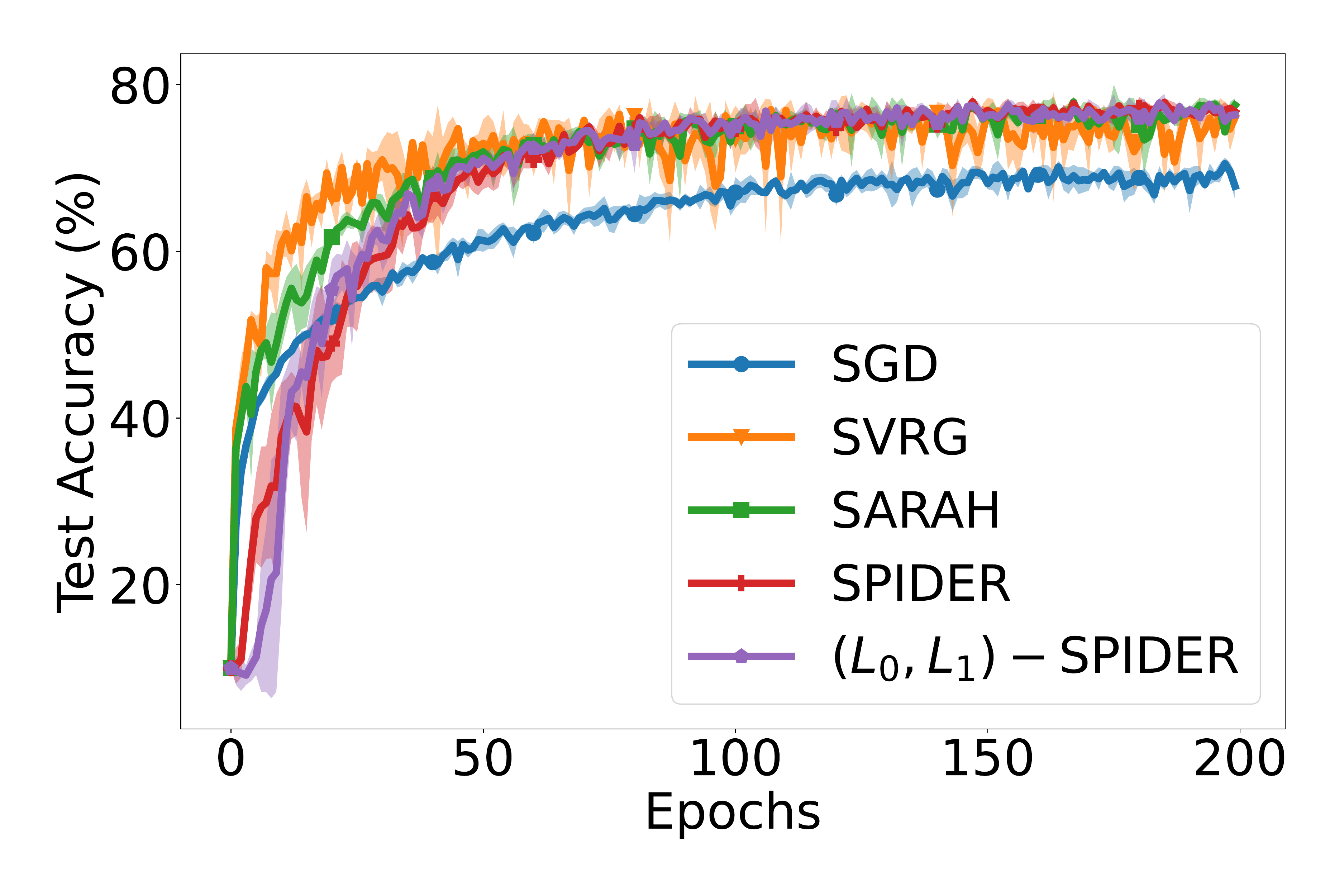}

         \caption{ResNet-20 on noisy CIFAR10}
         \label{fig:resnet20_cifar10_noisy}
     \end{subfigure}
        \\
     \begin{subfigure}[b]{0.32\textwidth}
         \centering
         \includegraphics[width=\textwidth]{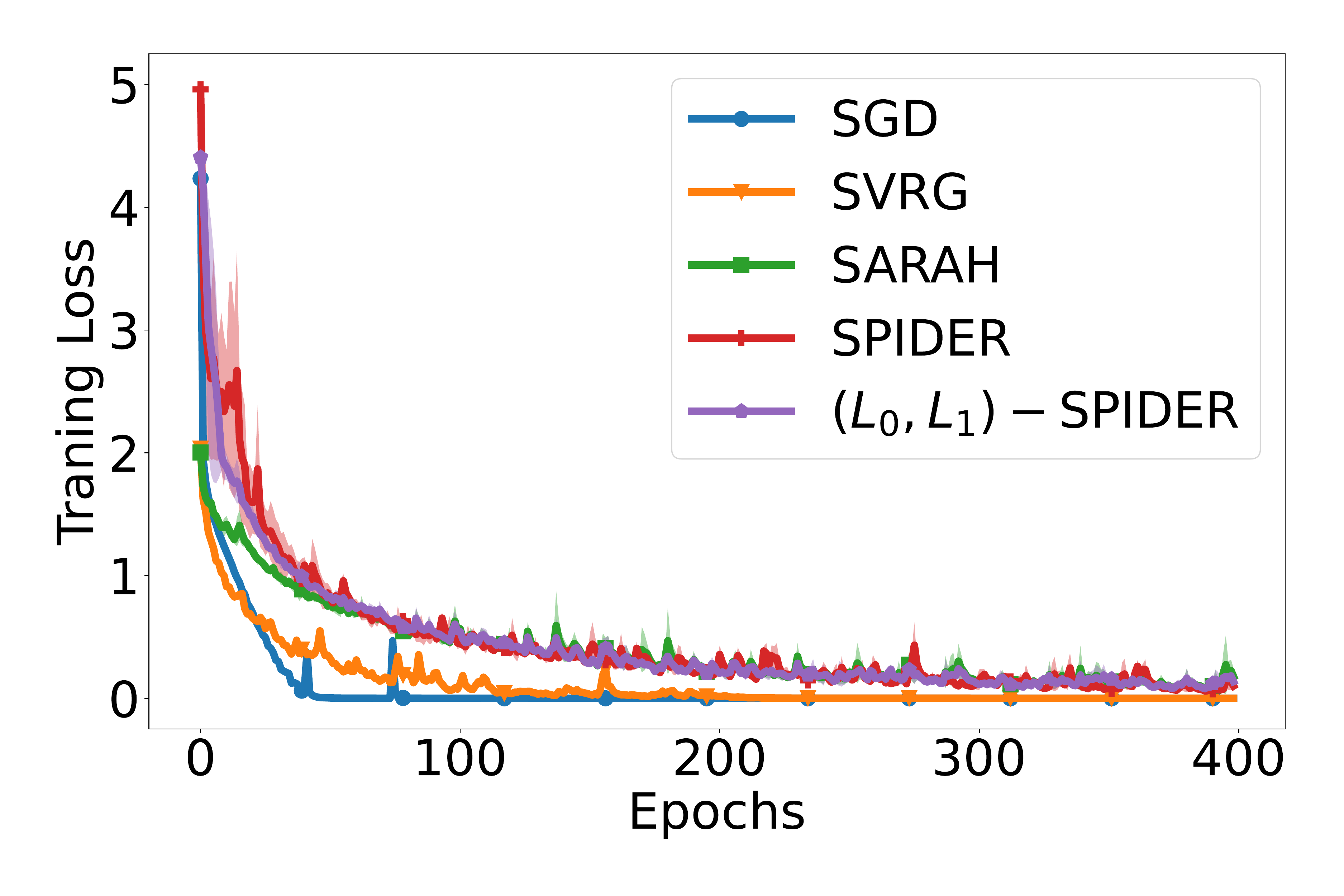}
         \includegraphics[width=\textwidth]{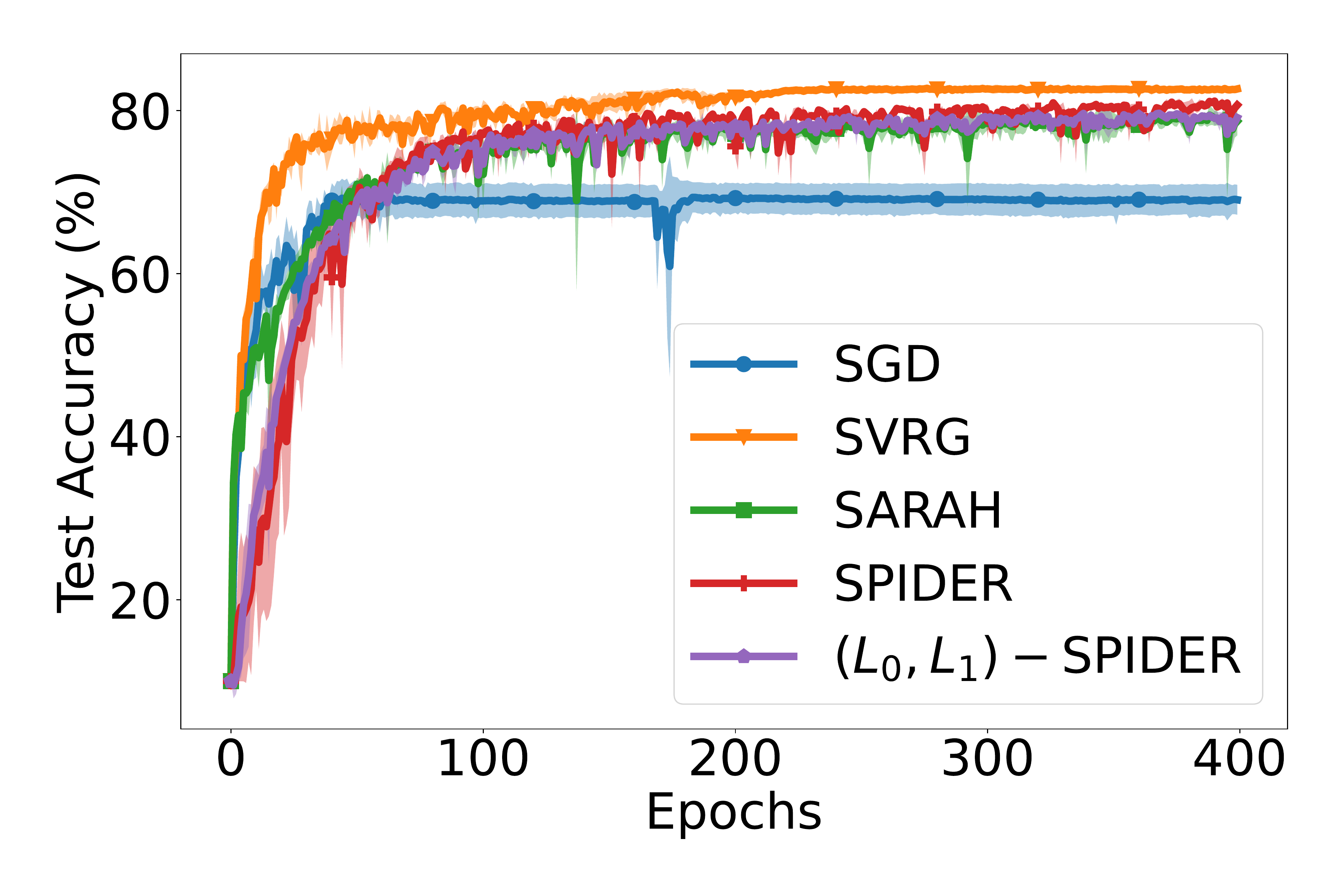}

         \caption{ResNet-56 on CIFAR10}
         \label{fig:resnet56_cifar10}
     \end{subfigure}
     \hfill
     \begin{subfigure}[b]{0.32\textwidth}
         \centering
         \includegraphics[width=\textwidth]{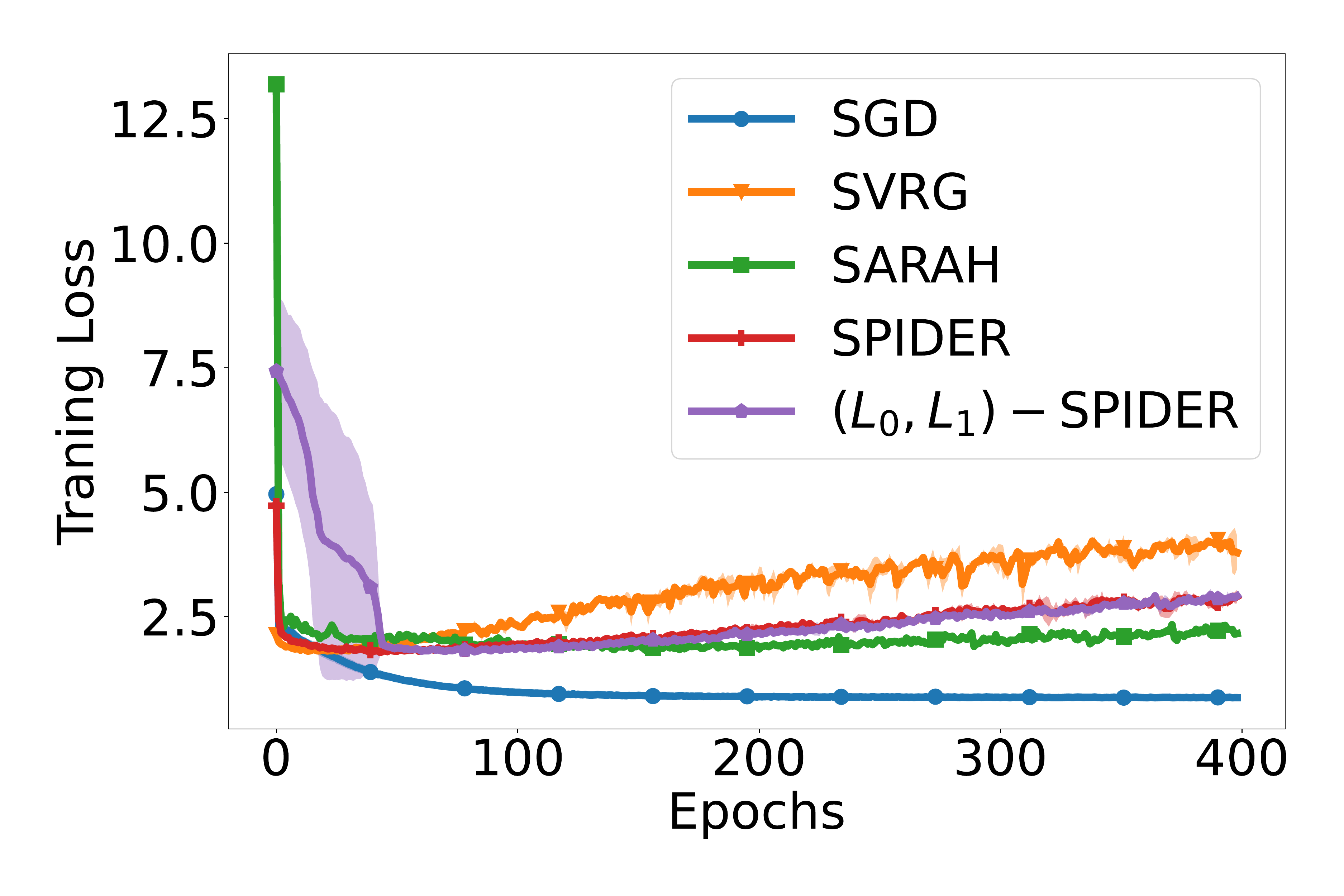}
         
         \includegraphics[width=\textwidth]{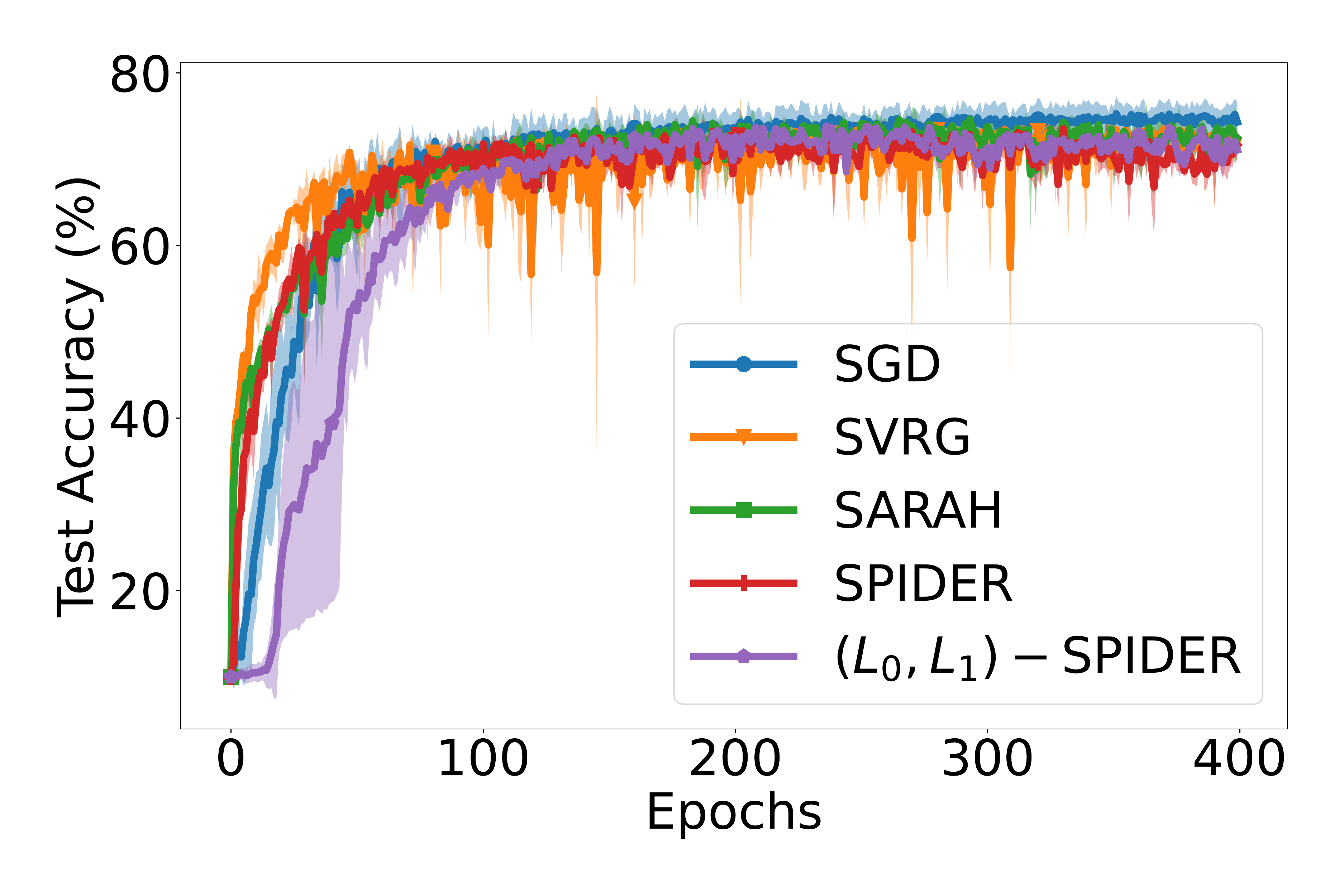}

         \caption{ResNet-56 on noisy CIFAR10}
         \label{fig:resnet56_cifar10_noisy}
     \end{subfigure}
     \hfill
          \begin{subfigure}[b]{0.32\textwidth}
         \centering
         \includegraphics[width=\textwidth]{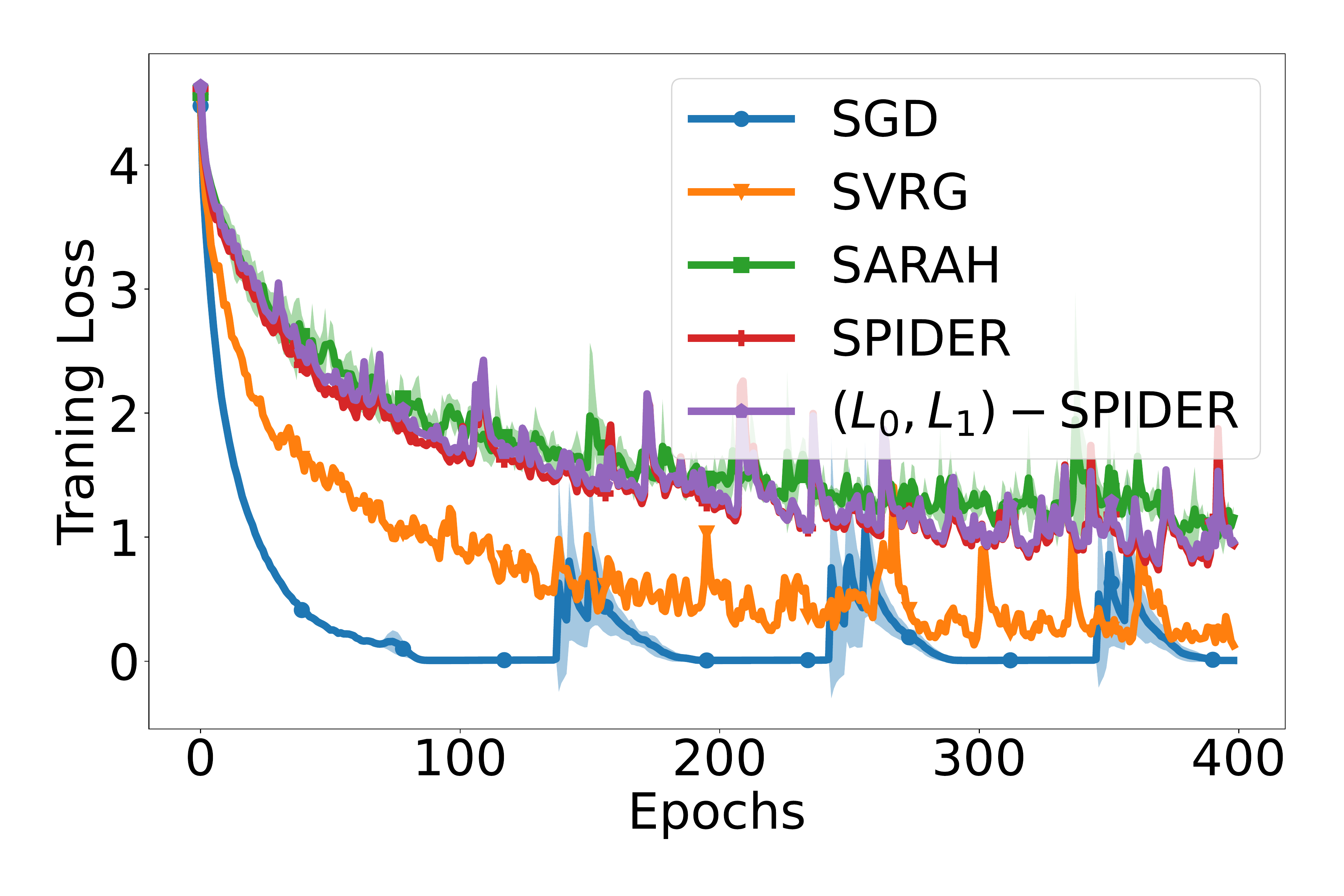}
         
         \includegraphics[width=\textwidth]{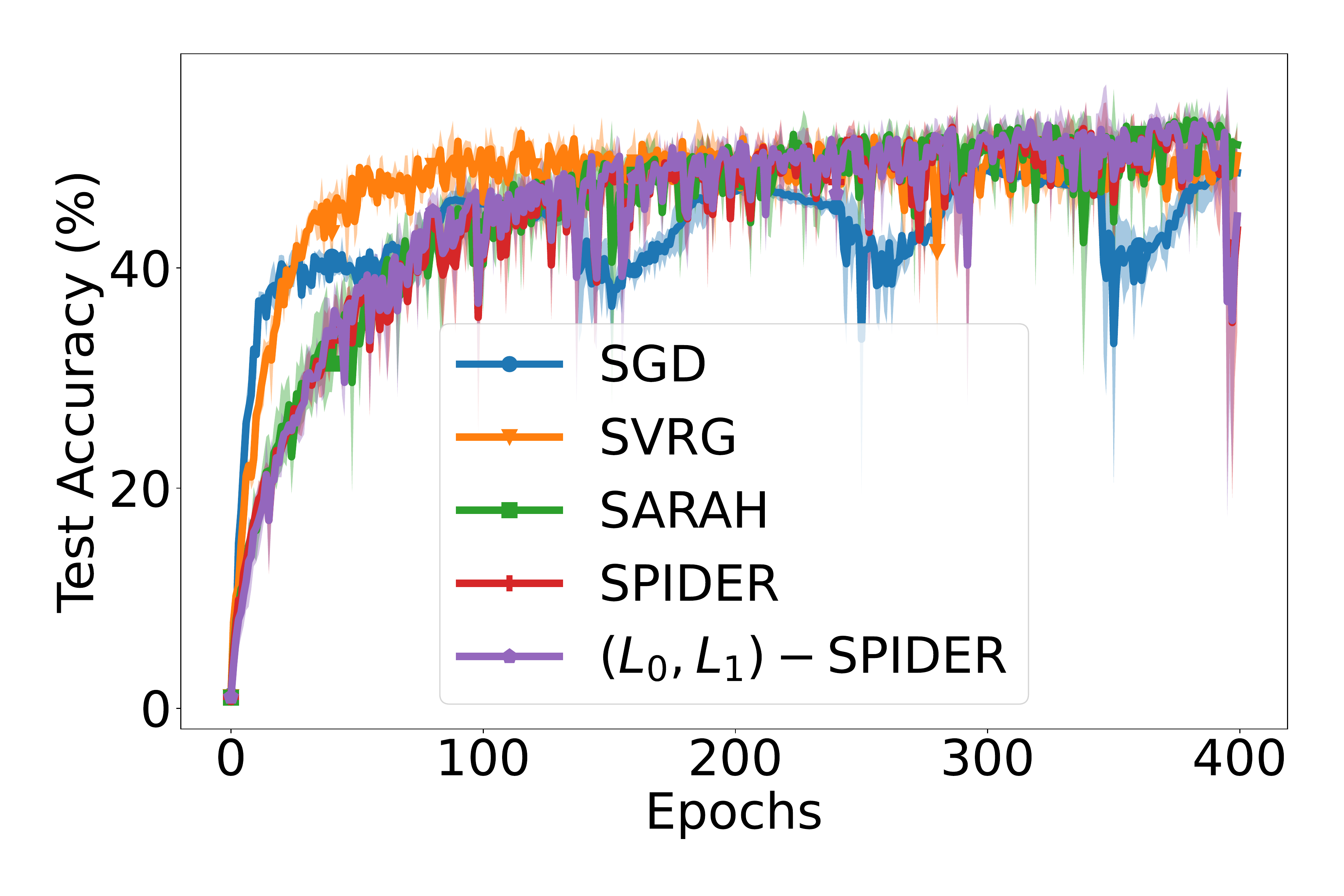}

         \caption{ResNet-20 on CIFAR100}
         \label{fig:resnet20_cifar100}
     \end{subfigure}
        \caption{Image classification tasks with neural networks.}
        \label{fig: all plots}
\end{figure*}

As demonstrated in Figure \ref{fig: all plots}(\subref{fig:fcn_mnist}), for the simple task of training FCN on MNIST, all methods achieve fairly high test accuracy and show almost identical performances. Figure \ref{fig: all plots}(\subref{fig:resnet20_cifar10}) provides training and test accuracy curves for ResNet-20 trained on clean (noiseless) CIFAR10. Although the test accuracy of {\SGD} is lower than that of variance reduction methods, it converges much faster during the training. This is in accordance with the conclusions in \citep{defazio2019ineffectiveness} that variance reduction methods are not very efficient for deep learning tasks. We highlight that our proposed $(L_0,L_1)$--{\Spider} achieves similar performance to the original {\Spider}, and the additional $\mathcal{O}(\|\bbv_k\|^{-2})$ term in the stepsize does not slow down the training much except in the initial steps. Quantitatively, $(L_0,L_1)$--{\Spider} attains $79.94\%$ test accuracy which is comparable to other variance-reduced benchmarks with $81.72\%$, $79.28\%$ and $81.03\%$ test accuracy. We provide further quantitative results in the appendix.



We further train the same model on noisy CIFAR10. That is, we add Gaussian noise with zero-mean and unit-variance to the images and change the label with probability $0.1$ to all possible categories uniformly. As Figure \ref{fig: all plots}(\subref{fig:resnet20_cifar10_noisy}) shows, although {\SVRG} is still slightly faster than other variance-reduced methods, both {\Spider} and $(L_0,L_1)$--{\Spider} achieve better test accuracy compared to the noiseless experiments in Figure \ref{fig: all plots}(\subref{fig:resnet20_cifar10}) which underscore the potential robustness of {\Spider} to noise.


Next, we train a larger model, that is ResNet-56 on both noiseless and noisy CIFAR10 dataset. As depicted in Figures \ref{fig: all plots}(\subref{fig:resnet56_cifar10}) and \ref{fig: all plots}(\subref{fig:resnet56_cifar10_noisy}), $(L_0,L_1)$--{\Spider} achieves similar performance to the other variance-reduced benchmarks. They also share other aspects in their convergence with those of the smaller model ResNet-20 in Figures \ref{fig: all plots}(\subref{fig:resnet20_cifar10}) and \ref{fig: all plots}(\subref{fig:resnet20_cifar10_noisy}). Finally, we use a more complicated task  with CIFAR100 dataset and train ResNet-20 with results demonstrated in Figure \ref{fig: all plots}(\subref{fig:resnet20_cifar100}).

\section{Conclusion}
Gradient clipping has been extensively used in training deep neural networks for particular applications such as language models. The training trajectory of gradient clipping for such models contradicts the traditional $L$--smoothness assumption which calls for relaxing this premise in non-convex optimization. The more relaxed $(L_0,L_1)$--smooth notion has laid out a theoretical framework to study the performance and complexity of gradient clipping methods. In this work, we improved the gradient complexity of clipping methods under this broader setting by employing a variance reduction technique called {\Spider}. We showed that {\Spider} with a carefully picked learning rate is able to achieve the order-optimal gradient complexity rates in finding first-order stationary points. It however remains to study how this method can be boosted to escape from saddle points and find  \emph{second-order} stationary solutions for $(L_0,L_1)$--smooth non-convex objectives. Similar to the first-order literature, most of the existing works on the second-order methods highly utilize the restrictive Hessian Lipschitz assumption which most likely breaks in the $(L_0,L_1)$--smooth setting. We leave this direction for future work.

\section*{Acknowledgments}
This work was supported, in part, by the MIT-IBM Watson
AI Lab and ONR Grant N00014-20-1-2394.

\bibliographystyle{plainnat}
\bibliography{ref}

\clearpage
\newpage

\section*{Appendices}
\appendix
\section{Experiment Setup} 

Here, we provide further details about our experiments along with quantitative illustrations of the results laid out in Figure \ref{fig: all plots} from the main paper.

As discussed in the main paper, we train three different neural networks, which are a three-layer fully connected network (FCN), ResNet-20 and ResNet-56 on several datasets, that are MNIST, CIFAR10 and CIFAR100. In each experiment, we compare the performance of {$(L_0,L_1)$--\Spider} against benchmarks including {\SGD}, {\SVRG}, {\SARAH} and {\Spider}.  For each method, its hyper-parameters like learning rate are tuned over a range to achieve the best possible test accuracy. In addition to the datasets mentioned above, we conduct experiments on their noisy versions. In particular, we train ResNet-20 on a noisy CIFAR10 dataset. Here, we add Gaussian noise to the images from CIFAR10 dataset with variance $1$ in $\ell_2$ norm and also change the label with probability $0.1$ to all possible categories uniformly. Moreover, we double the noise scale on CIFAR10 and  train ResNet-56 model. Figure \ref{fig: all plots} in the main paper demonstrates our results for the experiments described above where for each setting, we conduct three runs and show both mean and standard deviation. Furthermore, we provide quantitative implications from the same experiments in the following table. To obtain each entry in this table, we pick the best test accuracy along each of the three trajectories and report their average, as shown in Table~\ref{tab:my_label}. In the supplementary materials, we provide the code of our experiments which is modified from that of \citep{horvath2020adaptivity}.

\begin{table}[H]
\centering
\caption{Test accuracy ($\%$) corresponding to experiments in Figure \ref{fig: all plots}. $^*$Noisy data.
}
\begin{tabular}{lccccccc}\toprule
 & FCN & \multicolumn{3}{c}{ResNet-20} & \multicolumn{2}{c}{ResNet-56} 
\\\cmidrule(lr){3-5}\cmidrule(lr){6-7}
       Method   & MNIST &  CIFAR10  &  CIFAR10$^*$  & CIFAR100   &  CIFAR10  &  CIFAR10$^*$ \\\midrule
{\SGD}    & $97.83$ & $71.31$ & $70.52$ & $49.02$ & $69.36$ & $75.26$ \\
{\SVRG} & $97.98$ & $81.72$ & $77.00$ & $52.24$  &  $82.76$ & $73.49$ \\
{\SARAH} & $97.75$ & $79.28$ & $77.94$ & $53.45$ & $79.45$& $74.62$\\
{\Spider} & $98.04$ & $81.03$ &$77.95$  & $53.33$ & $81.2$ & $73.53$\\
{$(L_0,L_1)$--\Spider}   & $97.91$ & $79.94$ & $77.82$ & $53.39$ & $79.69$ &  $73.83$\\\bottomrule
\end{tabular}
\label{tab:my_label}
\end{table}

Next, we provide the detailed hyper-parameters corresponding to all the curves in Figure~\ref{fig: all plots}. First, the mini-batch size for all the methods and datasets is fixed as $1024$. For \SGD, \SVRG, and \SARAH, the only hyper-parameter to tune is the learning rate $\eta_0$. However, for \Spider, the stepsize at iteration $k$ is $\eta_k=\eta_0\min\{1, c_1/ \| \bbv_k \|\}$ which is determined by the learning rate $\eta_0$ and the clipping parameter $c_1$. For $(L_0,L_1)$--\Spider, the stepsize is $\eta_k=\eta_0\min\{1,c_1/\| \bbv_k \|,c_2/\| \bbv_k \|^2\}$ which is governed by the learning rate $\eta_0$ and two clipping parameters $c_1$ and $c_2$. All of the hyper-parameters are tuned to obtain the best test accuracy. We show the tuned learning rates for all experiments in Table~\ref{tab:lr}. Moreover, Table \ref{tab:clip} provides the tuned clipping parameters $c_1$ for {\Spider} and $(c_1,c_2)$ for $(L_0,L_1)$--{\Spider} methods. Note that we do not use techniques such as momentum or weight decay for any of our experiments except in training ResNet-20 on CIFAR100, for which we use a momentum parameter of $0.9$ and weight decay parameter of $10^{-4}$ to get a better test accuracy.

\begin{table}[H]
\centering
\begin{tabular}{lccccccc}\toprule
 & FCN & \multicolumn{3}{c}{ResNet-20} & \multicolumn{2}{c}{ResNet-56} 
\\\cmidrule(lr){3-5}\cmidrule(lr){6-7}
       Method   & MNIST &  CIFAR10  &  CIFAR10$^*$  & CIFAR100   &  CIFAR10  &  CIFAR10$^*$ \\\midrule
{\SGD}    & $0.1$ & $0.2$ & $0.025$ & $0.0125$ & $0.2$ & $1.6$ \\
{\SVRG} & $0.05$ & $0.2$ & $0.05$ & $0.8$  &  $0.2$ & $0.025$ \\
{\SARAH} & $0.025$ & $0.05$ & $0.025$ & $0.025$ & $0.0125$& $0.0125$\\
{\Spider} & $0.0125$ & $0.05$ &$0.05$  & $0.0125$ & $0.025$ & $0.0125$\\
{$(L_0,L_1)$--\Spider}   & $0.0125$ & $0.025$ & $0.025$ & $0.05$ & $0.0125$ &  $0.00625$\\\bottomrule
\end{tabular}
\caption{Learning rates corresponding to experiments in Figure \ref{fig: all plots}. $^*$Noisy data.
}
\label{tab:lr}
\end{table}

\begin{table}[H]
\centering
\begin{tabular}{lccccccc}\toprule
 & FCN & \multicolumn{3}{c}{ResNet-20} & \multicolumn{2}{c}{ResNet-56} 
\\\cmidrule(lr){3-5}\cmidrule(lr){6-7}
       Method   & MNIST &  CIFAR10  &  CIFAR10$^*$  & CIFAR100   &  CIFAR10  &  CIFAR10$^*$ \\\midrule
{\Spider} & $0.5$ & $1$ &$0.5$  & $16$ & $8$ & $16$\\
{$(L_0,L_1)$--\Spider}   & $(0.5,0.5)$ & $(2,2)$ & $(2,2)$ & $(16,128)$ & $(8,128)$ &  $(16,128)$\\\bottomrule
\end{tabular}
\caption{Clipping parameters corresponding to experiments in Figure \ref{fig: all plots}. $^*$Noisy data.
}
\label{tab:clip}
\end{table}

\section{Proof of Theorem \ref{thm: stochastic spider}} \label{sec: proof thm stochastic spider}

We first state and prove an essential lemma, a.k.a. the descent lemma, which is the common step in showing the convergence of non-convex optimization algorithms. Throughout this section, we use the notation $\E_k[\cdot]$ as the expectation conditioned on the history $\ccalF_k$ containing $\{\bbx_{0:k}, \bbv_{0:k-1} \}$.

\begin{lemma}[Descent Lemma] \label{lemma: descent}
Assume that $F$ is $(L_0,L_1)$--smooth according to Definition \ref{def: L0-L1} and consider the update $\bbx_{k+1} = \bbx_k - \eta_k \bbv_k$. Then, for $\epsilon \leq \frac{L_0}{2L_1}$ and stepsize 
\begin{align}
    \eta_k
    \leq
    \min\left\{ \frac{1}{2 L_0}, \frac{\epsilon}{L_0 \norm{\bbv_k}}\right\},
\end{align}
we have that for any iteration $k=0,1,\cdots$,
\begin{align}
    F(\bbx_{k+1})
    &\leq
    F(\bbx_k) - 
    \frac{1}{8} \eta_k \norm{\bbv_k}^2
    +
    \frac{5}{8} \eta_k \norm{\bbv_k - \gr F(\bbx_k)}^2.
\end{align}
\end{lemma}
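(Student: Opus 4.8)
The plan is to start from the integral form of the gradient difference along the segment between $\bbx_k$ and $\bbx_{k+1}$, exploiting that the stepsize guarantees $\norm{\bbx_{k+1}-\bbx_k} = \eta_k\norm{\bbv_k} \le 1/L_1$, so that Definition \ref{def: L0-L1} applies on the whole segment. Writing $F(\bbx_{k+1}) = F(\bbx_k) + \langle \gr F(\bbx_k), \bbx_{k+1}-\bbx_k\rangle + \int_0^1 \langle \gr F(\bbx_k + t(\bbx_{k+1}-\bbx_k)) - \gr F(\bbx_k), \bbx_{k+1}-\bbx_k\rangle\, \diff t$, I would bound the last term by $(L_0 + L_1\norm{\gr F(\bbx_k)})\,\tfrac12\norm{\bbx_{k+1}-\bbx_k}^2$, which is the $(L_0,L_1)$ analogue of the quadratic upper bound. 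Plugging in $\bbx_{k+1}-\bbx_k = -\eta_k\bbv_k$ gives
\begin{align}
    F(\bbx_{k+1}) \le F(\bbx_k) - \eta_k \langle \gr F(\bbx_k), \bbv_k\rangle + \tfrac12\big(L_0 + L_1\norm{\gr F(\bbx_k)}\big)\eta_k^2\norm{\bbv_k}^2.
\end{align}

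The next step is to control the two problematic pieces using the two relevant parts of the stepsize cap. From $\eta_k \le \tfrac{1}{2L_0}$ we get $\tfrac12 L_0 \eta_k^2 \norm{\bbv_k}^2 \le \tfrac14 \eta_k\norm{\bbv_k}^2$. For the term $\tfrac12 L_1\norm{\gr F(\bbx_k)}\eta_k^2\norm{\bbv_k}^2$, I would use $\eta_k\norm{\bbv_k} \le \epsilon/L_0$ together with the small-$\epsilon$ hypothesis; but $\norm{\gr F(\bbx_k)}$ is not directly bounded, so I would instead write $\norm{\gr F(\bbx_k)} \le \norm{\bbv_k} + \norm{\bbv_k - \gr F(\bbx_k)}$ and split accordingly, or — cleaner — absorb one factor $\eta_k\norm{\bbv_k}\le \epsilon/L_0$ to turn this term into $\tfrac{L_1\epsilon}{2L_0}\norm{\gr F(\bbx_k)}\eta_k\norm{\bbv_k}$, then use $\epsilon\le L_0/(2L_1)$ to bound $L_1\epsilon/L_0 \le 1/2$, yielding $\le \tfrac14\norm{\gr F(\bbx_k)}\eta_k\norm{\bbv_k}$. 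This last quantity is then handled by Young's inequality (or simply $ab \le \tfrac12 a^2 + \tfrac12 b^2$ applied to $\norm{\gr F(\bbx_k)}$ and $\norm{\bbv_k}$), and finally $\norm{\gr F(\bbx_k)}^2 \le 2\norm{\bbv_k}^2 + 2\norm{\bbv_k - \gr F(\bbx_k)}^2$.

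For the linear term $-\eta_k\langle \gr F(\bbx_k),\bbv_k\rangle$, I would use the identity $\langle \gr F(\bbx_k),\bbv_k\rangle = \tfrac12\norm{\bbv_k}^2 + \tfrac12\norm{\gr F(\bbx_k)}^2 - \tfrac12\norm{\bbv_k - \gr F(\bbx_k)}^2$, so that $-\eta_k\langle \gr F(\bbx_k),\bbv_k\rangle \le -\tfrac12\eta_k\norm{\bbv_k}^2 + \tfrac12\eta_k\norm{\bbv_k - \gr F(\bbx_k)}^2$. Collecting all contributions — $-\tfrac12$ from the linear term, $+\tfrac14$ from the $L_0$ curvature term, and the remaining fraction from the $L_1$ curvature term after it is converted into multiples of $\norm{\bbv_k}^2$ and $\norm{\bbv_k-\gr F(\bbx_k)}^2$ — should leave a negative coefficient on $\eta_k\norm{\bbv_k}^2$ (targeting $-\tfrac18$) and a positive coefficient on $\eta_k\norm{\bbv_k-\gr F(\bbx_k)}^2$ (targeting $\tfrac58$). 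It will be worth double-checking the exact constants; the statement's $\tfrac18$ and $\tfrac58$ suggest the bookkeeping is calibrated so the $L_1$ term contributes at most $\tfrac18\eta_k\norm{\bbv_k}^2 + \tfrac18\eta_k\norm{\bbv_k-\gr F(\bbx_k)}^2$ after using $\epsilon \le L_0/(2L_1)$.

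\textbf{Main obstacle.} The delicate point is the $L_1\norm{\gr F(\bbx_k)}$ curvature term: unlike the classical $L$-smooth case it cannot be bounded by a constant, so the argument must borrow a factor of $\eta_k\norm{\bbv_k}$ (hence one power of $\epsilon/L_0$) to tame it, and then carefully trade $\norm{\gr F(\bbx_k)}$ against $\norm{\bbv_k}$ plus the estimator error. Getting these trade-offs to land exactly on the advertised $-\tfrac18$ / $+\tfrac58$ constants — rather than something weaker — is the real content; everything else is the standard descent computation adapted to the relaxed smoothness inequality valid on the (short) segment guaranteed by the clipped stepsize.
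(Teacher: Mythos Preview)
Your plan is structurally the same as the paper's: obtain the $(L_0,L_1)$ quadratic upper bound from the integral form on the short segment (valid since $\eta_k\norm{\bbv_k}\le \epsilon/L_0\le 1/(2L_1)$), expand the inner product via polarization, and use the two stepsize caps to tame the $L_0$ and $L_1$ curvature terms. That is exactly the paper's route.

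There is, however, a concrete bookkeeping gap in the version you sketch. You bound the linear term by
\[
-\eta_k\langle \gr F(\bbx_k),\bbv_k\rangle \le -\tfrac12\eta_k\norm{\bbv_k}^2 + \tfrac12\eta_k\norm{\bbv_k-\gr F(\bbx_k)}^2,
\]
i.e.\ you \emph{discard} the $-\tfrac{\eta_k}{2}\norm{\gr F(\bbx_k)}^2$ piece of the polarization identity. With that term gone, your ``cleaner'' handling of the $L_1$ curvature (absorb one factor $L_1\eta_k\norm{\bbv_k}\le\tfrac12$, then Young, then $\norm{\gr F}^2\le 2\norm{\bbv_k}^2+2\norm{\bbv_k-\gr F}^2$) contributes $\tfrac38\eta_k\norm{\bbv_k}^2+\tfrac14\eta_k\norm{\bbv_k-\gr F}^2$. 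Adding the $-\tfrac12$ from the linear term and $+\tfrac14$ from the $L_0$ term gives a coefficient of $+\tfrac18$ on $\eta_k\norm{\bbv_k}^2$, the wrong sign; your first option (triangle-inequality split of $\norm{\gr F}$) gives the same $+\tfrac18$. So neither route, as written, reaches $-\tfrac18$.

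The fix is simply not to throw away $-\tfrac{\eta_k}{2}\norm{\gr F(\bbx_k)}^2$. After your Young step the $L_1$ term produces $+\tfrac18\eta_k\norm{\gr F(\bbx_k)}^2$, which is dominated by the retained $-\tfrac12\eta_k\norm{\gr F(\bbx_k)}^2$; dropping the (now negative) remainder leaves exactly
\[
F(\bbx_{k+1})\le F(\bbx_k)-\tfrac18\eta_k\norm{\bbv_k}^2+\tfrac12\eta_k\norm{\bbv_k-\gr F(\bbx_k)}^2,
\]
which is even a bit stronger than the stated bound. (Equivalently, one can complete the square in $\norm{\gr F(\bbx_k)}$ against the $L_1$ term, using $-\tfrac12 x^2+cx\le \tfrac{c^2}{2}$ with $c=\tfrac12 L_1\eta_k\norm{\bbv_k}^2\le\tfrac14\norm{\bbv_k}$.) Once you make this small adjustment, the rest of your plan goes through unchanged.
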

\begin{proof}
We defer the proof to Section \ref{sec: proof descent lemma}.
\end{proof}

Another important step to show the convergence of variance reduction methods is to control the variance of the gradient estimator which is $\E \Vert \bbv_k - \gr F(\bbx_k) \Vert^2$ in Algorithm \ref{alg:Spider}\footnote{We misuse the expression ``variance of the gradient estimator'' to refer to $\E \Vert \bbv_k - \gr F(\bbx_k) \Vert^2$ here since $\bbv_k$ is \emph{not} an unbiased estimator for $\gr F(\bbx_k)$, that is, $\E[\bbv_k | \ccalF_k] \neq \gr F(\bbx_k)$.
However, it holds that $\E[\bbv_k] = \E[\gr F(\bbx_k)]$.}. In the following lemma, we show that by scaling the stepsize inversely with both $\Vert \bbv_k \Vert$ and $\Vert \bbv_k \Vert^2$, we are able to control the variance $\E \Vert \bbv_k - \gr F(\bbx_k) \Vert^2$.

\begin{lemma} \label{lemma: epsilon2 bound}
Let Assumptions \ref{assumption: L0-L1} (i) and \ref{assumption: stch gr} hold and assume that $\epsilon \leq \frac{L_0}{2L_1}$. Then, for stepsize and parameters picked as follows
\begin{align}
    \eta_k
    \leq
    \min\left\{ \frac{1}{L_0} \frac{\epsilon}{\norm{\bbv_k}}, \frac{1}{L_1} \frac{\epsilon}{\norm{\bbv_k}^2} \right\},
\end{align}
\begin{align}
    S_1 = \frac{4 \sigma^2}{\epsilon^2},
    \quad
    S_2 = 48 \frac{L_0}{L_1} \frac{\sigma}{\epsilon},
    \quad
    q = 2 \frac{L_0}{L_1} \frac{\sigma}{\epsilon},
\end{align}
we have that
\begin{align}
    \E_{k_0} \Big[ \norm{\bbv_k - \gr F(\bbx_k)}^2 \Big] \leq \epsilon^2,
\end{align}
where $k_0 \leq k$ denotes the most recent iterate to $k$ for which $q$ divides $k_0$, that is, $k_0 = \lfloor k/q \rfloor \cdot q$.
\end{lemma}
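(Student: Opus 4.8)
\textbf{Proof proposal for Lemma~\ref{lemma: epsilon2 bound}.}

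The plan is to unroll the recursive \Spider{} update over one ``inner loop'' of length $q$ and bound the accumulated error by a telescoping argument. Fix $k$ and let $k_0 = \lfloor k/q \rfloor \cdot q$ be the most recent snapshot index, so that $\bbv_{k_0} = \gr f(\bbx_{k_0}; \ccalS_1)$ and, for every $k_0 < j \leq k$, $\bbv_j = \bbv_{j-1} + \gr f(\bbx_j; \ccalS_2) - \gr f(\bbx_{j-1}; \ccalS_2)$. Writing $\bbe_j \coloneqq \bbv_j - \gr F(\bbx_j)$, the key observation is that the martingale-difference increment $\bbd_j \coloneqq \bbe_j - \bbe_{j-1}$ equals $\big(\gr f(\bbx_j; \ccalS_2) - \gr f(\bbx_{j-1}; \ccalS_2)\big) - \big(\gr F(\bbx_j) - \gr F(\bbx_{j-1})\big)$, which is conditionally mean-zero given $\ccalF_j$ and whose conditional second moment is controlled, via independence of the mini-batch samples and Assumption~\ref{assumption: L0-L1}~(i), by $\frac{1}{S_2}\big(L_0 + L_1 \norm{\gr F(\bbx_{j-1})}\big)^2 \norm{\bbx_j - \bbx_{j-1}}^2$. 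Therefore, taking $\E_{k_0}$ and using the orthogonality of martingale differences,
\begin{align}
    \E_{k_0}\big[\norm{\bbe_k}^2\big]
    \leq
    \E_{k_0}\big[\norm{\bbe_{k_0}}^2\big]
    +
    \sum_{j=k_0+1}^{k} \frac{1}{S_2}\, \E_{k_0}\Big[\big(L_0 + L_1 \norm{\gr F(\bbx_{j-1})}\big)^2 \norm{\bbx_j - \bbx_{j-1}}^2\Big].
\end{align}
The first term is bounded by $\sigma^2 / S_1 \leq \epsilon^2/4$ by Assumption~\ref{assumption: stch gr} and the choice $S_1 = 4\sigma^2/\epsilon^2$. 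It remains to show the sum is at most $3\epsilon^2/4$, for which there are at most $q$ terms.

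The heart of the argument is to bound each summand by $O(\epsilon^2/q)$ using the prescribed stepsize. Since $\bbx_j - \bbx_{j-1} = -\eta_{j-1}\bbv_{j-1}$ and $\eta_{j-1} \leq \frac{\epsilon}{L_0 \norm{\bbv_{j-1}}}$, we have $\norm{\bbx_j - \bbx_{j-1}} \leq \epsilon/L_0$; combined with $\eta_{j-1} \leq \frac{\epsilon}{L_1 \norm{\bbv_{j-1}}^2}$ this also gives $\norm{\bbx_j - \bbx_{j-1}} = \eta_{j-1}\norm{\bbv_{j-1}} \leq \frac{\epsilon}{L_1 \norm{\bbv_{j-1}}}$. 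The growth factor needs handling: replace $\norm{\gr F(\bbx_{j-1})}$ by $\norm{\bbv_{j-1}} + \norm{\bbe_{j-1}}$, so that $L_0 + L_1\norm{\gr F(\bbx_{j-1})} \leq L_0 + L_1\norm{\bbv_{j-1}} + L_1\norm{\bbe_{j-1}}$. Multiplying by $\norm{\bbx_j - \bbx_{j-1}}$ and using the two bounds above term-by-term, $\big(L_0 + L_1\norm{\bbv_{j-1}}\big)\norm{\bbx_j - \bbx_{j-1}} \leq \frac{\epsilon L_0}{L_0} + \frac{L_1 \epsilon}{L_1} = 2\epsilon$ (bounding $L_0\norm{\bbx_j-\bbx_{j-1}}$ by the first estimate and $L_1\norm{\bbv_{j-1}}\norm{\bbx_j-\bbx_{j-1}}$ by the second), hence
\begin{align}
    \big(L_0 + L_1 \norm{\gr F(\bbx_{j-1})}\big)^2 \norm{\bbx_j - \bbx_{j-1}}^2
    \leq
    2\big(2\epsilon\big)^2 + 2 L_1^2 \norm{\bbe_{j-1}}^2 \norm{\bbx_j - \bbx_{j-1}}^2
    \leq
    8\epsilon^2 + 2\epsilon^2 L_1^2 \norm{\bbe_{j-1}}^2 / L_0^2,
\end{align}
where in the last step $\norm{\bbx_j - \bbx_{j-1}} \leq \epsilon/L_0$ was used again. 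Summing over the at most $q$ indices and dividing by $S_2$,
\begin{align}
    \sum_{j=k_0+1}^{k} \frac{1}{S_2}\, \E_{k_0}\Big[\big(L_0 + L_1 \norm{\gr F(\bbx_{j-1})}\big)^2 \norm{\bbx_j - \bbx_{j-1}}^2\Big]
    \leq
    \frac{8 q \epsilon^2}{S_2}
    +
    \frac{2 \epsilon^2 L_1^2}{S_2 L_0^2} \sum_{j=k_0+1}^{k} \E_{k_0}\big[\norm{\bbe_{j-1}}^2\big].
\end{align}

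With $q = 2\frac{L_0}{L_1}\frac{\sigma}{\epsilon}$ and $S_2 = 48\frac{L_0}{L_1}\frac{\sigma}{\epsilon} = 24 q$, the first term is $\frac{8q\epsilon^2}{24q} = \frac{\epsilon^2}{3}$; and since $S_2 = 48\frac{L_0}{L_1}\frac{\sigma}{\epsilon}$ while the worst case for the coefficient uses $\epsilon \leq \frac{L_0}{2L_1}$, one checks $\frac{2\epsilon^2 L_1^2}{S_2 L_0^2} \leq \frac{1}{q}\cdot\frac{1}{12}$ so that $\frac{2\epsilon^2 L_1^2}{S_2 L_0^2}\sum_{j} \E_{k_0}\big[\norm{\bbe_{j-1}}^2\big] \leq \frac{1}{12 q}\sum_{j=k_0}^{k-1}\E_{k_0}\big[\norm{\bbe_j}^2\big]$. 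Defining $a_m \coloneqq \E_{k_0}\big[\norm{\bbe_{k_0+m}}^2\big]$ for $0 \leq m \leq q$, we have derived the recursion $a_m \leq \frac{\epsilon^2}{4} + \frac{\epsilon^2}{3} + \frac{1}{12q}\sum_{\ell=0}^{m-1} a_\ell$ with $a_0 \leq \epsilon^2/4$; a straightforward induction (or a discrete Grönwall estimate, using $(1 + \frac{1}{12q})^q \leq e^{1/12} \leq 2$) then yields $a_m \leq \epsilon^2$ for all $m \leq q$, which is the claim.

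\textbf{Main obstacle.} The delicate point — and the reason the extra $\epsilon/\norm{\bbv_k}^2$ clipping term is needed — is controlling the term $L_1 \norm{\gr F(\bbx_{j-1})}\norm{\bbx_j - \bbx_{j-1}}$: under only the first two clipping rules (as in \citep{fang2018spider}) the step size $\eta_{j-1} = \Theta(\epsilon/(L_0\norm{\bbv_{j-1}}))$ gives $L_1\norm{\bbv_{j-1}}\norm{\bbx_j - \bbx_{j-1}} = \Theta(L_1\epsilon\norm{\bbv_{j-1}}/L_0)$, which is \emph{not} $O(\epsilon)$ when $\norm{\bbv_{j-1}}$ is large, so the accumulated error cannot be absorbed; the third clipping term $\eta_{j-1} \leq \frac{\epsilon}{L_1\norm{\bbv_{j-1}}^2}$ is precisely what tames this factor. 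A secondary technicality is that $\bbe_{k_0}$ is correlated with the later increments through the iterates $\bbx_j$, so one must be careful to condition on $\ccalF_j$ at each step and invoke that the fresh mini-batch $\ccalS_2$ at step $j$ is drawn independently of $\ccalF_j$; this is what legitimizes both the martingale orthogonality and the per-step variance bound. Finally, one should double-check the numerical constants so that the two contributions sum to strictly less than $3\epsilon^2/4$ leaving room for the $\epsilon^2/4$ from the snapshot, closing the induction at exactly $\epsilon^2$.
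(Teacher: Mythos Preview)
Your proposal is correct and follows essentially the same approach as the paper's proof. Both arguments hinge on the same martingale decomposition of $\bbe_k$, the same per-increment bound obtained by writing $\norm{\gr F(\bbx_{j-1})} \leq \norm{\bbv_{j-1}} + \norm{\bbe_{j-1}}$ and applying the two clipping inequalities $L_0\eta_{j-1}\norm{\bbv_{j-1}}\leq\epsilon$ and $L_1\eta_{j-1}\norm{\bbv_{j-1}}^2\leq\epsilon$, and a recursive closing step; the only cosmetic difference is that the paper packages the recursion as a one-step multiplicative inequality $e_{k+1}\leq a\,e_k+b$ and unrolls it geometrically (showing $a^q\leq 2$), whereas you keep the summed form and close by a direct induction/discrete Gr\"onwall --- the resulting constants differ slightly (paper's $(6,4)$ versus your $(8,2)$) but both fit comfortably under $\epsilon^2$ with the stated parameters.
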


\begin{proof}
We defer the proof to Section \ref{sec: proof epsilon2 bound lemma}.
\end{proof}

\emph{Proof of Theorem \ref{thm: stochastic spider}:} Having set up these two main helper lemmas, we move to prove Theorem \ref{thm: stochastic spider}. First, note that the specified choice of the stepsize and the accuracy condition $\epsilon < \frac{L_0}{20L_1}$ in Theorem \ref{thm: stochastic spider} satisfy the ones required by Lemma \ref{lemma: descent}. Therefore, using this lemma and the condition $\eta_k \leq 1/(2L_0)$ we have that
\begin{align} \label{eq: descent 2}
    F(\bbx_{k+1})
    &\leq
    F(\bbx_k) - 
    \frac{1}{8} \eta_k \norm{\bbv_k}^2
    +
    \frac{5}{8} \eta_k \norm{\bbv_k - \gr F(\bbx_k)}^2
    \leq
    F(\bbx_k) - 
    \frac{1}{8} \eta_k \norm{\bbv_k}^2
    +
    \frac{5}{16L_0} \norm{\bbv_k - \gr F(\bbx_k)}^2.
\end{align}
Moreover, for the specified choice of the stepsize $\eta_k$, we have that
\begin{align} \label{eq: eta_k v_k2}
    \eta_k \norm{\bbv_k}^2
    &=
    \min \left\{ \frac{\norm{\bbv_k}^2}{2L_0}, \frac{\epsilon \norm{\bbv_k}}{L_0}, \frac{\epsilon}{L_1} \right\}\\
    &=
    \min \left\{ \frac{\epsilon^2}{L_0} \min \left\{ \frac{1}{2}\norm{\frac{\bbv_k}{\epsilon}}^2, \norm{\frac{\bbv_k}{\epsilon}} \right\}, \frac{\epsilon}{L_1}\right\} \\
    &\overset{(a)}{\geq}
    \min \left\{ \frac{\epsilon}{L_0} \norm{\bbv_k} - \frac{2\epsilon^2}{L_0}, \frac{\epsilon}{L_1}\right\} \\
    &\geq
    \frac{\epsilon}{L_0} \min \left\{ \norm{\bbv_k}, \frac{L_0}{L_1}\right\} - \frac{2\epsilon^2}{L_0},
\end{align}
where in $(a)$, we used the inequality $\min\{x^2/2,|x|\} \geq |x| - 2$ for all $x$. Rearranging terms in \eqref{eq: descent 2} combined with \eqref{eq: eta_k v_k2} yields that
\begin{align}
    \frac{\epsilon}{8L_0} \min \left\{ \norm{\bbv_k}, \frac{L_0}{L_1}\right\} - \frac{\epsilon^2}{4L_0}
    \leq
    F(\bbx_k) - F(\bbx_{k+1})
    +
    \frac{5}{16L_0} \norm{\bbv_k - \gr F(\bbx_k)}^2.
\end{align}
Next, we  take expectations from both sides of the above inequality and use the bound in Lemma \ref{lemma: epsilon2 bound} which yields that
\begin{align}
    \frac{\epsilon}{8L_0} \E \left[ \min \left\{ \norm{\bbv_k}, \frac{L_0}{L_1}\right\} \right]
    \leq
    \E[F(\bbx_k)] - \E[F(\bbx_{k+1})]
    +
    \frac{9}{16L_0} \epsilon^2.
\end{align}
Now, we take the average of both sides over $k=0,\cdots,K-1$ which implies that
\begin{align}
    \frac{\epsilon}{8L_0} \cdot \frac{1}{K} \sum_{k=0}^{K-1} \E \left[ \min \left\{ \norm{\bbv_k}, \frac{L_0}{L_1}\right\} \right]
    \leq
    \frac{F(\bbx_0) - \E[F(\bbx_{K})]}{K}
    +
    \frac{9}{16L_0} \epsilon^2.
\end{align}
Multiplying both sides by $\frac{8L_0}{\epsilon}$ and using the fact that $F(\bbx_0) - \E[F(\bbx_{K})] \leq F(\bbx_0) - F^* = \Delta$ yields that
\begin{align} \label{eq: average E}
    \frac{1}{K} \sum_{k=0}^{K-1} \E \left[ \min \left\{ \norm{\bbv_k}, \frac{L_0}{L_1}\right\} \right]
    \leq
    \frac{8 \Delta L_0}{\epsilon K}
    +
    \frac{9}{2} \epsilon 
    \leq
    5 \epsilon,
\end{align}
where we employed the following choice of the number of iterations
\begin{align}
    K
    =
    \bigg\lceil \frac{16 \Delta L_0}{\epsilon^2} \bigg\rceil.
\end{align}
Now, consider index $\tilde{k}$ uniformly picked from $\{0,\cdots,K-1\}$ at random. The average argument in \eqref{eq: average E} implies that
\begin{align}
    \E \left[ \min \left\{ \norm{\bbv_{\tilde{k}}}, \frac{L_0}{L_1}\right\} \right]
    \leq
    5 \epsilon,
\end{align}
where the expectation is w.r.t the randomness in both $\tilde{k}$ and the algorithm.
Next, we use Markov's inequality to yield that with probability at least $3/4$, we have
\begin{align} \label{eq: markov v}
    \min \left\{ \norm{\bbv_{\tilde{k}}}, \frac{L_0}{L_1}\right\}
    \leq
    20 \epsilon.
\end{align}
Note that for $\epsilon < \frac{L_0}{20 L_1}$, we have $L_0/L_1 < 20 \epsilon$. Therefore, the above bound simplifies to $\norm{\bbv_{\tilde{k}}} \leq 20 \epsilon$.

Next, we have from Lemma \ref{lemma: epsilon2 bound} that $\E [\Vert \bbv_k - \gr F(\bbx_k)\Vert^2 ] \leq \epsilon^2$ for every $k$.
For uniformly picked $\tilde{k} \in \{0,\cdots,K-1\}$ we have
\begin{align}
    \E \left[\Vert \bbv_{\tilde{k}} - \gr F(\bbx_{\tilde{k}})\Vert^2 \right] =
    \frac{1}{K} \sum_{k=0}^{K-1} \E \left[ \big\Vert \bbv_k - \gr F(\bbx_k)\big\Vert^2 \right]
    \leq
    \epsilon^2,
\end{align}
which together with Jensen's inequality implies that $\E \Vert \bbv_{\tilde{k}} - \gr F(\bbx_{\tilde{k}})\Vert \leq \epsilon$. 
Using Markov's inequality, we have $\Vert \bbv_{\tilde{k}} - \gr F(\bbx_{\tilde{k}})\Vert \leq 4 \epsilon$ with probability $3/4$. Finally, we use union bound on the above two good events to conclude that for randomly and uniformly picked index $\tilde{k} \in \{0,\cdots,K-1\}$,
\begin{align} 
    \norm{\gr F(\bbx_{\tilde{k}})}
    \leq
    \norm{\bbv_{\tilde{k}}}
    +
    \norm{\bbv_{\tilde{k}} - \gr F(\bbx_{\tilde{k}})}
    \leq
    20 \epsilon + 4 \epsilon
    =
    24 \epsilon,
\end{align}
with probability at least $1 - 1/4 - 1/4 = 1/2$.

\textbf{Total iteration complexity:} The total gradient complexity of {\Spider} in Algorithm \ref{alg:Spider} can be bounded as follows
\begin{align} 
    \bigg\lceil K \cdot \frac{1}{q} \bigg\rceil S_1 + K S_2
    &\leq
    K \cdot \frac{1}{q} \cdot S_1 + S_1 + K S_2 \\
    &\leq
    \left( \frac{16 \Delta L_0}{\epsilon^2} + 1 \right) \frac{L_1 \epsilon}{2 L_0 \sigma} \cdot \frac{4 \sigma^2}{\epsilon^2}
    +
    \frac{4 \sigma^2}{\epsilon^2}
    +
    \left( \frac{16 \Delta L_0}{\epsilon^2} + 1 \right) \frac{48 L_0 \sigma}{L_1 \epsilon} \\
    &=
    32 \Delta \sigma \left(L_1 + 24 \frac{L_0^2}{L_1} \right) \frac{1}{\epsilon^3}
    +
    \frac{4 \sigma^2}{\epsilon^2}
    +
    2 \sigma \left( \frac{L_1}{L_0} + 24 \frac{L_0}{L_1}\right) \frac{1}{\epsilon}.
\end{align}

\section{Proof of Theorem \ref{thm: finite-sum spider}} \label{sec: proof thm finite-sum spider}

Before proving Theorem \ref{thm: finite-sum spider} which corresponds to the finite-sum setting, we provide two helper lemmas. First, Lemma \ref{lemma: descent} can be directly employed in the finite-sum setting, as well as the stochastic setting. Second, we bound the variance $\E \Vert \bbv_k - \gr F(\bbx_k) \Vert^2$ in the finite-sum setting, similar to Lemma \ref{lemma: epsilon2 bound} in the stochastic setting.

\begin{lemma} \label{lemma: epsilon2 bound finite-sum}
Let Assumption \ref{assumption: L0-L1} (ii) hold and assume that $\epsilon \leq \frac{L_0}{2L_1}$. Then, for stepsize and parameters picked as follows
\begin{align}
    \eta_k
    \leq
    \min\left\{ \frac{1}{L_0} \frac{\epsilon}{\norm{\bbv_k}}, \frac{1}{L_1} \frac{\epsilon}{\norm{\bbv_k}^2} \right\}
\end{align}
\begin{align}
    S_1 = n,
    \quad
    S_2 = 12 \sqrt{n},
    \quad
    q = \sqrt{n},
\end{align}
we have that
\begin{align}
    \E_{k_0} \Big[ \norm{\bbv_k - \gr F(\bbx_k)}^2 \Big] \leq \epsilon^2,
\end{align}
where $k_0 \leq k$ denotes the most recent iterate to $k$ for which $q$ divides $k_0$, that is, $k_0 = \lfloor k/q \rfloor \cdot q$.
\end{lemma}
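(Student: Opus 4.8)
\textbf{Proof proposal for Lemma \ref{lemma: epsilon2 bound finite-sum}.}

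The plan is to mirror the proof of Lemma \ref{lemma: epsilon2 bound} (the stochastic case), replacing the variance control that came from $\sigma$-bounded noise with the exact error at the refresh points, which is zero in the finite-sum case when $S_1=n$. First I would fix an iteration $k$ and let $k_0 = \lfloor k/q\rfloor \cdot q$ be the most recent refresh index, so that $\bbv_{k_0} = \gr F(\bbx_{k_0})$ exactly (full batch). Writing $\bbe_j := \bbv_j - \gr F(\bbx_j)$ for $k_0 \le j \le k$, the recursive update gives the telescoping identity $\bbe_k = \sum_{j=k_0+1}^{k} \big( \gr f_{\ccalS_2}(\bbx_j) - \gr f_{\ccalS_2}(\bbx_{j-1}) - (\gr F(\bbx_j) - \gr F(\bbx_{j-1})) \big)$, where each summand is a zero-mean (conditionally on $\ccalF_j$), independent-across-$j$ increment of a sampled-without-replacement mini-batch estimator. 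Taking $\E_{k_0}[\|\bbe_k\|^2]$ and using that cross terms vanish (martingale difference structure), one gets $\E_{k_0}[\|\bbe_k\|^2] = \sum_{j=k_0+1}^{k} \E_{k_0}\big[\| (\gr f_{\ccalS_2}(\bbx_j) - \gr f_{\ccalS_2}(\bbx_{j-1})) - (\gr F(\bbx_j)-\gr F(\bbx_{j-1}))\|^2\big]$.

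Next I would bound each increment variance by $\frac{1}{S_2}$ times the per-sample second moment, and then invoke Assumption \ref{assumption: L0-L1} (ii): $\frac{1}{n}\sum_i \|\gr f_i(\bbx_j) - \gr f_i(\bbx_{j-1})\|^2 \le (L_0 + L_1\|\gr F(\bbx_j)\|)^2 \|\bbx_j - \bbx_{j-1}\|^2$, valid since $\|\bbx_j - \bbx_{j-1}\| = \eta_{j-1}\|\bbv_{j-1}\| \le \epsilon/L_0 \le 1/L_1$ by the stepsize choice and the hypothesis $\epsilon \le L_0/(2L_1)$. The crucial step is to control $(L_0 + L_1\|\gr F(\bbx_j)\|)^2 \|\bbx_j - \bbx_{j-1}\|^2$ using the stepsize. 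From $\eta_{j-1} \le \frac{\epsilon}{L_0\|\bbv_{j-1}\|}$ we get $\|\bbx_j - \bbx_{j-1}\| \le \epsilon/L_0$, and from $\eta_{j-1} \le \frac{\epsilon}{L_1 \|\bbv_{j-1}\|^2}$ we get $\|\bbx_j - \bbx_{j-1}\| \le \frac{\epsilon}{L_1\|\bbv_{j-1}\|}$. The second bound is exactly what tames the $L_1\|\gr F(\bbx_j)\|$ term: one has $\|\gr F(\bbx_j)\| \le \|\bbv_{j-1}\| + (\|\gr F(\bbx_j)\| - \|\gr F(\bbx_{j-1})\|) + \|\bbe_{j-1}\|$, and then $L_1\|\gr F(\bbx_j)\| \cdot \|\bbx_j - \bbx_{j-1}\| \lesssim \epsilon + \ldots$, so the product $(L_0 + L_1\|\gr F(\bbx_j)\|)\|\bbx_j-\bbx_{j-1}\|$ stays $\ccalO(\epsilon)$ uniformly. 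Consequently each increment variance is $\ccalO(\epsilon^2/S_2)$, and summing over at most $q$ terms gives $\E_{k_0}[\|\bbe_k\|^2] \lesssim q\epsilon^2/S_2$. With $q = \sqrt{n}$ and $S_2 = 12\sqrt{n}$ the constant works out to give $\E_{k_0}[\|\bbe_k\|^2] \le \epsilon^2$, ideally by an induction on $j$ that carries the hypothesis $\E_{k_0}[\|\bbe_j\|^2]\le\epsilon^2$ forward (needed because the bound on $\|\gr F(\bbx_j)\|$ itself references $\|\bbe_{j-1}\|$).

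The main obstacle I anticipate is precisely this circularity: the per-step variance bound depends on $\|\gr F(\bbx_j)\|$, which in turn depends on the accumulated error $\|\bbe_{j-1}\|$, so the argument is not a clean one-shot summation but an induction where one must verify the constants close — i.e., that assuming $\E_{k_0}[\|\bbe_j\|^2] \le \epsilon^2$ for all $j < k$ (together with $\epsilon < L_0/(20 L_1)$, which gives slack beyond the $L_0/(2L_1)$ needed for the smoothness inequality) yields $\E_{k_0}[\|\bbe_k\|^2]\le\epsilon^2$ with the stated $S_2 = 12\sqrt n$, $q = \sqrt n$. Handling the sampling-without-replacement variance (a factor possibly better than $1/S_2$, but $1/S_2$ suffices) and keeping track of whether expectations are conditional on $\ccalF_j$ versus on $\ccalF_{k_0}$ are the bookkeeping details; bounding $\|\bbv_{j-1}\|$-dependent quantities like $\|\gr F(\bbx_j)\|$ in terms of earlier iterates via the triangle inequality and the smoothness bound on $\|\gr F(\bbx_j) - \gr F(\bbx_{j-1})\|$ is the conceptual crux. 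The absence of any $\sigma$ term — and hence no need for Assumption \ref{assumption: stch gr} — is what makes $S_1 = n$ (exact gradient at refresh) and the clean $\sqrt n$ scaling possible, in contrast to the $\sigma/\epsilon$ scalings in Lemma \ref{lemma: epsilon2 bound}.
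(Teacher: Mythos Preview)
Your approach is correct and will close, but you are making one step harder than needed. When invoking Assumption~\ref{assumption: L0-L1}(ii) on the pair $(\bbx_{j-1},\bbx_j)$, you evaluate the gradient norm at the \emph{new} point $\bbx_j$, which forces the detour $\|\gr F(\bbx_j)\|\le\|\bbv_{j-1}\|+\|\bbv_{j-1}-\gr F(\bbx_{j-1})\|+\|\gr F(\bbx_j)-\gr F(\bbx_{j-1})\|$ and a second application of smoothness to control the last term. The assumption lets you choose which argument plays the role of $\bbx$; taking $\bbx=\bbx_{j-1}$ gives the factor $L_0+L_1\|\gr F(\bbx_{j-1})\|$ directly, after which the triangle inequality $\|\gr F(\bbx_{j-1})\|\le\|\bbv_{j-1}\|+\|\bbv_{j-1}-\gr F(\bbx_{j-1})\|$ suffices. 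This is what the paper does, and it dissolves the ``circularity'' you flag as the main obstacle: the only feedback into the per-step bound is the squared error $\|\bbv_{j-1}-\gr F(\bbx_{j-1})\|^2$, with no gradient-difference term to chase.

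With that simplification in place, the paper organizes the argument slightly differently from your telescoped-sum-plus-induction: it derives the one-step recursion $e_{k+1}\le a\,e_k + b$ in $e_k\coloneqq\E_{k_0}\|\bbv_k-\gr F(\bbx_k)\|^2$, with $a=1+\tfrac{4}{S_2}(L_1\epsilon/L_0)^2$, $b=\tfrac{6}{S_2}\epsilon^2$, and $e_{k_0}=0$ from $S_1=n$; unrolling gives $e_k\le bqa^q$, and one checks $a^q\le 2$ using $\epsilon\le L_0/(2L_1)$, $q=\sqrt n$, $S_2=12\sqrt n$, so $e_k\le 2bq=\epsilon^2$. Your martingale-sum-plus-induction, once you use the old-point gradient, yields the same conclusion (indeed $\tfrac{q}{S_2}\big(6+4(L_1\epsilon/L_0)^2\big)\epsilon^2\le\tfrac{7}{12}\epsilon^2$ under the induction hypothesis) and is equally valid; the two are just different bookkeepings of the same recursion. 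Note also that only $\epsilon\le L_0/(2L_1)$ is used here---the tighter $L_0/(20L_1)$ you invoke for ``slack'' is needed elsewhere in the paper but not in this lemma.
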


\begin{proof}
We defer the proof to Section \ref{sec: proof lemma epsilon2 bound finite-sum}.
\end{proof}

\emph{Proof of Theorem \ref{thm: finite-sum spider}:} Using the descent lemma in Lemma \ref{lemma: descent} and the variance bound established in Lemma \ref{lemma: epsilon2 bound finite-sum}, the rest of the proof follows from the proof of Theorem \ref{thm: stochastic spider}. Particularly, for randomly and uniformly picked index ${\tilde{k}} \in \{0,\cdots,K-1\}$, we have $\Vert \gr F(\bbx_{\tilde{k}})\Vert \leq 24 \epsilon$ with probability at least $1/2$. The total iteration complexity of finding such a stationary point is bounded by
\begin{align} 
    \bigg\lceil K \cdot \frac{1}{q} \bigg\rceil S_1 + K S_2
    &\leq
    K \cdot \frac{1}{q} \cdot S_1 + S_1 + K S_2 \\
    &\leq
    \left( \frac{16 \Delta L_0}{\epsilon^2} + 1 \right) \frac{1}{\sqrt{n}} \cdot n
    +
    n
    +
    \left( \frac{16 \Delta L_0}{\epsilon^2} + 1 \right) 12 \sqrt{n} \\
    &=
    208 \Delta L_0 \sqrt{n} \frac{1}{\epsilon^2}
    +
    n
    +
    13 \sqrt{n}.
\end{align}

\section{Proof of Theorem \ref{thm: stochastic ClippedSGD}} \label{sec: proof thm stochastic ClippedSGD}

We first employ the Descent Lemma (Lemma \ref{lemma: descent} and treating $\bbv_k$ as stochastic gradient $\bbg_k$) which implies that
\begin{align} \label{eq: descent Clipped SGD stochastic}
    F(\bbx_{k+1})
    \leq
    F(\bbx_k) - 
    \frac{1}{8} \eta_k \norm{\bbg_k}^2
    +
    \frac{5}{16L_0} \norm{\bbg_k - \gr F(\bbx_k)}^2.
\end{align}
Next, for the specified choice of stepsize $\eta_k$, we can write that
\begin{align} \label{eq: eta g2 bound}
    \eta_k \norm{\bbg_k}^2
    &=
    \min \left\{ \frac{\norm{\bbg_k}^2}{2L_0}, \frac{\epsilon \norm{\bbg_k}}{L_0}\right\}\\
    &=
    \frac{\epsilon^2}{L_0} \min \left\{ \frac{1}{2}\norm{\frac{\bbg_k}{\epsilon}}^2, \norm{\frac{\bbg_k}{\epsilon}} \right\} \\
    &\overset{(a)}{\geq}
    \frac{\epsilon}{L_0} \norm{\bbg_k} - \frac{2\epsilon^2}{L_0},
\end{align}
where $(a)$ follows from the fact that $\min\{x^2/2,|x|\} \geq |x| - 2$ for all $x$. Plugging this back into \eqref{eq: descent Clipped SGD stochastic} yields that
\begin{align}
    \frac{\epsilon}{8L_0} \norm{\bbg_k} - \frac{\epsilon^2}{4L_0}
    \leq
    F(\bbx_k) -  F(\bbx_{k+1})
    +
    \frac{5}{16L_0} \norm{\bbg_k - \gr F(\bbx_k)}^2.
\end{align}
We take expectation from both sides of the above which together with $\E \Vert \bbg_k - \gr F(\bbx_k) \Vert^2 \leq \frac{\sigma^2}{S}$ yields that
\begin{align}
    \E \norm{\bbg_k}
    \leq
    \frac{8L_0}{\epsilon} \left( \E[F(\bbx_k)] - \E[F(\bbx_{k+1})] \right)
    +
    \frac{5}{2\epsilon} \cdot  \frac{\sigma^2}{S}
    +
    2\epsilon.
\end{align}
Next, we sum the above inequality over $k=0,\cdots, K-1$ and conclude that
\begin{align} \label{eq: average g_k}
    \frac{1}{K} \sum_{k=0}^{K-1} \E \norm{\bbg_k}
    \leq
    \frac{8 \Delta L_0}{\epsilon K}
    +
    \frac{5}{2\epsilon} \cdot  \frac{\sigma^2}{S}
    +
    2\epsilon
    \leq
    \frac{\epsilon}{2} + \frac{5}{2}\epsilon + 2 \epsilon
    =
    5 \epsilon,
\end{align}
where we used the parameter choices 
\begin{align}
    S = \frac{\sigma^2}{\epsilon^2},
    \quad
    K = \bigg\lceil \frac{16 \Delta L_0}{\epsilon^2} \bigg\rceil.
\end{align}
Now, consider index $\tilde{k}$ uniformly picked from $\{0,\cdots,K-1\}$ at random. The average argument in \eqref{eq: average g_k} implies that $\E \Vert \bbg_{\tilde{k}} \Vert \leq 5 \epsilon$ where the expectation is w.r.t the randomness in both $\tilde{k}$ and the algorithm. Moreover, the variance of the stochastic gradient $\bbg_k$ is bounded by  $\E \Vert \bbg_k - \gr F(\bbx_k) \Vert^2 \leq \frac{\sigma^2}{S} = \epsilon^2$ for every $k$, which together with Jensen's inequality yields that  $\E \Vert \bbg_{\tilde{k}} - \gr F(\bbx_{\tilde{k}}) \Vert \leq \epsilon$. Therefore, we can write
\begin{align} 
    \E \norm{\gr F(\bbx_{\tilde{k}})}
    \leq
    \E \norm{\bbg_{\tilde{k}}}
    +
    \E \norm{\bbg_{\tilde{k}} - \gr F(\bbx_{\tilde{k}})}
    \leq
    6 \epsilon.
\end{align}
Finally, we use Markov's inequality to conclude that for randomly and uniformly picked index ${\tilde{k}} \in \{0,\cdots,K-1\}$, we have $\Vert \gr F(\bbx_{\tilde{k}}) \Vert \leq 12 \epsilon$  with probability at least $1/2$.

Total gradient complexity can be bounded as follows
\begin{align} 
    K S
    \leq
    \left( \frac{16 \Delta L_0}{\epsilon^2} + 1 \right) \frac{\sigma^2}{\epsilon^2}
    =
    16 \Delta L_0 \sigma^2 \frac{1}{\epsilon^4}
    +
    \frac{\sigma^2}{\epsilon^2}.
\end{align}

\section{Proof of Theorem \ref{thm: finite-sum ClippedSGD}} \label{sec: proof thm finite-sum ClippedSGD}
First, note that when using the full batch ($S=n$), we have $\bbg_k = \gr F(\bbx_k)$. Using the Descent Lemma (Lemma \ref{lemma: descent} and treating $\bbv_k = \gr F(\bbx_k)$), we have that
\begin{align} \label{eq: descent Clipped SGD stochastic}
    F(\bbx_{k+1})
    \leq
    F(\bbx_k) - 
    \frac{1}{8} \eta_k \norm{\gr F(\bbx_k)}^2.
\end{align}
Next, following the same argument as in \eqref{eq: eta g2 bound}, we have that
\begin{align}
    \frac{\epsilon}{8L_0} \norm{\gr F(\bbx_k)} - \frac{\epsilon^2}{4L_0}
    \leq
    F(\bbx_k) -  F(\bbx_{k+1}).
\end{align}
Summing over $k=0,\cdots,K-1$ yields that
\begin{align}
    \frac{1}{K} \sum_{k=0}^{K-1} \norm{\gr F(\bbx_k)}
    \leq
    \frac{8 \Delta L_0}{\epsilon K}
    +
    2\epsilon
    \leq
    \frac{\epsilon}{2} + 2 \epsilon
    =
    \frac{5}{2}\epsilon.
\end{align}
Let iterate ${\tilde{k}}$ be uniformly picked from $\{0,\cdots,K-1\}$ at random. The average argument above implies that $\E \Vert \gr F(\bbx_{\tilde{k}}) \Vert \leq \frac{5}{2} \epsilon$ which together with Markov's inequality implies that $\Vert \gr F(\bbx_{\tilde{k}}) \Vert \leq 5 \epsilon$ with probability at least $1/2$.

Total gradient complexity can be bounded as follows
\begin{align} 
    K S
    \leq
    \left( \frac{16 \Delta L_0}{\epsilon^2} + 1 \right) n
    =
    16 \Delta L_0 n \frac{1}{\epsilon^2}
    +
    n.
\end{align}

\section{Proof of Deferred Lemmas}

\subsection{Proof of Proposition \ref{prop: L0-L1}} \label{sec: proof prop L0-L1}
Let condition \eqref{eq: L0-L1 v2} hold. Then, for any unit-norm vector $\bbu$, we have that
\begin{align}
    \norm{\gr^2 F(\bbx) \bbu }
    =
    \Big\Vert\lim_{t \to 0} \frac{1}{t} \left(\gr F(\bbx + t \bbu) - \gr F(\bbx)\right)\Big\Vert
    =
    \lim_{t \to 0} \frac{1}{t} \norm{\gr F(\bbx + t \bbu) - \gr F(\bbx)}
    \leq
    L_0 + L_1 \Vert \gr F(\bbx) \Vert.
\end{align}
Therefore, we have $\Vert \gr^2 F(\bbx) \Vert = \sup_{\Vert \bbu \Vert = 1} \Vert \gr^2 F(\bbx) \bbu \Vert  \leq L_0 + L_1 \Vert \gr F(\bbx) \Vert$, which is the same as condition \eqref{eq: L0-L1 v1}. The other direction is a special case of the following result proved in \citep{zhang2020improved}.

\begin{lemma}[Corollary A.4 in \cite{zhang2020improved}]
Assume that $F$ satisfies \eqref{eq: L0-L1 v1}, that is, $\Vert \gr^2 F(\bbx) \Vert  \leq L_0 + L_1 \Vert \gr F(\bbx) \Vert$ for all $\bbx$. For any $c > 0$, if $\Vert \bbx - \bby \Vert \leq c/L_1$, then
\begin{align}
    \Vert \gr F(\bbx) - \gr F(\bby) \Vert
    \leq
    \big( A L_0 + B L_1 \Vert \gr F(\bbx) \Vert \big) \Vert \bbx - \bby \Vert,
\end{align}
where $A = 1 + e^c - \frac{e^c - 1}{c}$ and $B = \frac{e^c - 1}{c}$.
\end{lemma}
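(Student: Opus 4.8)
The plan is a Grönwall (integrating-factor) argument along the line segment joining $\bbx$ and $\bby$, followed by integrating the Hessian along that segment. First I would set $\bbu \coloneqq \bby - \bbx$, so that $\norm{\bbu} \leq c/L_1$, and parametrize the segment by $\gamma(t) \coloneqq \bbx + t\bbu$, $t \in [0,1]$. Since $F$ is twice differentiable, $t \mapsto \gr F(\gamma(t))$ is differentiable with derivative $\gr^2 F(\gamma(t))\bbu$, so $g(t) \coloneqq \norm{\gr F(\gamma(t))}$ is absolutely continuous; using $\gr F(\gamma(t)) - \gr F(\gamma(0)) = \int_0^t \gr^2 F(\gamma(s))\bbu\,\diff s$, the triangle inequality, and the hypothesis \eqref{eq: L0-L1 v1} applied at each $\gamma(s)$, one obtains the integral inequality $g(t) \leq g(0) + \norm{\bbu}\int_0^t \bigl(L_0 + L_1 g(s)\bigr)\,\diff s$ for all $t \in [0,1]$.

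The key step is to solve this inequality. Applying Grönwall's inequality to the associated majorant (equivalently, using the integrating factor $e^{-L_1\norm{\bbu}t}$) gives $g(t) \leq e^{L_1\norm{\bbu}t}\,g(0) + \tfrac{L_0}{L_1}\bigl(e^{L_1\norm{\bbu}t}-1\bigr)$; since $L_1\norm{\bbu} \leq c$ and the right-hand side is increasing in the exponent, one may replace $L_1\norm{\bbu}t$ by $ct$ and obtain $g(t) \leq e^{ct}\norm{\gr F(\bbx)} + \tfrac{L_0}{L_1}(e^{ct}-1)$ on $[0,1]$. I expect this Grönwall step to be the only real obstacle: the ``growth rate'' $L_1$ multiplies the unknown $g$ itself, so one must handle it as an affine scalar differential inequality rather than reason naively — but it is entirely standard.

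Finally I would write $\gr F(\bby) - \gr F(\bbx) = \int_0^1 \gr^2 F(\gamma(t))\,\bbu\,\diff t$, take norms, apply \eqref{eq: L0-L1 v1} pointwise, and substitute the bound on $g(t)$:
\begin{align}
\norm{\gr F(\bby) - \gr F(\bbx)}
\leq
\norm{\bbu}\int_0^1 \bigl(L_0 + L_1 g(t)\bigr)\,\diff t
\leq
\norm{\bbu}\Bigl(L_0 + B\,L_1\norm{\gr F(\bbx)} + L_0\,(B-1)\Bigr),
\end{align}
where $B \coloneqq \tfrac{e^c-1}{c} = \int_0^1 e^{ct}\,\diff t$. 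It then remains to absorb $L_0 + L_0(B-1)$ into $A L_0$, i.e.\ to verify $B - 1 \leq e^c - B$, equivalently $2(e^c-1) \leq c(e^c+1)$; this is elementary, since $h(c) \coloneqq e^c(c-2) + c + 2$ satisfies $h(0) = h'(0) = 0$ and $h''(c) = c e^c \geq 0$, so $h \geq 0$ on $[0,\infty)$. Recalling $A = 1 + e^c - B$, this yields $\norm{\gr F(\bby) - \gr F(\bbx)} \leq \bigl(A L_0 + B L_1\norm{\gr F(\bbx)}\bigr)\norm{\bbx - \bby}$, which is the claim.
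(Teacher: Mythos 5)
Your proof is correct; note that the paper itself does not prove this lemma but imports it verbatim from \citet{zhang2020improved}, whose proof is the same Gr\"onwall-along-the-segment argument you give (bound $g(t)=\norm{\gr F(\gamma(t))}$ by the affine integral inequality, solve it, then integrate the Hessian bound over the segment). As a minor remark, your computation actually yields the slightly sharper constant $BL_0$ in place of $AL_0$, since $L_0 + L_0(B-1) = BL_0$ and $B \leq A$, so your final relaxation step $B \leq 1+e^c-B$ is needed only to match the stated constants.
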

Taking $c=1$ in the above lemma yields that $A = 2$ and $B = e - 1 \leq 2$. Therefore, condition \eqref{eq: L0-L1 v2} holds with $2L_0$ and $2L_1$.

\subsection{Proof of Lemma \ref{lemma: descent}} \label{sec: proof descent lemma}

Consider any iteration $k$ and define $\bbx(t) = t(\bbx_{k+1} - \bbx_k) + \bbx_k$ for any $t \in [0,1]$ which lies between $\bbx_k$ and $\bbx_{k+1}$. From Taylor's Theorem, we have that
\begin{align} \label{eq: descent lemma 1}
    F(\bbx_{k+1})
    &=
    F(\bbx_k) + \int_0^1 \langle \gr F(\bbx(t)), \bbx_{k+1} - \bbx_k \rangle dt\\
    &=
    F(\bbx_k) +\langle \gr F(\bbx_k), \bbx_{k+1} - \bbx_k \rangle
    + \int_0^1 \langle \gr F(\bbx(t)) - \gr F(\bbx_k), \bbx_{k+1} - \bbx_k \rangle \diff t\\
    & \overset{(a)}{\leq} 
    F(\bbx_k) + \langle \gr F(\bbx_k), \bbx_{k+1} - \bbx_k \rangle+\int_0^1 \left(L_0+L_1\Vert \gr F(\bbx_k) \Vert \right)\Vert \bbx_{k+1} - \bbx_k \Vert^2  t\, \diff t\\
    & = F(\bbx_k) +\langle \gr F(\bbx_k), \bbx_{k+1} - \bbx_k \rangle+
    \frac{1}{2} \left(L_0+L_1\Vert \gr F(\bbx_k) \Vert \right)\Vert \bbx_{k+1} - \bbx_k \Vert^2,
\end{align}
where in $(a)$ we used Definition \ref{def: L0-L1} since $\Vert \bbx(t) - \bbx_k \Vert \leq \Vert \bbx_{k+1} - \bbx_k \Vert = \eta_k \Vert \bbv_k \Vert  \leq \epsilon/L_0 \leq 1/(2L_1) \leq 1/L_1$. We can continue bounding $F(\bbx_{k+1})$ by replacing $\bbx_{k+1} - \bbx_k = - \eta_k \bbv_k$ in \eqref{eq: descent lemma 1} as follows
\begin{align}
    F(\bbx_{k+1})
    &\leq
    F(\bbx_k) - \eta_k \langle \gr F(\bbx_k), \bbv_k \rangle + \frac{1}{2} L_0 \eta_k^2 \Vert \bbv_k \Vert^2 + 
    \frac{1}{2} L_1 \eta_k^2 \Vert \bbv_k \Vert^2 \Vert \gr F(\bbx_k) \Vert \\
    &\leq
    F(\bbx_k) - \eta_k \langle \gr F(\bbx_k), \bbv_k \rangle + \frac{1}{2} L_0 \eta_k^2 \Vert \bbv_k \Vert^2 + \frac{1}{4} L_1 \eta_k^2 \Vert \bbv_k \Vert^3
    +
    \frac{1}{4} L_1 \eta_k^2 \Vert \bbv_k \Vert \cdot \norm{\bbv_k - \gr F(\bbx_k)}^2 \\
    &\leq
    F(\bbx_k) - 
    \frac{1}{2} \eta_k \left( 1 - L_0 \eta_k \right) \norm{\bbv_k}^2 + 
    \frac{1}{4} L_1 \eta_k^2 \Vert \bbv_k \Vert^3
    +
    \frac{1}{2} \eta_k \norm{\bbv_k - \gr F(\bbx_k)}^2 \\
    &\quad +
    \frac{1}{4} L_1 \eta_k^2 \Vert \bbv_k \Vert \cdot \norm{\bbv_k - \gr F(\bbx_k)}^2 \\
    &\overset{(b)}{\leq} 
    F(\bbx_k) - 
    \frac{1}{4} \eta_k \norm{\bbv_k}^2 + 
    \frac{1}{8} \eta_k \Vert \bbv_k \Vert^2 
    +
    \frac{1}{2} \eta_k \norm{\bbv_k - \gr F(\bbx_k)}^2
    +
    \frac{1}{8} \eta_k \norm{\bbv_k - \gr F(\bbx_k)}^2 \\
    &=
    F(\bbx_k) - 
    \frac{1}{8} \eta_k \norm{\bbv_k}^2
    +
    \frac{5}{8} \eta_k \norm{\bbv_k - \gr F(\bbx_k)}^2.
\end{align}
In deriving $(b)$ above, we particularly used the conditions $L_0 \eta_k \leq 1/2$ and $\eta_k \Vert \bbv_k \Vert \leq \epsilon/L_0 \leq 1/(2L_1)$ on the step-size.

\subsection{Proof of Lemma \ref{lemma: epsilon2 bound}} \label{sec: proof epsilon2 bound lemma}

Consider iterate $k=0,1,\cdots$ and let us denote by $k_0 \leq k$ the most recent iterate to $k$ for which $q$ divides $k_0$, that is, $k_0 = \lfloor k/q \rfloor \cdot q$. This implies that {\Spider} updates $\bbv_{k_0} = \gr f(\bbx_{k_0}; \ccalS_1)$ (Algorithm \ref{alg:Spider}). Therefore, for $k=k_0$, we have that
\begin{align}
    \E_{k_0} \Big[ \norm{\bbv_{k_0} - \gr F(\bbx_{k_0})}^2 \Big]
    =
    \E_{k_0} \Big[ \norm{\gr f(\bbx_{k_0}; \ccalS_1) - \gr F(\bbx_{k_0})}^2 \Big]
    \leq
    \frac{\sigma^2}{S_1}
    =
    \frac{\epsilon^2}{4}.
\end{align}
For any $k_0 \leq k < k_0 + q$, we have
\begin{align} \label{eq: epsilon2 bound 1}
    \E \Big[ \norm{\bbv_{k+1} - \gr F(\bbx_{k+1})}^2 | \, \ccalF_{k+1} \Big]
    &=
    \E \Big[ \norm{\gr f(\bbx_{k+1}; \ccalS_2) - \gr f(\bbx_{k}; \ccalS_2) + \bbv_k - \gr F(\bbx_{k+1})}^2 | \, \ccalF_{k+1} \Big]\\
    &=
    \E \Big[ \norm{\gr f(\bbx_{k+1}; \ccalS_2) - \gr f(\bbx_{k}; \ccalS_2) + \gr F(\bbx_{k}) - \gr F(\bbx_{k+1})}^2 | \, \ccalF_{k+1} \Big]\\
    &\quad+
    \norm{\bbv_k - \gr F(\bbx_{k})}^2.
\end{align}


Noting that the mini-batch $\ccalS_2$ is of size $|\ccalS_2| = S_2$, the first term in RHS of above can be bounded as follows
\begin{align} \label{eq: epsilon2 bound 2}
    &\quad \E \Big[ \norm{\gr f(\bbx_{k+1}; \ccalS_2) - \gr f(\bbx_{k}; \ccalS_2) + \gr F(\bbx_{k}) - \gr F(\bbx_{k+1})}^2 | \, \ccalF_{k+1} \Big] \\
    &\leq
    \frac{1}{S_2} \E \Big[ \norm{\gr f(\bbx_{k+1}; \xi) - \gr f(\bbx_{k}; \xi)}^2 | \, \ccalF_{k+1}\Big]\\
    &\leq
    \frac{1}{S_2} \left( L_0 + L_1 \norm{\gr F(\bbx_{k})} \right)^2 \norm{\bbx_{k+1} - \bbx_{k}}^2\\
    &\leq
    \frac{2}{S_2} L_0 \eta_k^2 \norm{\bbv_k}^2
    +
    \frac{2}{S_2} L_1^2 \eta_k^2 \norm{\bbv_k}^2 \cdot \norm{\gr F(\bbx_{k})}^2 \\
    &\leq
    \frac{2}{S_2} L_0^2 \eta_k^2 \norm{\bbv_k}^2 
    +
    \frac{4}{S_2} L_1^2 \eta_k^2 \norm{\bbv_k}^4
    +
    \frac{4}{S_2} L_1^2 \eta_k^2 \norm{\bbv_k}^2 \cdot \norm{\bbv_k - \gr F(\bbx_{k})}^2 \\
    &\leq
    \frac{6}{S_2} \epsilon^2
    +
    \frac{4}{S_2} \left( \frac{L_1}{L_0} \right)^2 \epsilon^2 \norm{\bbv_k - \gr F(\bbx_{k})}^2.
\end{align}
In deriving the last inequality above, we used the facts that $L_0 \eta_k \Vert \bbv_k \Vert \leq \epsilon$ and $L_1 \eta_k \Vert \bbv_k \Vert^2 \leq \epsilon$. Putting \eqref{eq: epsilon2 bound 1} and \eqref{eq: epsilon2 bound 2} together yields that
\begin{align}
    \E \Big[ \norm{\bbv_{k+1} - \gr F(\bbx_{k+1})}^2 | \, \ccalF_{k+1} \Big]
    &\leq
    \left( 1 + \frac{4}{S_2} \left( \frac{L_1}{L_0} \right)^2 \epsilon^2 \right) \norm{\bbv_k - \gr F(\bbx_{k})}^2
    +
    \frac{6}{S_2} \epsilon^2.
\end{align}
Let us take expectation from both sides of the above inequality w.r.t all the sources of randomness contained in $\{\bbx_{k_0+1:k+1},\bbv_{k_0:k}\}$ conditioned on $\ccalF_{k_0}$. We also denote $e_k \coloneqq \E [ \Vert \bbv_k - \gr F(\bbx_{k}) \Vert^2 | \, \ccalF_{k_0}]=\E_{k_0} [ \Vert \bbv_k - \gr F(\bbx_{k}) \Vert^2]$. Therefore, we have shown that the non-negative sequence $\{e_k\}$ satisfies the following for $k_0 \leq k < k_0 + q$
\begin{align}
    e_{k+1}
    \leq
    a e_k + b
\end{align}
where
\begin{align}
    a
    =
    1 + \frac{4}{S_2} \left( \frac{L_1}{L_0} \right)^2 \epsilon^2,
    \quad
    b = \frac{6}{S_2} \epsilon^2,
    \quad \text{and} \quad
    e_{k_0} \leq \frac{\epsilon^2}{4}.
\end{align}
This implies that for $k_0 \leq k < k_0 + q$, we have
\begin{align}
    e_k
    \leq
    a^{k-k_0} e_{k_0}  + b \sum_{i=0}^{k-k_0-1} a^i
    \leq
    a^{q} e_{k_0}  + b \sum_{i=0}^{q-1} a^i
    \leq
    a^q e_{k_0} + b q a^{q}.
\end{align}
Next, we plug in the specified choices of $q$ and $S_2$ in the above inequality as stated in the following,
\begin{align}
    q = 2 \frac{L_0}{L_1} \frac{\sigma}{\epsilon},
    \quad
    S_2 = 48 \frac{L_0}{L_1} \frac{\sigma}{\epsilon}.
\end{align}
We first bound $a^q$ as follows,
\begin{align}
    a^q
    =
    \left( 1 + \frac{4}{S_2} \left( \frac{L_1}{L_0} \right)^2 \epsilon^2 \right)^q
    \leq
    \left( 1 + \frac{1}{\sigma}\left( \frac{L_1}{L_0} \frac{\epsilon}{2}\right)^3 \right)^q
    =
    \left( 1 + \frac{\tilde{\epsilon}^3}{\sigma} \right)^{\sigma/\tilde{\epsilon}}
    \leq
    \left( 1 + \frac{1}{16}\frac{\tilde{\epsilon}}{\sigma} \right)^{\sigma/\tilde{\epsilon}}
    \leq
    2,
\end{align}
where we denote $\tilde{\epsilon} = \frac{L_1}{L_0} \frac{\epsilon}{2} \leq \frac{1}{4}$ and used the fact that $(1 + x/16)^{1/x} \leq 2$ for any $x > 0$. Moreover, the other term $b q a^q$ can be bounded as follows,
\begin{align}
    b q a^{q}
    \leq
    2 b q
    =
    2 \cdot \frac{6}{S_2} \epsilon^2 \cdot q
    =
    2 \cdot \frac{1}{8} \frac{L_1}{L_0} \frac{\epsilon}{\sigma} \cdot \epsilon^2  \cdot 2 \frac{L_0}{L_1} \frac{\sigma}{\epsilon}
    =
    \frac{\epsilon^2}{2}.
\end{align}
Putting all together, we have shown that
\begin{align}
    \E_{k_0} \Big[ \norm{ \bbv_k - \gr F(\bbx_{k}) }^2 \Big]
    =
    e_k
    \leq
    a^q e_{k_0} + b q a^{q}
    \leq
    2 \frac{\epsilon^2}{4}
    +
    \frac{\epsilon^2}{2}
    =
    \epsilon^2,
\end{align}
which concludes the proof.

\subsection{Proof of Lemma \ref{lemma: epsilon2 bound finite-sum}} \label{sec: proof lemma epsilon2 bound finite-sum}

The proof follows the same steps as in Lemma \ref{lemma: epsilon2 bound}. Starting with $k = k_0$, {\Spider} computes the full-batch gradient. Therefore, $\bbv_{k_0} = \gr F(\bbx_{k_0})$ and
\begin{align}
    \E_{k_0} \Big[ \norm{v_{k_0} - \gr F(\bbx_{k_0})}^2 \Big]
    =
    0.
\end{align}
Next, for any $k_0 \leq k \leq k_0 + q$, we can employ our arguments in the proof of Lemma \ref{lemma: epsilon2 bound} (See Section \ref{sec: proof epsilon2 bound lemma}). Particularly, from \eqref{eq: epsilon2 bound 1} and \eqref{eq: epsilon2 bound 2}, we have that
\begin{align}
    \E \Big[ \norm{\bbv_{k+1} - \gr F(\bbx_{k+1})}^2 | \, \ccalF_{k+1} \Big]
    &\leq
    \left( 1 + \frac{4}{S_2} \left( \frac{L_1}{L_0} \right)^2 \epsilon^2 \right) \norm{\bbv_k - \gr F(\bbx_{k})}^2
    +
    \frac{6}{S_2} \epsilon^2.
\end{align}
Similar to the proof of Lemma \ref{lemma: epsilon2 bound}, take expectations on both sides of the above inequality w.r.t all the sources of randomness contained in $\{\bbx_{k_0+1:k+1},\bbv_{k_0:k}\}$ conditioned on $\ccalF_{k_0}$ and denote $e_k \coloneqq \E [ \Vert \bbv_k - \gr F(\bbx_{k}) \Vert^2 | \, \ccalF_{k_0}]$. Therefore, we have shown that the non-negative sequence $\{e_k\}$ satisfies the following for $k_0 \leq k < k_0 + q$
\begin{align}
    e_{k+1}
    \leq
    a e_k + b
\end{align}
where
\begin{align}
    a
    =
    1 + \frac{4}{S_2} \left( \frac{L_1}{L_0} \right)^2 \epsilon^2,
    \quad
    b = \frac{6}{S_2} \epsilon^2,
    \quad \text{and} \quad
    e_{k_0} =0.
\end{align}
This yields that
\begin{align}
    e_k
    \leq
    b \sum_{i=0}^{k-k_0-1} a^i
    \leq
    b \sum_{i=0}^{q-1} a^i
    \leq
    b q a^{q}.
\end{align}

First, we can bound $a^q$ for our choices of parameters $S_2 = 12 \sqrt{n}$ and $q = \sqrt{n}$ as follows
\begin{align}
    a^q
    =
    \left( 1 + \frac{4}{S_2} \left( \frac{L_1}{L_0} \right)^2 \epsilon^2 \right)^q
    =
    \left( 1 + \frac{1}{3q}\left( \frac{L_1}{L_0} \epsilon\right)^2 \right)^q
    =
    \left( 1 + \frac{1}{12 \sqrt{n}} \right)^{\sqrt{n}}
    \leq
    2,
\end{align}
where we used the fact that $\frac{L_1}{L_0} \epsilon \leq \frac{1}{2}$. Finally, we have for every $k_0 \leq k < k_0 + q$  that
\begin{align}
    \E_{k_0} \Big[ \norm{ \bbv_k - \gr F(\bbx_{k}) }^2 \Big]
    =
    e_k
    \leq
    b q a^{q}
    \leq
    \frac{6}{S_2} \epsilon^2 \cdot q \cdot 2
    =
    \epsilon^2,
\end{align}
which concludes the proof.

\end{document}